\documentclass[11pt,natbib]{article}

\usepackage{geometry}
\usepackage{amsmath}
\usepackage{amssymb}
\usepackage{dsfont}
\usepackage{url}
\usepackage{subcaption}
\usepackage{titling}
\usepackage{mathrsfs}
\usepackage{float} % float force 'here'=H
\usepackage{graphicx}
\allowdisplaybreaks
\usepackage{bibunits}
\usepackage{csquotes}
\usepackage{minitoc}
\usepackage{minitoc}
\usepackage{tocloft}
\usepackage{enumitem}
\usepackage{algorithm, algorithmicx, algpseudocode}
\cftsetindents{sec}{0em}{2.75em}
\cftsetindents{subsec}{2.75em}{3.5em}

\usepackage{amsthm}

\makeatletter
\newtheorem*{rep@theorem}{\rep@title}
\newcommand{\newreptheorem}[2]{%
\newenvironment{rep#1}[1]{%
 \def\rep@title{#2 \ref{##1}}%
 \begin{rep@theorem}}%
 {\end{rep@theorem}}}
\makeatother

\newtheorem{theorem}{Theorem}
\newreptheorem{theorem}{Theorem}
\newtheorem{lemma}{Lemma}
\newreptheorem{lemma}{Lemma}
\newtheorem*{theorem*}{Theorem}

\newtheorem{corollary}{Corollary}
\newtheorem{remark}{Remark}
\newtheorem{assumption}{Assumption}
\newtheorem{case}{Case}

% Bibliography
\usepackage{natbib}

%% Draw DAGs
\usepackage{tikz}
\usetikzlibrary{shapes,decorations,arrows,calc,arrows.meta,fit,positioning}
\tikzset{
    -Latex,auto,node distance =1 cm and 1 cm,semithick,
    state/.style ={ellipse, draw, minimum width = 0.7 cm},
    point/.style = {circle, draw, inner sep=0.04cm,fill,node contents={}},
    bidirected/.style={Latex-Latex,dashed},
    el/.style = {inner sep=2pt, align=left, sloped}
}

\newtheorem*{assumption*}{\assumptionnumber}
\providecommand{\assumptionnumber}{}
\makeatletter

\makeatother

\usepackage{xr}
\usepackage[hypertexnames=false]{hyperref}
\hypersetup{
           breaklinks=true,   % splits links across lines
           colorlinks=true,   % displays links as colored text instead of blocks
           pdfusetitle=true,  % \title and \author values into pdf metadata
                              % etc.
        }

\begin{document}

\doparttoc
\faketableofcontents
\begin{bibunit}[custom]

\title{Holdout sets for safe predictive model updating}

\author{Sami Haidar-Wehbe$^{1,\perp}$, Samuel R. Emerson$^{1,\perp}$, Louis J. M. Aslett$^{1,2}$, James Liley$^{1}$ \\[4pt] 
\textit{1: Department of Mathematical Sciences, 
Durham University,
UK} \\
\textit{2: Alan Turing Institute, London, UK} \\
\textit{$\perp$: Equal contribution} \\[8pt]
Correspondence to \{louis.aslett, james.liley\}@durham.ac.uk}

\maketitle

\begin{abstract}
% < 200 words
{Predictive risk scores for adverse outcomes are increasingly crucial in guiding health interventions}. {Such scores may need to be periodically updated due to change in the distributions they model.} However, directly updating risk scores used to guide intervention can lead to biased risk estimates. 
To address this, we propose updating using a `holdout set' --- a subset of the population that does not receive interventions guided by the risk score. Balancing the holdout set size is essential to ensure good performance of the updated risk score whilst minimising the number of held out samples.
{We prove that this approach reduces adverse outcome frequency to an asymptotically optimal level and argue that often there is no competitive alternative.}
We describe conditions under which an optimal holdout size (OHS) can be readily identified, and introduce parametric and semi-parametric algorithms for OHS estimation.
{
We apply our methods to the ASPRE risk score for pre-eclampsia to recommend a plan for updating it in the presence of change in the underlying data distribution. We show that, in order to minimise the number of pre-eclampsia cases over time, this is best achieved using a holdout set of around 10,000 individuals.
}

\end{abstract}

\section{Introduction}
\label{sec:introduction}

Risk scores estimate the probability of an event $Y$ given predictors $X$. Their use has become routine in medical practice~\citep{topol2019}, where $Y$ is typically a binary random variable representing an adverse event incidence and $X$ various clinical observations. Once calculated, risk scores may be used to guide interventions, perhaps modifying $X$, with the aim of decreasing the probability of an adverse event. {
The ASPRE score~\citep{akolekar13}, with which we will be working, evaluates risk of pre-eclampsia (\textsc{pre}), a hypertensive complication of pregnancy, using predictors derived from ultrasound scans in early pregnancy, and can be used to prioritise prescription of aspirin (and other interventions) to at-risk pregnancies}. %As another example, the QRISK3 score predicts thromboembolic risk given predictors including age and hypertension~\citep{hippisley17}, and a high score may prompt prescription of antihypertensives or more frequent follow-up.

Risk scores are typically developed by regressing observations of $Y$ on $X$. {Should the conditional distribution $\left(Y|X\right)$ subsequently change,  or `drift', then risk estimates may become biased~\citep{tsymbal04}}. This can happen naturally over time, meaning that risk scores typically need to be updated periodically to maintain accuracy. {In the complex settings in which such scores are typically used, interventions in response to risk scores may be multifaceted, and hence impractical to record.}

Updating of the risk score will involve obtaining new observations of $(X,Y)$, but crucially the distribution of $(X,Y)$ may also have changed due to the effect of the risk score itself: that is, high predicted risk of an adverse event may trigger intervention to reduce that risk. The effect of such interventions may be impractical to infer or measure, and indeed the fact that intervention took place may be unrecorded. {In the example above, this means individuals prescribed aspirin in response to higher ASPRE scores should have lower \textsc{PRE} risk than they would have if ASPRE was not used}. Should a new risk score be fitted to observed $(X,Y)$, the effect of hypertension on risk would be underestimated. This bias is worsened by heavier intervention resulting in risk scores becoming `victims of their own success’~\citep{lenert19}. This framework of directly updating a risk score on an `intervened' population has been termed `repeated risk minimisation'~\citep{perdomo20} in the context when such bias is accounted for, or `na\"{i}ve updating'~\citep{liley21updating} when it is not.

In \citet{liley21updating} we briefly noted that this bias could be avoided by splitting the population on which the score can be used into an `intervention' set and a `holdout' set, with an updated model trained on the latter. In this work, we formally develop this proposal for practical use in real-world predictive risk score updating, prove its suitability, {and apply it to the ASPRE score}. In particular, we address a vital tension in the choice of an optimal holdout size (OHS) for the holdout set: for the risk score to be accurate, the holdout set should not be too small; but any samples in the holdout set will not benefit from risk scores, so nor should it be too large. The holdout set must be actively generated; it is not sufficient to simply update a model using samples who received a risk score but were untreated.

We begin by introducing a motivating example in Section~\ref{sec:motivating_example}, reviewing relevant literature in Section~\ref{sec:litrev}.
Our first question is whether a hold-out set is worth the cost, as opposed to simply continuing to use the existing score with degraded performance, or updating na\"{i}vely.
We set out the problem and notation precisely in Section~\ref{sec:theory}.
In Section~\ref{sec:holdout_set_motivation}, we then develop theory which proves that under certain simplifying assumptions, as long as drift and intervention effects occur, the cost of holding out samples is generally justified and that the holdout set approach outperforms common alternatives.
We then turn attention to the problem of selecting holdout set size in Section \ref{sec:cost_specification}, constructing an optimisation problem to find an OHS.
Therein we also set out why several apparent alternatives are not competitive.
In Section~\ref{sec:ohs_estimation}, we describe two algorithms for OHS estimation, using a parametric model, and using Bayesian emulation. 
In Section~\ref{sec:simulations}, we support our findings with numerical demonstrations and resolve our motivating example by applying our methods to a risk score for pre-eclampsia (\textsc{pre}) to estimate an OHS for updating it.

\subsection{Motivating example}
\label{sec:motivating_example}

{We consider the ASPRE score~\citep{akolekar13} for evaluating risk of pre-eclampsia (\textsc{pre}), a hypertensive complication of pregnancy, on the basis of predictors derived from ultrasound scans in early pregnancy (we will not differentiate early- and late-stage \textsc{pre})}. Although treatable, \textsc{pre} confers a serious risk to both the fetus and the mother. The risk of \textsc{pre} is lowered by treatment with aspirin through the second and third trimesters~\citep{rolnik17}, but aspirin therapy itself confers a slight risk, contraindicating universal treatment, and suggesting prescription of aspirin only if the risk of \textsc{pre} is sufficiently high or other indications are present~\citep{acog16}. The ASPRE score was developed to aid clinicians in estimating \textsc{pre} risk and has been shown to be useful in prioritising patients for aspirin therapy~\citep{rolnik17b}. 
%.
{Our aim is to develop a plan for updating the ASPRE predictive risk score, in such a way as to minimise the expected number of \textsc{pre} cases per unit time. To update the ASPRE model, we presume covariate and outcome data is available on a set of pregnancies in a given time period.
}

%{
Due to changing population demographics, we anticipate that the influence of risk factors is likely to change over time, and the ASPRE score will predict individual \textsc{pre} risk less accurately over time {(as compared to an optimal predictor), reducing the usefulness of the score in identifying high-risk pregnancies and the capacity of healthcare practitioners to anticipate and avoid~\textsc{pre} cases. }
%}

{
We illustrate why updating the ASPRE model is difficult using a somewhat exaggerated effect. Let us informally denote \textsc{PRE} incidence as $Y$, and ASPRE score covariates as $X$, and an indicator $A$ for whether a patient is treated with aspirin. Suppose that we consider patients with particular set of covariates $X=x$, and that under normal healthcare (in the absence of an ASPRE score) such pregnancies have a~\textsc{pre} risk $P(Y|X=x)=60\%$ (where some individuals are treated pre-emptively with aspirin).  Suppose that if we were to alter clinical care to treat all (or almost all) of these patients with aspirin, the~\textsc{pre} risk would drop to 2\% (approximately the baseline rate). We write this as a~\emph{counterfactual} $P_{A\gets 1}(Y|X=x)=2\%$: that is, we force treatment $A=1$.
}

{
Should we `naively' re-fit the ASPRE score, we would learn that individuals with covariates $x$ had a risk of approximately $2\%$. Such a risk score woud not generally warrant healthcare practitioners to focus additional attention on such patients, and may even serve as a false reassurance. Patients with covariates $x$ now face a risk of (say) 70\% of \textsc{pre}, which is worse than having no risk score at all. 
}
%

%{
We propose avoiding this problem by maintaining a holdout set. For patients in this set, no ASPRE score would be calculated at first scan, and treatment would be according to best practice in the absence of a risk score. An updated ASPRE score can then be fitted to data from these patients. Patients in this holdout set go without the benefit of the ASPRE score, leading to a less accurate allocation of prophylactic treatment (aspirin) and consequently a higher risk of \textsc{pre}~\citep{rolnik17b}. However, an inappropriately small holdout set would lead to an inaccurate updated model, reducing the benefit of future use of the score. %}
{We make a simplifying assumption that ASPRE is only used for decisions on aspirin therapy, although in practice it could be used more generally}.

\subsection{Review of related work}
\label{sec:litrev}

{In applied statistics, a vast amount of effort has been expended designing predictive scores in healthcare.} Widespread collection of electronic health records has spurred development of new diagnostic and prognostic risk scores~\citep{cook15, liley21medRxiv}, which can allow detection of patterns too complex for humans to discover. Examples of such scores in widespread use include: EuroSCORE II, which predicts mortality risk at hospital discharge following cardiac surgery~\citep{nashef12}; and the STS risk score from the United States, predicting risk of postoperative mortality~\citep{shahian18}. Many such scores have demonstrable efficacy in clinical trials and in-vivo~\citep{chalmers13, wallace14,hippisley17}.

An important general concern with these scores is continued accuracy of predictions. A 2011 review found that risk scores for hospital readmission perform poorly and highlighted issues with design of their trials \citep{kansagara11}. More recently, an analysis of a sepsis response score used during the COVID pandemic found increasing risk overestimation over time~\citep{finlayson20}. Various efforts have been made to standardise procedures in risk score estimation to address these issues~\citep{collins15}. {A critical practical aspect of development of such scores is to determine how they must be updated: indeed, we claim that whenever a risk score is deployed for use, a plan should be made for its continued development, and it is to this general applied area that our work contributes}.

Several algorithms have been developed to update models with new data in the presence of drift~\citep{lu18}, which ideally leads to the best possible model performance after every update. However, adaptation of model updating to avoid na\"{i}ve updating-induced bias requires explicit causal reasoning~\citep{sperrin19} and often further data collection~\citep{liley21stacked}. In a seminal paper, \cite{perdomo20} analyse asymptotic behaviour of repeated na\"{i}ve updating, giving necessary and sufficient conditions under which successive predictions end up converging to a stable setting where they essentially predict their own effect. Other approaches to optimise a general loss function by modulating parameters of the risk score are developed in~\cite{mendler20,drusvyatskiy20,li21} and~\cite{izzo21}. These approaches seek to minimise a `performative' loss to the population in the presence of an arbitrary risk score, whilst our approach seeks to target risk scores which reliably estimate the same quantity, namely $P(Y \mid X)$ in a `native' system prior to risk score deployment. Our approach is well-suited to settings where the performative loss is essentially intractable, requiring cost estimates of risk scores only in limited settings.

We note that `stability' is not necessarily desirable in terms of the distribution of interventions: in the QRISK3 setting, if an individual is at untreated risk of 50\% and treated risk of 10\%, with treatment distributed proportionally to assessed risk, a `stable' risk score would assess risk as e.g. 30\%, prompting a milder intervention than actual untreated risk would suggest, after which true risk remains at 30\%, regardless of treatment cost.

We found no applied or theoretical literature directly addressing the focus in this paper: determining how large a holdout set should be. Similar problems do arise in clinical trial design: \cite{stallard17} estimate the optimal size of clinical trial groups for a rare disease in which individuals not in the trial stand to gain more than those in it, using a Bayesian decision-theoretic approach accounting for benefit to future patients in the population. Our problem is related to computation of a minimal training size for a clinical prediction model~\cite{riley20}, but rather than being limited by financial cost of obtaining training samples, we are limited by a cost to all individuals in the holdout set, allowing specification of an explicit tradeoff for larger sample sizes.

{In previous work~\citep{liley21updating} we proposed, in addition to a holdout set, that the problems of updating a risk score in the presence of interventions could broadly be managed by complete causal modelling of the intervention or by explicitly specifying what interventions should be made. In applied work~\citep{liley21medRxiv} we proposed a practical alternative in which we updated a risk score as a maximum of the existing risk score and a refitted score. We are concerned in this work with the setting in which none of these options are usable, which we consider to be a common setting. We detail in Supplement~\ref{supp_sec:natural_holdout} why causal modelling is of limited use when we cannot record an intervention or when we deterministically plan an intervention, and in Supplement~\ref{supp_sec:best_of_two} why our approach of using the maximum of two scores is not generalisable. }

OHS estimation requires quantification of expected material costs when using risk scores trained to holdout data sets of various sizes: that is, the cost of reduced accuracy from limiting the OHS, as well as the cost due to individuals in the holdout set not benefiting from a risk score. Such costs depend on the error in risk predictions. The relation of predictive error to training set size is well studied and known as the `learning curve', which can sometimes be accurately parameterised \citep{amari93}. A recent review paper suggests a power-law is accurate for simple models~\citep{viering21}. 

\subsection{Legal and ethical considerations}
\label{sec:legal_ethical}

{
Use of a holdout set appears ethically tenuous. However, we argue that in many circumstances it is unethical \emph{not} to use a holdout set, in that other options lead to worse outcomes. 
Essentially, use of a holdout set limits costs incurred from inaccuracies in prediction to individuals in that set, whereas all alternatives lead to risk score inaccuracy across the entire population.
We formalise these arguments in Section~\ref{sec:holdout_set_motivation}, in particular showing that the updating paradox described above is inevitable for risk scores on complex systems intended to guide interventions.
}

{Contributions in this area are important due to rapidly evolving legislation. Currently, the European Union treat each update of a risk model as a separate risk score requiring re-approval, but in the United States a proactive approach is taken with a `total-life cycle' paradigm which allows practitioners to update risk models as necessary without requesting approval~\citep{fda19}. This approach could allow updating-induced biases to go undetected, and highlights the need for safe updating methods in risk score deployment.  The use of holdout sets as examined in this work offers one potential solution.
}

\section{Problem description} 
\label{sec:theory}

\renewcommand\theassumption{A\arabic{assumption}}

\subsection{General setup}
\label{sec:general_setup}

We presume a random process $X_t$, $t \in \mathds{R}^+$, representing the covariates of a single sample at time $t$. We let $X_t$ have distribution $\mu_t$ where $\mu_t$ has constant support $\mathscr{X}$. %We will assume $\mu_t$ varies continuously, in a manner to be specified.

To demonstrate why we opt to use a holdout set approach, we define two functions dependent on $t$. We define $f_t(x)$ % = \mathbb{P}(Y_t=1 \,|\, X_t = x)$ 
as the probability of an event if no risk score is in place, and $g_t(x;\rho)$ % = \mathbb{P}(Y_t=1 \,|\, X_t = x)$ 
as the probability of an event when a risk score $\rho$ is used to guide decisions. 
{We allow $\rho$ to be an arbitrary risk score (that is, $\rho \in R=\{r:\mathscr{X} \to [0,1]\}$ and $g:(\mathscr{X} \times R) \to [0,1]$) but will generally take it to be a risk score fitted to samples encountered prior to or at time $t$.} 
We will not explicitly consider the occurence of adverse events as random variables (for the moment); rather we will simply consider $f_t$ and $g_t$ as functions (noting that $g_t(x,\rho)$ depends on the \emph{function} $\rho$ rather than only the value $\rho(x)$). We will generally omit the argument $\rho$ from $g_t$, as it will usually be clear. We will use $f,g$ to informally denote the sets of functions $\{f_t,t \in \mathbb{R}^+\}$, $\{g_t,t \in \mathbb{R}^+\}$ respectively.

We wish to estimate $f_t$ rather than $g_t$, since it gives a risk of the event in question under standard practice: that is, \emph{without} already using a risk score. An agent may opt to intervene in addition to standard practice if $f_t(x)$ is high.

We thus aim to generate risk scores $\rho_{\cdot}:\mathscr{X} \to [0,1]$ which estimate $f_t$ for $t$ on some interval. Risk scores will be fitted to samples observed over a time period $(e-s,e]$ for $s\leq e$. 
Denote by
\begin{equation}
\rho^{n,e,s}_{h}(\cdot) \label{eq:rho_n_def}
\end{equation}
a risk score fitted to $n$ samples of $(X,Y)$ with $t\sim U(e-s,e)$, $X\sim \mu_t$, and $Y\sim\textrm{Bern}(h(X))$ where $U$ is a uniform distribution, $\textrm{Bern}(\cdot)$ a Bernoulli distribution, and $h$ is {either $f_t(\cdot)$ or $g_t(\cdot;\rho)$ for some $\rho$}. We will measure deviation of a risk score $\rho$ from a function $f$ using (essentially) mean-square generalisation error:
\begin{align}
\xi_t^2(\rho,f) &= \mathbb{E}_{X_t\sim \mu_t} \left\{\left(\rho(X_t)-f(X_t)\right)^2\right\} \, .%\int (\rho(x)-f(x))^2 d\mu_t \nonumber
\end{align}
We will consider four options for how best to decide on a series of risk scores to be used over a time period $[0,T]$. We will use the term `epoch' to mean periods of time $(0,\delta]$, $(\delta,2\delta]$, $\dots$, $(e,e+\delta]$ with $e \in \mathbb{N}\delta$ during which a particular risk score is used and a new risk score is (potentially) fitted. From Section~\ref{sec:cost_specification} onwards, we will generally take $\delta=1$ which we will be able to do without loss of generality.
\begin{itemize}[leftmargin=1cm]
\item[\textbf{Holdout:}] Amongst samples encountered while $t\in (e-s,e]$,  $e \in \mathbb{N}\delta$, we withhold $n_\star$ randomly chosen samples from attaining risk scores, and fit a risk score to them. We thus obtain a risk score $\rho_{f}^{n_\star,e,s}$ for use on the epoch $(e,e+\delta]$. We call this approach the \emph{holdout} strategy.
\item[\textbf{No-update:}] We use a risk score $\rho_0=\rho^{n,0,0}_{f_0}$, fitted to a number $n$ of samples at time 0, and continue using this score throughout the period. We call this the \emph{no-update} strategy.
\item[\textbf{Na\"{i}ve update:}] Whenever $t\in (e-s,e]$,  $e \in \mathbb{N}\delta$, we fit a risk score to as many samples as possible (say $n'$) on whom a risk score is already used. We thus attain a risk score $\rho_{g_t}^{n',e,s}$ for use on the epoch $(e,e+\delta]$. We call this approach the \emph{na\"{i}ve update} strategy. %\footnote{Another option may be considered of fitting a new risk score to all samples encountered in $(e-\delta,e]$, or even all samples encountered in $(0,e]$. These have similar problems to those described for this option, due to identical arguments to those below; essentially that the function being estimated differs from $f_e$.}
\end{itemize}

We also consider the performance of an unspecified alternative which is `less than asymptotically perfect', in that for some period of time the risk score is (on average) a slightly biased estimator of $f_t$. 
\begin{itemize}[leftmargin=1cm]
\item[\textbf{Alternative:}] We consider an arbitrary alternative giving rise to a risk score $\rho_t$ at time $t$ for which for some value $T$:
 \begin{equation}
 \lim_{N_{\rho} \to \infty} \left(\mathbb{E}_{t \sim U[0,T], D}\left\{\xi_t^2(\rho_t,f_t)\right\}\right) = b > 0 \, , \label{eq:sublinear_condition}
 \end{equation}
where $D$ denotes any information used to fit risk scores $\rho_t$, $N_{\rho}$ is the number of samples used to fit $\rho$, and $b$ does not depend on any parameters of the strategy (e.g., update frequency $\delta$), though may depend on $T$\footnote{We will make assumptions under which the no-update strategy is of the alternative-strategy type. We differentiate them here to illustrate examples of such alternative strategies.}.
\end{itemize}
We will finally consider an `oracle' option. Rather than perfect information (e.g. perfect knowledge of $f_t$ for all $t$), our oracle can observe $f_t$ acting on a number of samples:
\begin{itemize}[leftmargin=1cm]
\item[\textbf{Oracle:}] We presume that whenever $t\in [e-s,e)$,  $e \in \mathbb{N}\delta$, we fit a risk score to $n$ samples of $(X,Y)$, for which $X \sim \mu_t$, $Y \sim \textrm{Bern}(f_t(X))$, and use the risk score $\rho_{f_t}^{n,e,s}$ on the epoch $(e,e+\delta]$. We call this approach the \emph{oracle} strategy.
\end{itemize}

Our strategy for `holdout' updating is illustrated with $\delta=1$ as a causal graph in Figure~\ref{fig:epoch}.  The ellipses $\{X_0\}$, $\{Y_0\}$ correspond to sets of observations of $(X,Y)$ with $X\sim\mu_0$ and $(Y|X) \sim \textrm{Bern}(f_0(x))$ in epoch 0, representing initial training data, to which a risk score $\rho_0$ is fitted, where $\rho_0 \approx f_0$.

We use the shorthand $\{X_e^i\}$ (`$i$' for intervention) to mean a set of samples from $X_t$ with $t \in (e,e+1]$, representing the set of samples on which the risk score is used, and $\{X_e^h\}$ (`$h$' for holdout) to mean sets of samples from $X_t$ with $t$ as close to $e+1$ as possible, representing the set of `holdout' samples on which the next risk score $\rho_e$ is fitted. We define corresponding sets $\{Y_e^i\}$ and $\{Y_e^h\}$ representing observations: for $x_t \in \{X_e^i\}$, we have the corresponding $y_t$ in  $Y_e^i$ distributed as $y_t \sim \textrm{Bern}(g_t(x_t,\rho_{\lfloor t \rfloor}))$ and for $x_t \in \{X_e^h\}$, we have the corresponding $y_t$ in $Y_e^h$ distributed as $y_t\sim \textrm{Bern}(f_t(x_t))$.

Under a `native' setting prior to deployment of a risk score,  $\{X_0\}$ and $\{Y_0\}$ have a single causal link, modelled by risk score $\rho_0$ (leftmost epoch of Figure~\ref{fig:epoch}). Once $\rho_0$ is in use in the intervention set in epoch 1 (ellipses $\{X_1^i\}$, $\{Y_1^i\}$), a second causal pathway through $\rho_0$ is established from $\{X_1^i\}$ to $\{Y_1^i\}$, but there remains only one causal pathway from $\{X_1^h\}$ to $\{Y_1^h\}$ in the holdout set (middle epoch of Figure~\ref{fig:epoch}). The quantities in the shaded area do not causally depend on any quantities outside the shaded area; we will consider them together in Section~\ref{sec:cost_specification}. The updating process can be continued rightwards ($\rho_1$, $\rho_2$, $\dots$).

\begin{figure}[h] % [!ht]
\centering
\includegraphics[width=0.75\textwidth,clip,trim=0cm 0cm 0cm 0cm]{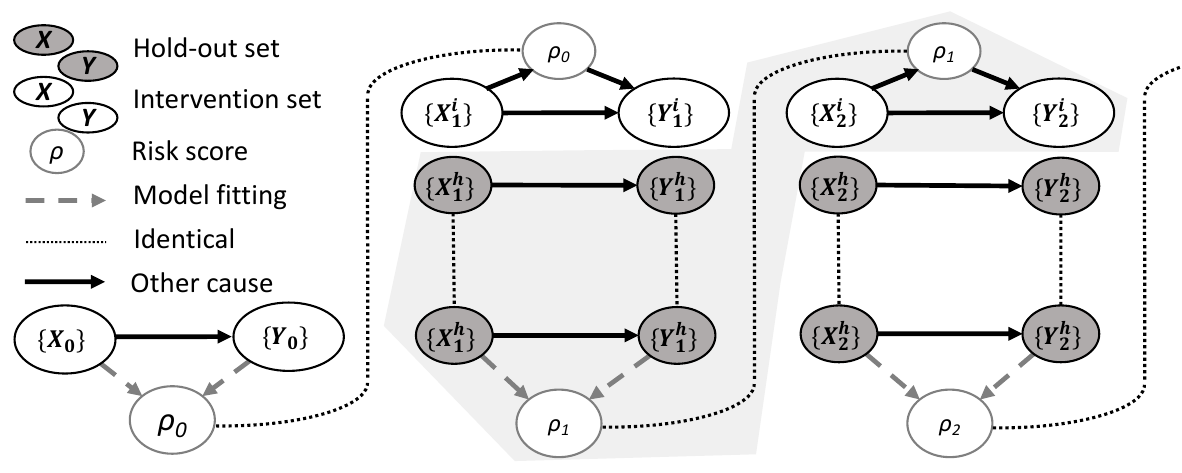}
\caption{Dynamics of a risk model, $\rho$, across three epochs under holdout set updating. Each column corresponds to an epoch, denoted by subscripts $0,1,2$, representing consecutive intervals of continuous time $(0,1], (1,2],$ and $(2,3]$ respectively. Ellipses containing $X$ or $Y$ correspond to covariates and outcomes respectively, with superscripts $i$ and $h$ denoting the partition into intervention and holdout sets respectively.}
\label{fig:epoch}
\end{figure}

% New
\section{Motivation for use of a holdout set}
\label{sec:holdout_set_motivation}

We argue that use of a holdout set approach is generally necessary for tolerable total population costs when updating a risk score. In this sense, we argue that it is an ethical imperative to consider use of a holdout set in this context.  We will now make and justify a series of assumptions (which will only be used directly in this section). We will discuss robustness to these assumptions in Section~\ref{sec:robustness_to_assumptions_a}.

The motivation for updating a risk score is generally that the true risk $f_t$ changes continuously with time $t$ in a random way. 
If $f_t$ did not change with $t$ there would be no need to update a well-fitted risk score. 
Lipschitz-continuous change in the distribution of covariates and the risk function underlies the assumption that, for a risk score to be useful {(regardless of updating strategy)}, it must remain reasonably accurate for a period of time after it is fitted.

\begin{assumption}[Non-negligible drift]
\label{asm:f0_difference}
{For some $T>0$ we have }%For any $u$ we have $\int_u^{u+T} \xi_t^2(f_t,f_u) dt = \Omega(T)$
%We have 
$\int_0^T \xi_t^2(f_t,f_0) dt>0$ % = \Omega(T)$
\end{assumption}

\begin{assumption}[Continuous drift in population distribution]
\label{asm:d_drift}
{$\mu_t$ is uniformly Lipschitz continuous in $t$ with respect to total variation distance.}
\end{assumption}

\begin{assumption}[Continuous drift in target function]
\label{asm:ft_lipschitz}
For all $x$, $f_t(x)$ is $\alpha_1$-Lipschitz continuous in $t$ (where $\alpha_1$ does not depend on $x$)
\end{assumption}

{We implicitly assume that drift in $f_t$ is unpredictable: if $f_t$ changed in a \emph{predictable} way, we could simply update our estimate of $f_t$ with time.}

We presume that adding more samples to a training set for a risk score improves its error at an $O(n^{-1})$ rate (typical of, for instance, linear models). In subsequent sections, we will consider instead an arbitrary `learning curve'.

\begin{assumption}[Expected prediction error]
\label{asm:risk_score_convergence}
For $h \in \{f, g\}$, denoting 
\begin{equation}
H(x)=\mathbb{E}_{t \sim U(e-s,e)} \{h_t(x)\} \nonumber
\end{equation}
we have 
\begin{equation}
\mathbb{E}\left\{\xi_u^2(\rho_h^{n,e,s},H)\right\}=O(n^{-1}) \nonumber
\end{equation}
for any time $u$, where the expectation is over the data used to fit $\rho_h^{n,e,s}$.
\end{assumption}

Since we assume prediction of an adverse event, we want to lower the chance of this event as much as possible. %We define a cost per sample in terms of the amount we can reduce the `raw' probability of the event $f_t$ (leading to $g_t$, the probability with the score deployed):
We define a cost per sample in terms of the reduction in the probability of an adverse event which can be achieved by intervention: that is, how much lower is $g_t$ (probability of an adverse event when using a risk score to guide decisions) than $f_t$ (probability of adverse event). For the moment, we are unconcerned with the scale of this cost, so we assume a unit coefficient for this difference. 
We shift this cost so that it is zero when we are doing `as well as possible'; that is, when $\rho$ is exactly $f$. 

\begin{assumption}[Costs]
\label{asm:costs}
For a sample encountered at time $t$, when a risk score $\rho$ is in use, we define the expected total cost, {$c_t(\rho)$}, as:
{
\begin{equation}
c_t(\rho)=\mathbb{E}_{X \sim \mu_t} \left\{ g_t(X,\rho) - g_t(X,f_t) \right\} \, ;\nonumber
\end{equation}
that is, the change in expected difference in the probability of the outcome when using an estimate $\rho_t$ of $f_t$ and with perfect knowledge of $f_t$. We make the assumption that:
\begin{equation}
k_1 = \mathbb{E}_{X \sim \mu_t} \left\{ f_t(X) - g_t(X,f_t) \right\} > 0 \, .
\end{equation}
}
\end{assumption}
{The value of $k_1$ can be considered as the cost per sample when no risk score is used (since the risk of outcome is $f_t(X)$ in this case), so the assertion that $k_1>0$ is tantamount to a `potentially useful' risk score: with perfect knowledge of $f_t$, it is possible to reduce the expected risk of the outcome below $f_t$ itself. }
%\end{equation}
%

We now make a key assumption in presuming that the risk score is `useful', in that our propensity to make $g_t$ smaller than $f_t$ depends directly on the accuracy of the score. Taken together, \ref{asm:costs} and \ref{asm:usefulness} constrain the relationship between $f_t, g_t,$ and $\rho$, helping to simplify the statement of our initial results, but we will show that we can substantially weaken this assumption in Section~\ref{sec:robustness_to_assumptions_a}, and discuss our reasons for making this type of assumption in Section~\ref{sec:practicalities}. 

\begin{assumption}[Usefulness of risk score]
\label{asm:usefulness}
If a risk score $\rho_t$ is in use at time $t$, we have  %\footnote{For the purposes of the results in this section, we can in fact weaken this assumption to $k_2^l\xi_t^2(\rho_t,f_t) \leq c_t \leq k_2^u \xi_t^2(\rho_t,f_t)$ for some $0< k_2^l \leq k_2^u$, but maintain the stronger assumption since we will use a similar assumption later. See proofs for details.}.
\begin{equation}
c_t(\rho_t) = k_2\,\xi_t^2(\rho_t,f_t) \nonumber
\end{equation}
for some constant $k_2>0$.
\end{assumption}

We note that it immediately follows that cost is minimised at $0$ if and only if $\rho_t=f_t$ holds $\mu_t$-almost everywhere{, and that we cannot find a `better' risk score than $f_t$ itself} (our cost may be considered an `excess cost' over a setting with essentially perfect knowledge of $f_t$), and we allow that a sufficiently inaccurate risk score may incur a higher cost than no risk score, which could potentially occur in real settings. We give an example of a class of functions $g_t$ satisfying Assumptions~\ref{asm:costs} and~\ref{asm:usefulness} in Supplement~\ref{supp_sec:relaxation_usefulness}. Finally, we make simplifying assumptions of independence of samples at different times and a constant population size.

\begin{assumption}[Population size and holdout number]
\label{asm:pop_size}
The times at which samples are observed occur uniformly randomly over time (that is, from a Poisson process) with a mean of $N$ samples per unit of time. When using the holdout strategy, we use a constant number $n_\star$ of samples in each epoch (if there are fewer than $n_\star$ samples available in $(e-s,e]$, we use no risk score for the subsequent epoch $(e,e+\delta]$, but this will be rare). {Samples of $X_t$ need not be independent, although in many cases they will be nearly so: for the ASPRE score, for instance, treatment decisions will generally be made only once per pregnancy.}
\end{assumption}

We define the `cost of a strategy' $C_{\textrm{strat}}[t_1,t_2]$ as the cost accrued {for all samples encountered} over time interval $[t_1,t_2]$ using a given strategy `strat' (which may be: $(h)$: holdout; $(0)$: no-update; $(n)$: na\"{i}ve update; $(a)$: alternative or $(o)$: oracle). {Given Assumptions~\ref{asm:pop_size} and \ref{asm:usefulness}, we have 
\begin{equation}
\frac{1}{N(t_2-t_1)}\mathbb{E}\{C_{\textrm{strat}}[t_1,t_2]\} = 
\mathbb{E}_{t \sim U[t_1,t_2],\rho_t}\{c_t(\rho_t)\} = \mathbb{E}_{t \sim U[t_1,t_2],\rho_t}\{k_2\,\xi_t^2(\rho_t,f_t)\} \, ,\nonumber 
\end{equation}
where the (potentially random) function $\rho_t$ is determined by the strategy.} We now state three results, the first of which establishes the growth rate of the holdout strategy over time.

\begin{theorem}
\label{thm:holdout_asymptotic}
Suppose we use a holdout set with size $n_\star=\Theta(N^a)$, with $0<a<1$, an $s<1$ of size $s=\Theta(N^{a + \epsilon-1})$ for some $\epsilon$ with $0<\epsilon<1-a$, and an update frequency $\delta \leq 1$ which  may vary with $N$.
{Under Assumptions~\ref{asm:ft_lipschitz}, \ref{asm:risk_score_convergence}, \ref{asm:usefulness} and \ref{asm:pop_size},}
we have
\begin{equation}
\mathbb{E}\left\{C_{(h)}[0,T]\right\} = \delta N T k_2 (2\alpha_1 + \alpha_1^2) +  \delta^{-1} O(N^a) + O(N^{a + \epsilon}) +  O(N^{1-a}) \, .\nonumber 
\end{equation}

\end{theorem}

%%%% Cut this
%\end{equation}
%
For fixed $\delta$, this suggests an optimal holdout set size $n_\star=\Theta(N^{1/2})$. If we choose $\delta=O(N^{-b})$ {with $0<b<1-(a+\epsilon)$ (given that we must have $s < \delta < 1$)} then  
\begin{equation}
\mathbb{E}\left\{C_{(h)}[0,T]\right\} =  O(N^{a+b}) + O(N^{a + \epsilon}) +  O(N^{1-a}) +  O(N^{1-b}) \, , \label{eq:delta_variation}
\end{equation}
and we achieve an optimal sublinear asymptotic growth rate of $O(N^{\frac{2}{3}})$ if $a=\frac{1}{3}$, $b=\frac{1}{3}$.

We now consider the costs of the no-update and na\"{i}ve-update strategies. We will show that for either strategy, costs must grow at least linearly in population size (thereby also showing that the updating paradox arises inevitably from Assumptions~\ref{asm:costs}, \ref{asm:usefulness}):

\begin{theorem}
\label{thm:alternative_asymptotic}
Suppose we choose $s$ such that $s \to 0$ as $N \to \infty$. Under Assumptions~\ref{asm:f0_difference}--\ref{asm:pop_size}, {for 
sufficiently small $\delta$}, we have for $(\textrm{strat}) \in \{(0),(n),(a)\}$:
\begin{equation}
\mathbb{E}\left\{C_{(\textrm{strat})}[0,T]\right\} = \Omega(N) \, .
\end{equation}
\end{theorem}

We note that Theorem~\ref{thm:alternative_asymptotic} immediately implies a dominance of the holdout-set strategy, since we cannot attain sublinear cost growth with the no-update, na\"{i}ve-update, or alternative strategies. 

Finally, we show that for fixed $\delta$, the holdout strategy is essentially optimal in that its cost is asymptotically similar to that of the oracle strategy. This is not true for the no-update, alternative or na\"{i}ve-update strategies, for which total cost arbitrarily exceeds that of the oracle.

\begin{theorem}
\label{thm:oracle_comparison}
Consider use of each strategy in parallel with an `oracle' procedure. Under Assumptions~\ref{asm:f0_difference}--\ref{asm:pop_size}, {with sufficiently small fixed $\delta$,} % and $T$ a sufficiently large multiple of $\delta$, 
holdout set size $n_\star=\Theta(N^{1/2})$, and $s<1$ of size $s=\Theta(N^{\epsilon-1/2})$ for some $\epsilon$ with $0<\epsilon<1/2$, we have for $(\textrm{strat}) \in \{(0),(n),(a)\}$:
\begin{equation}
\lim_{N \to \infty}\left( \frac{\mathbb{E}\left\{C_{(h)}[0,T]\right\}}{\mathbb{E}\left\{C_{(o)}[0,T]\right\}}\right) = 1\textrm{, and } %\hspace{5pt} 
\lim_{N \to \infty}\left(\frac{\mathbb{E}\left\{C_{(\textrm{strat})}[0,T]\right\}}{\mathbb{E}\left\{C_{(o)}[0,T]\right\}}\right) > 1 \hspace{5pt} \left(=\Omega(\delta^{-2})\right) \, . \nonumber
\end{equation}
\end{theorem}

We prove Theorems~\ref{thm:holdout_asymptotic}, \ref{thm:alternative_asymptotic} and \ref{thm:oracle_comparison} in Supplement~\ref{supp_sec:proofs_holdout_dominance}. Restrictions on $\delta$ in Theorems~\ref{thm:alternative_asymptotic}, \ref{thm:oracle_comparison} are required because for large $\delta$ {(relative to Lipschitz constants in Assumptions~\ref{asm:d_drift} and \ref{asm:f0_difference})} too much drift occurs between updates to guarantee sustained performance for the time a risk score is in use.

In general, our results show that any strategy for which alternative strategy condition~\eqref{eq:sublinear_condition} holds leads to higher costs than the holdout set approach. We claim that it is essentially impossible for a strategy to evade condition~\eqref{eq:sublinear_condition} (that is, be able to arbitrarily closely approximate $f_t$ at almost all $t \in [0,T]$) without the use of a holdout set, and hence have costs competitive to the holdout-set strategy.

In our setting, without a holdout set, the only information available when we wish to update the model is a set of samples of $X_t$ and $\textrm{Bern}\{g_t(X_t,\rho_t)\}$ for some $\rho_t$, where $g_t(\cdot,\rho_t) \neq f_t$ and $\rho_t \neq f_t$ (if $f_t=g_t$, the risk score is prompting no change in behaviour, which generally contradicts its usefulness). With only this information, we cannot hope to infer $f_t$ without error. To see this, suppose we have some risk score $\rho_t$ in place over an epoch and observe the function $g_t(\cdot,\rho_t)$. All we know about $f_t$ comes from Assumptions~\ref{asm:costs} and \ref{asm:usefulness}, that is:
\begin{equation}
k_2\xi_t^2(f_t,\rho_t) = k_1 - \mathbb{E}_{X \sim \mu_t}\{f_t(X)-g_t(X,\rho_t)\} \, ,
\end{equation}
from which $f_t$ is not identifiable (see Supplement~\ref{supp_sec:nonidentifiability} for details and an explicit example).

We may alternatively try to estimate $f_t$ using our knowledge of $f_u$ with $u<t$ (we will say $u=0$). If $f_0=f_t$, there is no drift, and the risk score need not be updated. To accurately infer $f_t$ from $f_0$, we would need to know exactly how $f_t$ changes with $t$, whereas typically drift in $f_t$ is random. 

It is more conceivable that $f_t$ could be inferred from $g_t$ using additional information (for instance, records of interventions). However, risk scores are usually used in complex settings (e.g. medicine, finance or law), and decisions are nuanced, so the extent to which a decision is due to a risk score is hard to quantify, even if decisions are explicitly recorded. In the most general setting where we have no additional information, we claim a holdout set provides the most principled statistical approach.

{
In the context of the ASPRE score, Assumption~\ref{asm:f0_difference} corresponds to the claim that over time, the risk of PRE given ultrasound-derived predictors changes in a non-negligible way and Assumptions~\ref{asm:d_drift} and \ref{asm:ft_lipschitz} assert that this change happens gradually. Assumption~\ref{asm:costs} and~\ref{asm:usefulness} state that our cost (number of PRE cases per unit time) should worsen according to the accuracy of our prediction of PRE risk. Assumption~\ref{asm:pop_size} states that we encounter pregnancies at approximately an even rate over time. 
}

{
Presuming that we may not update the score arbitrarily frequently, Theorem~\ref{thm:holdout_asymptotic} indicates we should choose a holdout set size scaling roughly as $\sqrt{N}$ to optimise costs, and Theorems~\ref{thm:alternative_asymptotic} and~\ref{thm:oracle_comparison} show that our costs, should we do so, will asymptotically be no worse than if we incurred no costs in the holdout set, but would be substantially worse should we update `na\"{i}vely' without one, or not update the model at all.
}

\subsection{Robustness} % to Assumptions~\ref{asm:f0_difference}--\ref{asm:pop_size}}
\label{sec:robustness_to_assumptions_a}

We briefly discuss the robustness of the findings in Theorems~\ref{thm:holdout_asymptotic}, \ref{thm:alternative_asymptotic} and \ref{thm:oracle_comparison} to Assumptions~\ref{asm:f0_difference}--\ref{asm:pop_size}. We first note that we can weaken Assumption~\ref{asm:usefulness} to 
\begin{equation}
k_2^l \xi^{b_l} \leq c_t \leq k_2^u \xi^{b_u} \, ,\label{eq:relaxed_usefulness}
\end{equation}
where $\xi=\xi_t^2(\rho_t,f_t)$, for some constants $k_2^l,k_2^u>0$ and $0<b_u \leq 1 \leq b_l$, whilst retaining the correctness of Theorem~\ref{thm:alternative_asymptotic}, requiring only minor modifications to Theorems~\ref{thm:holdout_asymptotic}, \ref{thm:oracle_comparison}, and Equation~\ref{eq:delta_variation} (Supplement~\ref{supp_sec:relaxation_usefulness}). {Although we assume that $k_1$ and $k_2$ (or $k_2^l,k_2^u$) are constant over time, we may relax this to their being bounded-below over time by positive constants. }

{In the absence of drift (assumption~\ref{asm:f0_difference}), the no-update strategy is preferable to the holdout-set update, though they remain equivalent as $N \to \infty$ in the sense of Theorem~\ref{thm:oracle_comparison}. However, the naive-update strategy remains suboptimal in this case (Supplement~\ref{supp_sec:f0diff}). }

We do not explicitly mention the possibility of `latent' covariates which influence the risk of outcome $f_t$. Rather, we may simply assume that risk scores are considered as marginals over latent covariates, which we discuss in greater detail as part of Supplement~\ref{supp_sec:natural_holdout}. Additionally, we do not assume that drift is independent of our actions: that is, the choice of who to intervene on may affect $\mu_t$ in the future. However, we do not explicitly model any long-term effects of intervention. {We presume that the intervention is not able to be recorded; we briefly discuss possibilities if the intervention is recorded (including use of a treatment indicator as a covariate in the predictive model) in Supplement~\ref{supp_sec:natural_holdout}.}

If $f_t$ or $\mu_t$ are not Lipschitz continuous for some $t$ (Assumptions~\ref{asm:ft_lipschitz}, \ref{asm:d_drift} respectively), then a risk score approximating $f_{t-\epsilon}$ will not necessarily approximate $f_{t+\epsilon}$, even for small $\epsilon$. Practically, this means that performance of a risk score fitted prior to $t$ cannot be guaranteed after $t$. In practice, this could readily occur (for instance, a financial risk score fitted prior to an unexpected disaster), but such an event would nonetheless be better-managed using the holdout-set strategy than others, since a holdout set could be used to `reset' the risk score after $t$. 

Heuristically, when using a na\"{i}ve update strategy, a less-biased risk score (that is, for which $\rho_t$ is more similar to $f_t$) will initially result in lower costs during the first deployment epoch (by a mechanism analogous to Assumption~\ref{asm:usefulness}), but then suffer higher costs after updating (since lower bias induces a larger difference between $f_t$ and $g_t$ by Assumption~\ref{asm:costs}, worsening the consequences of using $g_t$ to approximate $f_t$). Under Assumption~\ref{asm:f0_difference}, a non-updated model will accrue greater costs over time. We make the case that holdout sets enable good estimation of $f_t$ for most of the sample population at all $t$. 

These heuristics generally remain true when using a metric of similarity between $\rho_t$ and $f_t$ other than $\xi_t^2(f_t,\rho_t)$, or when Assumptions~\ref{asm:risk_score_convergence}, \ref{asm:usefulness}, \ref{asm:costs} and \ref{asm:pop_size} only roughly hold. In such cases, the holdout set approach will still tend to be lower-cost than other approaches for sufficiently large $N$, although the precise statements of the Theorems may not hold. For example, if Assumption~\ref{asm:risk_score_convergence} fails perhaps due to model mis-specification so that accuracy decreases to a positive minimum, then costs will grow linearly with the holdout set strategy. However, it will be closer to the oracle strategy than no-update, na\"{i}ve-update or alternative strategies. {Although we specify a constant size (up to order) for the holdout set in Theorems~\ref{thm:holdout_asymptotic} and \ref{thm:oracle_comparison}, in practice lower costs may be attainable with variable holdout set sizes.}

Nonetheless, we do acknowledge that our formulation is quite prescriptive. We roughly model a medical setting in which a physician sees a patient, assesses them, then intervenes and sends them away, only observing some outcome later. There are of course related settings for which this formulation is inappropriate, and we once again must defer to heuristic arguments in such situations.

\subsection{Note on ethics}
\label{sec:ethics}

In medical settings (such as our motivating example) the use of holdout sets appears ethically tenuous, due to differential treatment of samples in and out of the holdout set. We argue, however, that the use of holdout sets in these settings should still be considered. Our main reason is the absence of a viable alternative: as above, in many settings it is not possible to attain costs comparable to an idealised oracle strategy without a holdout set. We recapitulate that, even in the event that risk-score guided interventions are recorded, it is \emph{not} sufficient to use a `natural' holdout set by simply considering individuals who received a risk score, but were untreated, and it is not necessarily possible to infer $f_t$  (Supplement~\ref{supp_sec:natural_holdout}). The ethical questions surrounding holdout sets are discussed in more depth in~\cite{chislett24}.

We do note an important subtle assumption is that $k_1$ is finite (Assumption~\ref{asm:costs}). Settings in which it is unacceptable for even one sample to not have a risk score correspond to an infinite $k_1$, and in such settings we must make do with a risk score for which performance is not close to optimal. By contrast, there are settings for which the absence of a risk score is less serious: for instance, if the outcome in question is not life-threatening and for which existing best practice is often adequate for identifying cases (e.g., tooth decay~\citep{zukanovic13} or minor sexually transmitted infections~\citep{kranzer21}). %In such settings, the use of a holdout set may not be precluded on ethical grounds, since the greater benefit to the overall population may outweigh the lack of direct benefit to those individuals in the holdout set.
In such settings the use of a holdout set is more ethically acceptable, since the cost to any given individual (even in the holdout set) is low.

Lastly, we underscore the importance of considering the adoption of a holdout set, when other ethical concerns do not take precedence. In situations where a holdout set is not feasible, and withholding any treatment is deemed unacceptable, the ability to effectively respond to a drifting ground truth is compromised. Consequently, updates would ultimately yield sub-optimal risk scores for the entire population.

\section{Choosing the size of a holdout set}
\label{sec:cost_specification}

\setcounter{assumption}{0}
\renewcommand\theassumption{B\arabic{assumption}}

For the remainder of this paper, we will be concerned with choosing an optimal size for the holdout set. We begin by somewhat simplifying our formulation, with a focus on total cost, and no longer using Assumptions~\ref{asm:f0_difference}--\ref{asm:pop_size}. We will retain $k_1$ and $k_2$ as having roughly their existing meanings.

{
Returning to Figure~\ref{fig:epoch}, our goal at time $e$ is to nominate a size $n$ of the holdout set $\left(\{X_e^h\},\{Y_e^h\}\right)$, comprising samples encountered during $[e,e+1)$, to be used to fit a risk score to be used during $[e+1,e+2)$. During $[e+1,e+2)$, the risk score is used only on the intervention set $\left(\{X_{e+1}^i\},\{Y_{e+1}^i\}\right)$, whose size will depend on the choice of holdout set size at the next update time $e+1$. To estimate the costs incurred on this intervention set, we take its size to be $N-n$: the same as the size of the intervention set in $[e,e+1)$, with $N$ samples encountered in total in $[e,e+1)$. We do not, however, intend that the same holdout size be used for successive epochs in general. 
}

We denote the $n$ samples in the `holdout' set $\left(\{X_e^h\},\{Y_e^h\}\right)$ as $D_n$ (where `$D$' indicates `data'). Since we choose $\left(\{X_e^h\},\{Y_e^h\}\right)$ to be as close in time to $\left(\{X_{e+1}^i\},\{Y_{e+1}^i\}\right)$ as possible, we will presume that
\begin{align}
X_e^h &\sim \mu_{e+1} \, , \hspace{20pt} & \mathbb{E}(Y_e^h|X_e^h) &= f_{e+1}(X_e^h) \, ,\nonumber \\
X_{e+1}^i &\sim \mu_{e+1} \, , \hspace{20pt} &\mathbb{E}(Y_{e+1}^i|X_{e+1}^i) &= g_{e+1}(X_{e+1}^i,\rho_e) \, .\nonumber 
\end{align}
A risk score $\rho_e$ is fitted to $D_n$ which approximates $f_{e+1}(x)$ and is used in the intervention set $\left(\{X_{e+1}^i\},\{Y_{e+1}^i\}\right)$. We will presume that samples in $D_n$ are pairwise independent, as are samples in $\left(\{X_{e+1}^i\},\{Y_{e+1}^i\}\right)$, although samples in the latter depend on $D_n$ through the fitted risk score.

{
We define $C_1(X)$ and $C_2(X;D_n)$ as random variables associated with the total `cost' of an observation with covariates $X$ in the holdout set in epoch $e$ and intervention set in epoch $e+1$ respectively. 
Although the most natural cost (as in Assumption~\ref{asm:costs}) may be the number of adverse events, we take the `cost' in this case to represent only a quantity we aim to minimise through our use of the risk score. The cost could, for example, encompass the costs of managing adverse events and the costs of administering interventions.}

{
The value of $C_2(X; D_n)$ depends on $D_{n}$ only through the risk score fitted to $D_n$. We define the expected cost per observation in the holdout and intervention sets, respectively:
\begin{equation}
    k_1 = \mathbb{E}_{X \sim \mu_{e+1},C_1} \left\{C_1(X)\right\} \, , \hspace{20pt}
    k_2(n) = \mathbb{E}_{X \sim \mu_{e+1},C_2}\left[\mathbb{E}_{D_{n}} \left\{ C_2(X; D_n)\right\}\right] \, .\label{eq:k2def}
\end{equation}
Subscripted values $C_1$ and $C_2$ indicate variance in $C_1(X)$, $C_2(X; D_n)$ independent of $X,D_{n}$. We now express total expected cost $\ell$ across all samples as a function of holdout set size $n$:
\begin{equation}
\ell(n) = \underbrace{k_1 n}_{\text{Holdout set}} + \underbrace{k_2(n)(N-n)}_{\text{Intervention set}} \, . \label{eq:loss_specification}
\end{equation}
As for $C_1, C_2$, %we may take %The
the meaning of $\ell$ is contextual dependent on the application; for instance, in QRISK3 it may mean total number of deaths for a fixed healthcare budget.
}

\subsection{Sufficient conditions for existence of an OHS}
\label{sec:assumptions}

In this Section, we consider conditions under which the cost $\ell(n)$ can be readily optimised. We discuss estimation of $N$, $k_1$ and $k_2(\cdot)$ in Section~\ref{sec:ohs_estimation}. We begin with the following assumptions:
\begin{assumption}\label{item:k1_indep_pi}
$k_1$ does not depend on $n$: in a medical context, this means for example that treatment plans and outcomes for patients without risk scores do not depend on the number of such patients.
\end{assumption}
\begin{assumption}\label{item:k2_decrease}
$k_2(n)$ is monotonically decreasing in $n$: the more data available to train the risk score, the greater its clinical utility.
\end{assumption}
\begin{assumption}\label{item:k1_l_k2}
There exists $M \in (0,N)$ such that $n\geq M \Leftrightarrow k_2(n) \leq k_1$: a good enough risk score will lead to better patient outcomes than baseline treatment, and a poor enough risk score fitted to small amounts of data leads to worse expected outcomes than baseline treatment.
\end{assumption}
\begin{assumption}\label{item:k2_2nd_der}
$\mathbb{E}\left\{ k_2(i+1)-k_2(i)\right\} > \mathbb{E}\left\{k_2(j+1)-k_2(j)\right\}$ for $1 \leq i < j \leq N-1$, with expectations over training data: the `learning curve' for our risk score is convex; there are diminishing returns in the cost per patient from adding more samples to the training data.
\end{assumption}

We may extend the domain of $k_2(\cdot)$, $\ell(\cdot)$ to the real interval $[0,N)$ such that both functions are smooth; and $k_2'(n)< 0$, given Assumption~\ref{item:k2_decrease},  $k_2''(n) > 0$, given Assumption~\ref{item:k2_2nd_der}.
This leads to the following result, that there exists an optimal size for the holdout set minimizing the expected total cost. The proof is given in Supplement~\ref{apx:thm1proof}. %In the proof and for the remainder of this manuscript, we will take $l$, $k_1$, $k_2$, $N$ as continuous analogs of $L$, $K_1$, $K_2$, $N$; see proof for details.

\begin{theorem}
\label{thm:ohs_exists}

Suppose Assumptions~\ref{item:k1_indep_pi}--\ref{item:k2_2nd_der} hold. Then there exists an OHS $N_\star \in \{1, \dots, N-1\}$ with $N \in \mathbb{N}$, such that:
%\begin{align}
$\ell(i) \geq \ell(j) \textrm{ for } 0 < i < j < N_\star$ and % \nonumber \\
$\ell(i) \leq \ell(j) \textrm{ for } N_\star < i < j < N$
%\end{align}
\end{theorem}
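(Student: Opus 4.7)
The plan is to prove that the extension of $\ell$ to the real interval $[0, N)$ is strictly convex and that its derivative changes sign in the interior, yielding a unique real minimizer $n^{*} \in (0, N)$; the integer OHS $N_{*}$ is then selected adjacent to $n^{*}$, and the stated monotonicity follows from strict convexity.

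I would first differentiate $\ell(n) = k_1 n + k_2(n)(N - n)$ twice to obtain
\begin{equation*}
\ell'(n) = k_1 - k_2(n) + k_2'(n)(N - n), \qquad \ell''(n) = k_2''(n)(N - n) - 2 k_2'(n).
\end{equation*}
On $(0, N)$, Assumption~\ref{item:k2_decrease} gives $k_2'(n) < 0$ and Assumption~\ref{item:k2_2nd_der} gives $k_2''(n) > 0$, while $N - n > 0$; both summands of $\ell''(n)$ are therefore positive. Hence $\ell$ is strictly convex, and $\ell'$ is strictly increasing.

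Next, I would locate the minimizer by checking the sign of $\ell'$ near the endpoints of $[0, N)$. Since $0 < M$, Assumption~\ref{item:k1_l_k2} yields $k_2(0) > k_1$, so $k_1 - k_2(0) < 0$, and $N k_2'(0) < 0$, giving $\ell'(0) < 0$. As $n \to N^{-}$, the term $(N - n) k_2'(n) \to 0$ (using smoothness of the extension) and $\ell'(n) \to k_1 - k_2(N)$, which is strictly positive because $N > M$ forces $k_2(N) < k_1$ by Assumption~\ref{item:k1_l_k2}; hence there exists $n_0 < N$ with $\ell'(n_0) > 0$. By continuity and strict monotonicity of $\ell'$ on $[0, n_0]$, the intermediate value theorem supplies a unique $n^{*} \in (0, N)$ with $\ell'(n^{*}) = 0$, and by strict convexity this $n^{*}$ is the unique minimizer of $\ell$ on $[0, N)$.

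Finally, I would take $N_{*}$ to be any integer in $\{\lfloor n^{*} \rfloor, \lceil n^{*} \rceil\} \cap \{1, \ldots, N - 1\}$, which is non-empty because $n^{*} \in (0, N)$. Strict convexity forces $\ell$ to be strictly decreasing on $[0, n^{*}]$ and strictly increasing on $[n^{*}, N)$, so the inequalities $\ell(i) \geq \ell(j)$ for $0 < i < j < N_{*}$ and $\ell(i) \leq \ell(j)$ for $N_{*} < i < j < N$ follow immediately. The main obstacle is really only verifying the boundary signs of $\ell'$ from the strict iff statement in Assumption~\ref{item:k1_l_k2}, together with a little care that the smooth extension of $k_2$ to $n = 0$ respects the strict inequalities; the convexity-based conclusion itself is routine once strict convexity is in hand.
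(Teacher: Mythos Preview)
Your proposal is correct and follows essentially the same route as the paper's proof: compute $\ell'$ and $\ell''$, use Assumptions~\ref{item:k2_decrease} and~\ref{item:k2_2nd_der} to get strict convexity, use Assumption~\ref{item:k1_l_k2} to force a sign change of $\ell'$ on $(0,N)$, and then round the unique continuous minimizer to an integer in $\{1,\dots,N-1\}$. Your handling of the right endpoint via the limit $n\to N^-$ and your explicit intersection $\{\lfloor n^*\rfloor,\lceil n^*\rceil\}\cap\{1,\dots,N-1\}$ are, if anything, slightly tidier than the paper's phrasing, but the argument is the same.
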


{In the context of the ASPRE score, Assumption~\ref{item:k1_indep_pi} indicates that if an ASPRE score is unavailable for a given patient, their care is independent of the number of people used in training the ASPRE score. Assumptions~\ref{item:k2_decrease} and~\ref{item:k2_2nd_der} state that when training the score, the usefulness of the score (that is, the frequency of \textsc{pre} in patients on whom we use the score) improves at a diminishing rate as we increase the number of training samples. Assumption~\ref{item:k1_l_k2} states that a sufficiently good risk score is better for patients than no risk score at all. Theorem~\ref{thm:ohs_exists} then states that, to minimise overall cost of \textsc{pre} cases, a holdout set of some non-zero size should be used to update the score. }

We note that the OHS always exceeds the minimal training sample size required to match baseline treatment.

\begin{corollary}
\label{cor:linear_improvement}
The value of $N_\star$ always exceeds the value of $M$ in Assumption~\ref{item:k1_l_k2}, since if $N'<M$ we have
$\ell(N') = k_1 N' + k_2(N')(N-N') 
> k_1 N' + k_1(N-N')
= k_1 N 
\geq \ell(N_\star)$ %\nonumber
%\end{equation}

\end{corollary}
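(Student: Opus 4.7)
The plan is to turn the inequality chain appearing in the statement into a self-contained argument, and to separately justify the two ingredients that the statement leaves implicit: namely, (i) why $\ell(N_*) \leq k_1 N$, and (ii) why the conclusion is a strict inequality $N_* > M$ rather than just $N_* \geq M$.

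I would begin by invoking Assumption~\ref{item:k1_l_k2} in its contrapositive form: for any integer $N' < M$, we have $k_2(N') > k_1$ strictly. Substituting into the definition of $\ell$ and using $N - N' > 0$ gives
\begin{equation*}
\ell(N') \;=\; k_1 N' + k_2(N')(N-N') \;>\; k_1 N' + k_1(N-N') \;=\; k_1 N.
\end{equation*}
To close the loop with $\ell(N_*)$, I would observe that at the formal endpoint $n=N$ the intervention set is empty, so $\ell(N) = k_1 N + k_2(N)\cdot 0 = k_1 N$. Since $N_*$ minimises $\ell$ over $\{1,\dots,N-1\}$ by Theorem~\ref{thm:ohs_exists}, and the unimodal shape there described extends continuously to the endpoint, we have $\ell(N_*) \leq \ell(N) = k_1 N$. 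Chaining gives $\ell(N') > k_1 N \geq \ell(N_*)$ for every $N' < M$, which is precisely the chain reproduced in the corollary statement, and shows that no $N' < M$ can be the minimiser. Hence $N_* \geq M$.

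To upgrade to strict inequality, I would rule out $N_* = M$ directly. By Assumption~\ref{item:k1_l_k2} (both directions of the iff) together with continuity of $k_2$, we have $k_2(M) = k_1$, whence $\ell(M) = k_1 M + k_1(N-M) = k_1 N$. Assumption~\ref{item:k2_decrease} then yields $k_2(M+1) < k_2(M) = k_1$, and so $\ell(M+1) = k_1(M+1) + k_2(M+1)(N-M-1) < k_1 N = \ell(M)$, showing $M$ is not optimal. Combined with the previous step, this forces $N_* > M$.

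The main, and really only, obstacle I foresee is cleanly justifying the endpoint bound $\ell(N_*) \leq k_1 N$; everything else is direct substitution from the assumptions. The cleanest route is the one above (compare to the boundary value $\ell(N)$ and use unimodality from Theorem~\ref{thm:ohs_exists}), but an equivalent alternative would be to argue by contradiction: if $N_* < M$, the strict bound $\ell(N_*) > k_1 N \geq \ell(N)$ contradicts the optimality of $N_*$ outright.
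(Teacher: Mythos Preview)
Your proposal is correct and follows the same approach as the paper: the corollary is essentially self-proving, with the entire argument being the inequality chain displayed in its statement, and you reproduce and justify that chain. The paper offers no proof beyond that inline chain.

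Your additional work is worth noting. First, you explicitly justify the step $k_1 N \geq \ell(N_*)$, which the paper leaves implicit; your route via $\ell(N)=k_1 N$ and unimodality (or, equivalently, picking any $n\geq M$ and using $k_2(n)\leq k_1$ to get $\ell(n)\leq k_1 N$, then minimality of $N_*$) is the right idea. Second, you observe that the chain as written only yields $N_*\geq M$, not the strict ``exceeds'' claimed, and you supply the missing step. One small caution: your claim that $k_2(M)=k_1$ relies on the smooth real extension the paper introduces just after the assumptions; in the purely discrete reading one only has $k_2(M)\leq k_1$ and $k_2(M-1)>k_1$. Either way the strictness can be recovered, since if $k_2(M)<k_1$ then already $\ell(M)<k_1 N$ and the convexity from Theorem~\ref{thm:ohs_exists} forces the minimiser strictly past $M$, while if $k_2(M)=k_1$ your argument via $\ell(M+1)<\ell(M)$ applies directly.
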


Consequently Assumption~\ref{item:k2_2nd_der} may be relaxed for $i,j < M$; we need only be concerned with the behaviour of $k_2(n)$ at realistically large values of $n$, rather than $n \in \{1,\dots,M\}$. We also have
\begin{equation}
\lim_{N \to \infty} \lim_{n \to N} \ell'(n) = \lim_{N \to \infty} \lim_{n \to N} \left\{ k_1-k_2(n) +(N-n)k_2'(n)\right\} =
k_1 - \lim_{n \to \infty} k_2(n) \, , \nonumber
\end{equation}
and since $k_1>k_2(n)>0$ for large $n$, we have that expected total costs $\ell(n)$ are increasing, but bounded by the per observation expected cost of baseline treatment $k_1$.

{Let us suppose that the holdout set used to fit $\rho_e$ is encountered at time $e+1$, and that we have Assumption~\ref{asm:usefulness} and a stronger form of Assumption~\ref{asm:risk_score_convergence} in that $\xi_{e+1}^2(\rho_{e},f_{e+1}) =c_2 n^{-1} = O(n^{-1})$ for some constant $c_2$. The cost for a single sample with distribution $\mu_{e+1}$ (as in the intervention set), by Assumption~\ref{asm:usefulness} is given by $c_{e+1} = k_2(n)=k_2 \xi_{e+1}^2(\rho_e,f_{e+1}) = k_2 c_2 n^{-1}$ and $\ell(n) = k_1 n + k_2 c_2 n^{-1}(N-n)$. This is minimised at $n=\sqrt{k_2 c_2 N/k_1}=\Theta(N^{1/2})$, in line with the fixed-$\delta$ theoretical OHS in Section~\ref{sec:holdout_set_motivation}}.

\subsection{Robustness to Assumptions~\ref{item:k1_indep_pi}--\ref{item:k2_2nd_der}}
\label{sec:robustness_to_assumptions}

The applicability of Assumptions~\ref{item:k1_indep_pi}--\ref{item:k2_2nd_der} in real world settings requires careful consideration. We address violations of Assumptions~\ref{item:k2_decrease} and~\ref{item:k2_2nd_der} in Section~\ref{sec:emulation} and Assumptions~\ref{item:k1_indep_pi} and~\ref{item:k1_l_k2} here.

Assumption~\ref{item:k1_indep_pi} is fundamental to the success of the holdout set concept. It may be violated if, for instance, agents who can make interventions learn the behaviour of a risk score and apply this to samples with no score. However, such violations are not of serious concern: if we presume that such changes in agents endure over time, then they can be considered as simply contributing to drift, which need not be independent of holdout set size. 

If ethically appropriate, Assumption~\ref{item:k1_indep_pi} could be assured by partitioning agents to manage only samples in holdout sets or only in intervention sets (e.g., cluster randomisation). This requires assuming that changes in agent behaviour as above \emph{do not} endure until the following epoch.

If Assumption~\ref{item:k1_l_k2} fails because $k_2(0) \leq k_1$, we may show a weaker result (the presence of a non-trivial but potentially non-unique minimum loss) by replacing Assumptions~\ref{item:k2_decrease}, \ref{item:k1_l_k2} and~\ref{item:k2_2nd_der} with: % the following:

\begin{assumption}\label{item:frac_imp}
There exists an $0 < M < N$ such that $\frac{N-M}{N}(k_1 - k_2(M)) > k_1 - k_2(0)$
\end{assumption}

This assumption is %more restrictive than Assumption \ref{item:M_l_k1} in that it is not immediately obvious that this would hold in general. However, if it is expected 
essentially stating 
that at some point the risk score will greatly outperform a risk score built with no data. % then this assumption will be satisfied. 
This leads to the result %but perhaps more applicable theorem 
 that:
\begin{theorem}\label{thm:ohs_exists_weak}
Suppose Assumptions \ref{item:k1_indep_pi} and
\ref{item:frac_imp} hold, and $k_2(0) \leq k_1$. Then there exists an $N_\star \in \{1, \dots, N-1\}$ such that:
$\ell(i) \geq \ell(N_\star) \textrm{ for } i \in \{1,\dots,N-1\}$ and
$\ell(i) > \ell(N_\star) \textrm{ for } i \in \{0,N\}$
\end{theorem}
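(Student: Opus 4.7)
My plan is to directly exhibit a feasible integer at which $\ell$ is strictly less than both endpoint values $\ell(0)$ and $\ell(N)$, and then take $N_*$ to be any minimizer of $\ell$ over $\{1,\dots,N-1\}$. Under these weakened hypotheses the loss need not be convex, so the minimizer need not be unique; this is precisely why the statement admits ties ($\geq$) in the interior but reserves the strict inequality ($>$) for the boundary points $i\in\{0,N\}$.

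First I evaluate the endpoints directly from \eqref{eq:loss_specification}: $\ell(0)=k_2(0)N$ and $\ell(N)=k_1 N$, so the assumption $k_2(0)\leq k_1$ gives $\ell(0)\leq \ell(N)$ at once. It therefore suffices to produce a feasible $n$ whose loss undercuts $\ell(0)$.

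The central observation is that assumption~\ref{item:frac_imp} is itself a loss comparison in disguise. Multiplying $\frac{N-M}{N}(k_1-k_2(M))>k_1-k_2(0)$ through by $N$ and regrouping yields
\begin{equation*}
N\, k_2(0) \;>\; M k_1 + (N-M)\,k_2(M),
\end{equation*}
whose two sides are precisely $\ell(0)$ and $\ell(M)$, so $\ell(M)<\ell(0)\leq \ell(N)$. Taking $N_*$ to be any minimizer of $\ell$ over the finite set $\{1,\dots,N-1\}$ (which exists since that set is finite), we get $\ell(N_*)\leq \ell(i)$ for every interior $i$, and $\ell(N_*)\leq \ell(M)<\min\{\ell(0),\ell(N)\}$ at the boundary, which is exactly the conclusion.

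The only technical nuisance I foresee is integrality of $M$: assumption~\ref{item:frac_imp} permits any real $M\in(0,N)$, whereas $N_*$ is indexed over integers. I would resolve this either by reading $M$ as integer-valued (consistent with the integer indexing of $n$ used elsewhere in section~\ref{sec:assumptions}), or by invoking the smooth extension of $k_2$ already introduced in that section and observing that the strict inequality $\ell(M)<\ell(0)$ persists at a nearby integer by continuity. Beyond this single arithmetic rearrangement, no monotonicity or convexity of $\ell$ is needed, which is appropriate since this weaker theorem deliberately dispenses with assumptions~\ref{item:k2_decrease}, \ref{item:k1_l_k2} and~\ref{item:k2_2nd_der}.
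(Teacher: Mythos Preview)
Your proof is correct and follows essentially the same approach as the paper: both arguments rearrange assumption~\ref{item:frac_imp} algebraically to obtain $\ell(M)<\ell(0)$, and then conclude by taking $N_*$ to minimise $\ell$ over the interior. The only cosmetic difference is that the paper first establishes $\ell(M)<\ell(N)$ via a separate lemma (using that assumption~\ref{item:frac_imp} forces $k_2(M)<k_1$) and then splits into the cases $k_1=k_2(0)$ and $k_1>k_2(0)$, whereas you handle both endpoints uniformly through the chain $\ell(M)<\ell(0)\leq\ell(N)$; your route is slightly more streamlined.
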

In a setting in which $k_1 > k_2(0)$ but one or more of Assumptions \ref{item:k2_decrease}--\ref{item:k2_2nd_der} do not hold, we have
\begin{theorem}\label{thm:ohs_cor}
Suppose Assumption~\ref{item:k1_indep_pi} holds, $k_1 < k_2(0)$ and there exists $0<M<N$ such that $k_2(M) < k_1$. Then there exists an $N_\star \in \{1, \dots, N-1\}$ such that:
$\ell(i) \geq \ell(N_\star) \textrm{ for } i \in \{1,\dots,N-1\}$ and
$\ell(i) > \ell(N_\star) \textrm{ for } i \in \{0,N\}$.
\end{theorem}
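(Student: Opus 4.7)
The plan is to show that both endpoints $n=0$ and $n=N$ of the admissible range are strictly dominated by some interior point, after which existence of a minimiser $N_*\in\{1,\dots,N-1\}$ follows purely from the finiteness of that set, with no monotonicity or convexity of $k_2$ required.

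First I evaluate $\ell$ at the endpoints: substituting into $\ell(n) = k_1 n + k_2(n)(N-n)$ gives $\ell(0) = k_2(0)\,N$ and $\ell(N) = k_1 N$. The hypothesis $k_1 < k_2(0)$ then immediately yields $\ell(N) < \ell(0)$.

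Second, I use the hypothesis that there exists $M\in\{1,\dots,N-1\}$ with $k_2(M) < k_1$ to show that $\ell(M)$ strictly beats $\ell(N)$:
\[ \ell(M) - \ell(N) \;=\; k_1 M + k_2(M)(N-M) - k_1 N \;=\; (k_2(M) - k_1)(N-M) \;<\; 0, \]
since $k_2(M) - k_1 < 0$ and $N - M > 0$. Combined with the previous step this gives the chain $\ell(M) < \ell(N) < \ell(0)$. Since $\{1,\dots,N-1\}$ is finite, $\ell$ attains its minimum on it at some $N_* \in \{1,\dots,N-1\}$; by minimality $\ell(N_*) \leq \ell(i)$ for all $i\in\{1,\dots,N-1\}$, which is the first claim, and the chain $\ell(N_*) \leq \ell(M) < \min\{\ell(0),\ell(N)\}$ then delivers the strict inequalities at $i\in\{0,N\}$.

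I do not anticipate a substantial obstacle: having discarded monotonicity and convexity of $k_2$, one cannot hope to establish uniqueness or a ``V-shape'' for $\ell$, so the conclusion is intrinsically weaker than Theorem~\ref{thm:ohs_exists}, and the argument reduces to the two-line arithmetic comparison above. The only minor point requiring care is the interpretation of $M$ as an admissible integer holdout size; this either follows from reading the learning-curve hypothesis on its natural integer domain, or, if $M$ were specified only as real, from invoking the smoothness of the extension of $k_2$ to $[0,N]$ to locate an integer in the open set $\{n:k_2(n) < k_1\}$.
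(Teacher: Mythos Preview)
Your proof is correct and follows essentially the same approach as the paper: compute $\ell(0)=k_2(0)N$ and $\ell(N)=k_1N$ to get $\ell(N)<\ell(0)$, then show $\ell(M)<\ell(N)$ via the same one-line arithmetic (which the paper factors out as a separate lemma), and conclude by taking the minimum over the finite set $\{1,\dots,N-1\}$. Your version is slightly more explicit about the finiteness step, but there is no substantive difference.
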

Both results are proved in Supplement~\ref{apx:thm1proof}. 
In the setting where $k_1=k_2(0)$ and Assumption~\ref{item:k1_indep_pi} does not hold, it may sometimes be reasonable to assign samples in the holdout set %random
risk scores based on no data (for example risk scores generated entirely from expert opinion) and blind agents to holdout/intervention status. Under this setting we may have greater assurance of Assumption~\ref{item:k1_indep_pi}.

\section{Estimation of OHS}
\label{sec:ohs_estimation}

\subsection{Estimation of $k_2(n)$}
\label{sec:practicalities}

We are aiming to find a holdout set size which minimises costs during an epoch $e \geq 1$, $t \in [e,e+1)$ (noting that the holdout set will be used late in the epoch when $t \approx e+1$). This choice must be made during epoch $e-1$ (when $t<e$). We take it that we have the following:
\begin{enumerate}
\item An approximate number of samples on which the model will be used or refitted; \label{item:duration}
\item A cohort of samples $(X,Y)$ with $X \sim \mu_{e-\epsilon} \approx \mu_e$, $Y|X \sim f_{e-\epsilon}(X) \approx f_e(X)$, with $\epsilon$ small. \label{item:samples_holdout}
\end{enumerate}
In~\ref{item:samples_holdout}, the samples are from a holdout set if $e>1$, or from initial training data if $e-1=0$.
We aim to estimate the cost function $\ell(n)$ at $t \in [e,e+1)$. Our approach is to estimate $\ell(n)$ for $t=e-\epsilon$ and assume that the OHS is approximately conserved from $t=e-\epsilon$ to $t=e+1$, though in reality drift may occur in $\ell(n)$.

At time $t=e-\epsilon$, we need to estimate constants $N$, $k_1$, and the function $k_2(\cdot)$. The constants $N$ and $k_1$ are straightforward: $N$, the total number of samples on which a predictive score can be fitted or used, will usually be known or specified (item~\ref{item:duration} above); and $k_1$, the average cost per sample under baseline behaviour without a score, can be estimated from observed costs in the cohort in item~\ref{item:samples_holdout} above.
The function $k_2(\cdot)$ (Equation~\ref{eq:k2def}) is more difficult to estimate, as it involves quantifying costs of hypothetical risk scores. We may tractably estimate $k_2(\cdot)$ by assuming that
\begin{equation}
\mathbb{E}_{X\sim \mu_e, C_2} \left\{ C_2(X;D_n)\right\}=\mathscr{L}\{\textrm{err}(\rho_{D_n})\} \, ,
\end{equation}
where $\rho_{D_n}$ is a risk score fitted to $D_n$, $\textrm{err}(\cdot)$ is a measure of error, and $\mathscr{L}$ is some nondecreasing function. %We
We claim that in general circumstances we may take $\mathscr{L}(\cdot)$ to be linear and $\textrm{err}(\cdot)$ to be expected mean-squared error (MSE) or a similar general loss. We derive this heuristically in Supplement~\ref{supp_sec:k2_estimation} and derive expressions for $k_2(n)$ directly in a specific case in Section~\ref{sec:aspre}. Once $\mathscr{L}$ is known, this allows $k_2(n)$ to be estimated readily by establishing the `learning curve' of the risk score using  item~\ref{item:samples_holdout} above. %and parametrisation of learning curves can allow extrapolation to larger sample sizes. %thus converting estimation of $k_2(\cdot)$ from a problem of observation to a problem of computation.% to estimate the `learning curve' of the risk score. 

Some direct estimates of $k_2(n)$ are necessary to determine $\mathscr{L}$. One option is to designate subcohorts of the intervention set in epoch $e-1$ to receive risk scores fitted to smaller subsamples of available training data, allowing direct observation of the costs of such risk scores.
While simple, this approach may be ethically tenuous and expensive. 
Other options include estimating the function $\mathscr{L}$ through expert opinion or other outside information. 

In summary, we recommend that $k_2(n)$ is estimated by jointly making a small number of estimates during epoch $e-1$, either directly or indirectly, to establish $\mathscr{L}$, and thereafter estimated by evaluating the error of a risk score fitted to $n$ samples using the set in item~\ref{item:samples_holdout} and transforming it according to the estimated $\mathscr{L}$.

\subsection{Parametric estimation of OHS}
\label{sec:parametric}

A natural algorithm for estimating the OHS is immediately suggested by Theorem~\ref{thm:ohs_exists}: assume $k_2$ is known up to parameters $\theta$, and estimate $N$, $k_1$ and $\theta$ to estimate the OHS. Parameters $\theta$ of $k_2$ may be estimated from observations of pairs $\{n,k_2(n)\}$, potentially with error in $k_2(n)$. To minimize the number of estimates of $k_2(n)$ we iteratively add observations $(n,k_2(n))$ to an existing set of observations so as to greedily reduce expected error in the resultant OHS estimate.

We suggest a routine parametric algorithm (Algorithm~\ref{alg:parametric_approximate}) with estimation of asymptotic confidence intervals. Full details of theory, proofs and algorithms are given in Supplement~\ref{apx:parametric}.

\begin{algorithm}[h]
\begin{algorithmic}[1]
\State $\mathbf{n},\mathbf{k_2}, \boldsymbol{\sigma}^2 \gets$ some initial values $\mathbf{n}=\{n_1,\dots,n_m\}$ with $(\boldsymbol{k_2})_i\approx k_2(n_i)$, $(\boldsymbol{\sigma}^2)_i = \textrm{var}(\hat{k}_2(n_i))$\;
\While{$|\mathbf{n}| < $ total iterations}
\State Find $\tilde{n}$ which minimises expected OHS confidence interval width (\S\ref{apx:parametric}, eq.\ref{eq:parametric_nextn}), and add to $\mathbf{n}$ \;
\State Estimate $\hat{k}_2(\tilde{n}) \approx k_2(\tilde{n})$ \;%on training set\;
\State $\mathbf{n} \gets \{\mathbf{n} \cup \tilde{n}\}$, $\mathbf{k_2} \gets \left\{ \mathbf{k_2} \cup \hat{k}_2(\tilde{n})\right\}$, $\boldsymbol{\sigma}^2 \gets \left\{\boldsymbol{\sigma}^2 \cup \textrm{var}\left\{\hat{k}_2(\tilde{n})\right\}\right\}$ 
\EndWhile
\State \Return Re-estimate OHS $n_\star^\text{final}$ from $\mathbf{n},\mathbf{k_2},\boldsymbol{\sigma}$
\caption{Parametric OHS estimation overview}
\label{alg:parametric_approximate}
\end{algorithmic}
\end{algorithm}

We use the shorthands $\Theta=(N,k_1,\theta)$ and $\Theta_0=\mathbb{E}(\Theta)$. Consistency of Algorithm~\ref{alg:parametric_approximate} depends on whether $\mathbf{n}$ eventually contains enough elements of sufficient multiplicity to estimate $\Theta_0$ consistently. %We give a general consistency result in Supplementary Section~\ref{supp_sec:param_consistent}, but %, which relies on somewhat constrictive assumptions; 
Sampling some positive proportion of values of $\mathbf{n}$ randomly from $\{1,\dots,N\}$ guarantees that the multiplicity of all $n \in \mathbf{n}$ almost surely eventually exceeds any finite value, readily ensuring consistency. Finite-sample bias of $n_\star^\text{final}$ depends on $\nabla_{\Theta} n_\star$ and the variance of $\Theta$. %For a power-law form of $k_2(n)$, the value of $n_\star^{final}$ is generally biased slightly upwards. 
See Supplementary Figure~\ref{supp_fig:partials_nstar} for typical forms of $\nabla_{\Theta} n_\star$.

\subsection{Semi-parametric (emulation) estimation of OHS} 
\label{sec:emulation}

Parametrization of $k_2(n)$ may be inappropriate if the learning curve of the risk score or the relation between the learning curve and $k_2(n)$ (from Section~\ref{sec:practicalities}) are complex~\citep{viering21}. We propose a second algorithm which is less reliant on assuming a parametric form for $k_2(n)$, using Bayesian optimisation~\citep{brochu2010tutorial}. 
We quantify the uncertainty in $k_2(n)$ through the construction of a Gaussian process emulator of $\ell$ . The prior mean function for this emulator takes a particular parametric form, but crucially can deviate from this prior function with the addition of data.

We take the $n$ corresponding to the minimum cost function value (for the evaluated points) to be our OHS estimate.  Values of $n$ at which to estimate $\ell(n)$ are selected using an `expected improvement' function $EI(\cdot)$, whereby if $EI(n)>\tau$ we roughly expect the minimum cost to decrease by at least $\tau$ from adding another estimate of $\ell(n)$ to our data. This also provides a natural stopping criterion.
An outline procedure is given in Algorithm~\ref{alg:emulation_approximate}. Further algorithm details and proofs of consistency are in Supplement~\ref{apx:emulation}. 

\begin{algorithm}[h]
\begin{algorithmic}[1]
\State $\mathbf{n},\mathbf{d} \gets$ some initial values $\mathbf{n}=\{n_1,\dots,n_m\}$ with $d_i=d(n_i) \approx \ell(n_i)$ \;
\State Estimate mean and variance of Gaussian process $\ell(n)$ and function $EI(n)$\; 
\While{$\max_{n \in \{1,\dots, N\}}\{EI(n)\} > \tau$}
\State $\tilde{n} \gets \arg \max_{n \in \{1,\dots, N\}} EI(n)$ \;
\State Estimate $d(\tilde{n}) \approx k_2(\tilde{n})$ \;
\State $\mathbf{n} \gets (\mathbf{n} \cup \tilde{n})$; $\mathbf{d} \gets (\mathbf{d} \cup d(\tilde{n}))$ \;
\State Re-estimate mean and variance of Gaussian process $\ell(n)$ and function $EI(n)$  \;
\EndWhile
\State \Return $n_\star^\text{final} = \arg \min_{n \in \{1,\dots, N\}\cap\mathbf{n}} \left\{\frac{1}{|\{j: n_j = n\}|}\sum_{j: n_j = n}{d}_j\right\}$
\caption{Emulation OHS estimation; minimum cost improvement $\tau$ }
\label{alg:emulation_approximate}
\end{algorithmic}
\end{algorithm}

\section{Simulations}
\label{sec:simulations}

\subsection{Simulation showing dominance of holdout set approach}
\label{sec:holdout_dominance_figure}

We briefly illustrate the theory described in Section~\ref{sec:general_setup} using simulated data, similar to our motivating example, which satisfiess Assumptions~\ref{asm:f0_difference}--\ref{asm:costs}, \ref{asm:pop_size} and the weaker form of Assumption~\ref{asm:usefulness} (Equation~\ref{eq:relaxed_usefulness}) with $\delta=10$, $s=1$, {no drift in $\mu_t$}, $\alpha_1 \approx 0.32$, $k_1 = 0.023$, $k_2^l = 0.038$, $b_u=b_l=1$ and $k_2^u = 0.22$ 
(details in Supplement~\ref{supp_sec:holdout_dominance_simulation}). Figure~\ref{fig:holdout_dominance} shows total costs accrued per sample during unit time periods of no-update (`none'), na\"{i}ve-update (`naive') and holdout-update (`H.S') at two holdout sizes over time. As drift occurs in $f_t$, the costs associated with the no-update strategy grow due to increasingly poor approximation of $f_t$, and the costs of the na\"{i}ve-update strategy increase dramatically due to intervention effects. The total costs of the holdout-set approaches remain low. Choice of the holdout set size aims to balance increased costs due to non-intervention in the holdout set (the `spikes') against inaccuracy in fitted scores after drift. We demonstrate the natural emergence of an OHS in a simulated context in Supplement~\ref{supp_sec:sim_example}. 

\begin{figure}[h] % [!ht]
\centering
\includegraphics[width=0.75\textwidth,clip,trim=0cm 0cm 0cm 0cm]
  {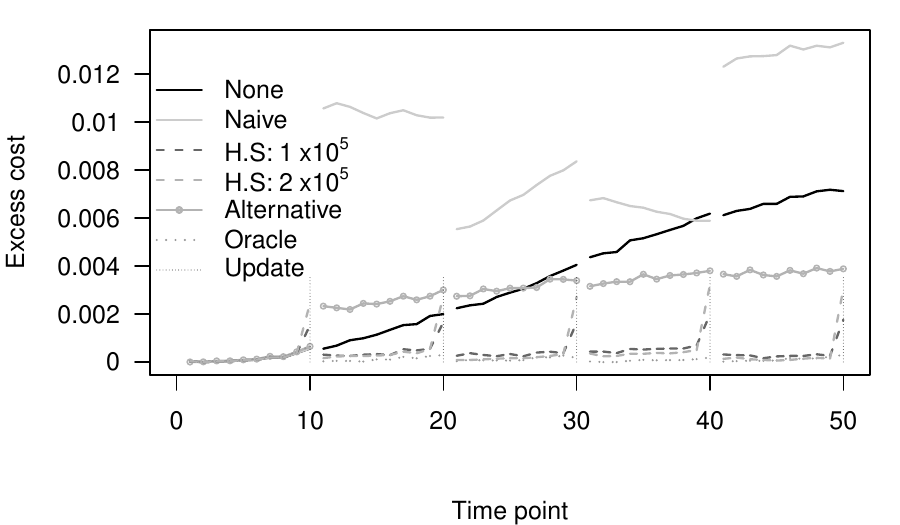}
\caption{Cost per sample per unit of time for no-update, na\"{i}ve-update, holdout-update, {oracle, and alternative-update (using a treatment indicator as a covariate)}  strategies for a simulated example with population drift in risk. {The costs associated with na\"{i}ve updating will decrease during an epoch if $f_t$ drifts towards the fitted risk score (that is, $\xi_t^2(f_t,\rho_t)$ decreases with $t$) and increase if it drifts away (that is, $\xi_t^2(f_t,\rho_t)$ increases with $t$).} {Supplementary Figure~\ref{supp_fig:cumcost} shows the cumulative cost over time}.}
\label{fig:holdout_dominance}
\end{figure}

\subsection{Comparison of parametric and emulation algorithms}
\label{sec:comparison_parametric_emulation}

In this section, we give circumstances in which one of Algorithms~\ref{alg:parametric_approximate} or \ref{alg:emulation_approximate} may be preferable to the other. We consider two versions of the function $k_2(n)$:
\begin{equation}
k_2^p(n) = a n^{-b} + c \hspace{10pt} \mbox{,   } \hspace{10pt}
k_2^{np}(n) = a n^{-b} + c + \frac{10^4}{\sqrt{2\pi}} \exp\left(-\frac{1}{2}\left(\frac{n- 4 \times 10^4}{8 \times 10^3}\right)^2\right) \,;\nonumber
\end{equation}
where: `p'/`np' denote `parametric assumptions satisfied/not satisfied', and $\theta=(a,b,c)=(10000, 1.2, 0.2)$.  We assume $N$ and $k_1$ are known to be $10^5$ and $0.4$ respectively. For emulation, we use a kernel width $\zeta=5000$ and variance $\sigma_u^2$ of $10^7$.

The function $k_2^{np}(n)$ exhibits `double-descent' behaviour (Supplementary Figures~\ref{supp_fig:k2_versions}, \ref{supp_fig:cost_versions}), which is possible for learning curves~\citep{viering21} but violates Assumptions~\ref{item:k2_decrease}, \ref{item:k2_2nd_der}. %Corresponding cost functions are shown in Supplementary Figure~\ref{supp_fig:cost_versions}.  

\sloppy
We firstly show the distribution of estimates of OHS using both algorithms when $k_2$ takes either form above. To fit $k_2$, we use 200 randomly chosen values from $\{1, \dots, N\}$ for $\mathbf{n}$, with values $\mathbf{k_2}$ independently sampled as $(\mathbf{k_2})_i \sim N\{k_2(n_i),\sigma_i^2\}$, where $\sigma_i\stackrel{\text{iid}}{\sim} U(0.001,0.02)$. Supplementary Figure~\ref{supp_fig:param_emul_comp} shows the distributions and medians of OHS estimates using the parametric and emulation algorithms in settings with parametric assumptions either satisfied or not. 

The results confirm expectations that the parametric OHS estimate is empirically unbiased and has less variance than the emulation estimate when parametric assumptions are satisfied, but is biased when they are not. Variance of OHS estimates using the emulation method is lower when parametric assumptions are not satisfied, because the true cost function has a sharper minimum in that case (see Supplementary Figure~\ref{supp_fig:cost_versions}). Since the cost function is `flat' around the minimum in the setting where parametric assumptions are satisfied (Supplementary Figure~\ref{supp_fig:cost_versions}), the consequences of the high variance of the semi-parametric (emulation) estimator are minimal, as the cost is similar across a range of values near the OHS.

We next examine the convergence rates of OHS estimates when sampling the `next' value of $n$, $\tilde{n}$,  greedily, using Equation~\ref{eq:parametric_nextn} for Algorithm~\ref{alg:parametric} or $EI$ for Algorithm~\ref{alg:emulation}, versus simply randomly selecting $\tilde{n}$ uniformly in $\{1,\dots,N\}$. This is shown in Figure~\ref{fig:nextpoint_comparison}, which depicts medians and OHS estimates at various sizes of $|\mathbf{n}|$ under the different methods for selecting $\tilde{n}$. 

Convergence is faster when next points are picked greedily rather than randomly and when using parametric estimates (though these are biased and inconsistent for $k_2^{np}$). This is highlighted by the smaller panels which show the root mean-square error between the total cost at the estimated optimal sizes and the total cost at the true OHS. %: as expected the parametric algorithm is to be favoured where the assumptions are satisfied and the non-parametric where they are not. Note i
In particular, observe that the non-parametric method shows bifurcation, detecting both local minima in the double descent setting, whilst the parametric method converges to a mid-point which is far from optimal in terms of total costs. All code can be found in \url{https://github.com/jamesliley/OptHoldoutSize_pipelines}.

  \begin{figure}[h] %[h]

  \begin{subfigure}{\textwidth}
  \begin{center}  \begin{subfigure}{0.4\textwidth}
   \includegraphics[width=\textwidth,clip,trim=0cm 1cm 0cm 0cm]
      {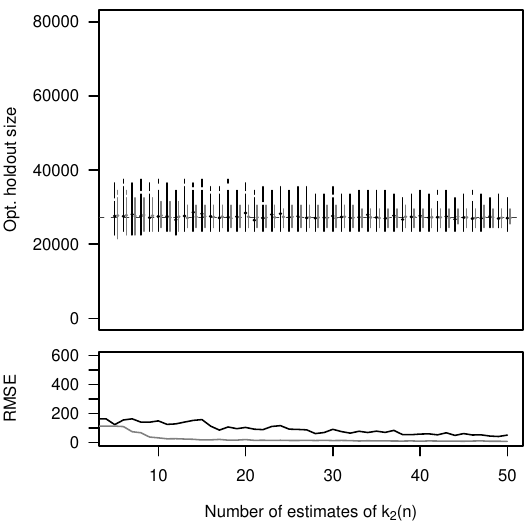}
  \end{subfigure}  
    \begin{subfigure}{0.38\textwidth}
   \includegraphics[width=\textwidth,clip,trim=0.5cm 1cm 0cm 0cm]
      {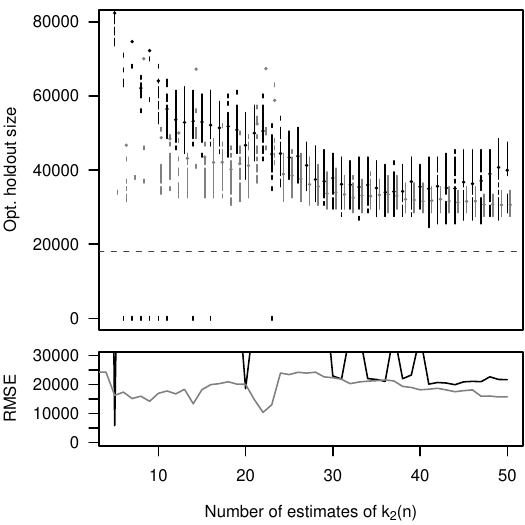}
  \end{subfigure}  
  \end{center}
  \end{subfigure}
  
  \begin{subfigure}{\textwidth}
  \begin{center}
    \begin{subfigure}{0.4\textwidth}
   \includegraphics[width=\textwidth,clip,trim=0cm 0cm 0cm 0cm]
      {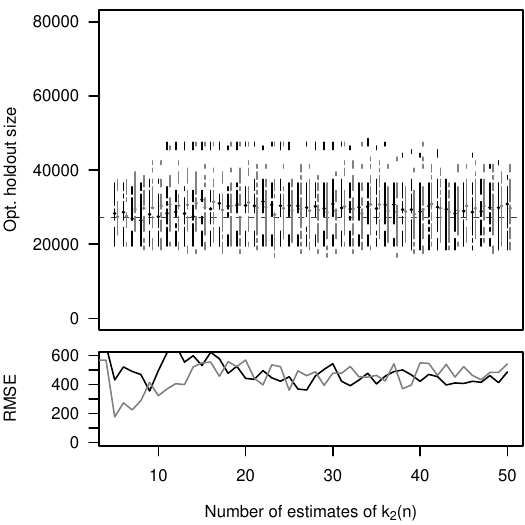}
  \end{subfigure}  
    \begin{subfigure}{0.38\textwidth}
   \includegraphics[width=\textwidth,clip,trim=0.5cm 0cm 0cm 0cm]
      {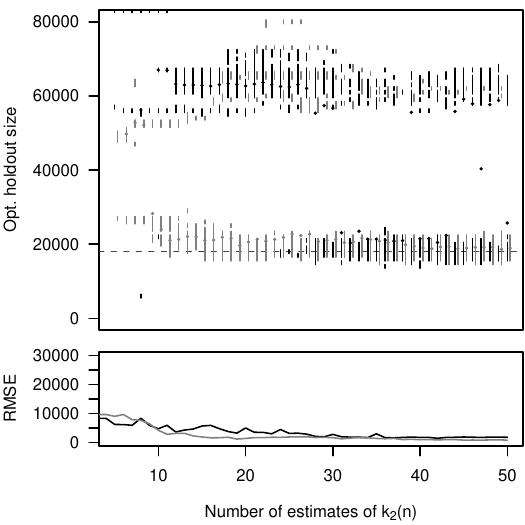}
  \end{subfigure}  
  \end{center}
  \end{subfigure}
  \caption{Convergence rates with parametric (top) and emulation (bottom) algorithms, using either a random (black) or greedy (gray) methods to select the next value, $\mathbf{n}$, with parametric assumptions satisfied (left) or unsatisfied (right). Simulations were run for 200 datasets from each underlying model. In larger panels, horizontal lines show true optimal holdout set (OHS) size; the OHS results from all simulation runs are discretised on a 1000-resolution grid, with vertical lines indicating OHS values that occurred in at least 2.5\% of simulations (i.e., 5 occurrences). Smaller panels show root mean-square error between total costs from simulations and minimal total cost under random/greedy methods. Note variable axis scaling under the two models. }
  \label{fig:nextpoint_comparison}
\end{figure}

\section{{Application to ASPRE}}
\label{sec:aspre}

%{

%}
%}

We now return to %
{
our main motivating example. 
We are now in a position to address our main aim of developing an updating strategy for the ASPRE risk score. All code can be found in \url{https://github.com/jamesliley/OptHoldoutSize_pipelines}.
}

Supposing ASPRE is to be refitted every five years, the intervention set should include all individuals in the subsequent years before the model is refitted, and all individuals not used in the next refitting procedure. Suppose we refit ASPRE for use in a population of 5 million individuals, from which we have approximately 80,000 new pregnancies per year. 
The incidence of pregnancy per year is now
$(8 \times 10^4)/(5 \times 10^6) = 1/125$
so we have
$N \approx 5 \times 8 \times 10^4 = 400000$ (SE $\approx 1500$). We must now estimate $k_1$ and $k_2(\cdot)$ from published data. Although this method is not especially generalisable, $k_1$ and $k_2(\cdot)$ will generally be more easily estimable given raw data, which is not publicly available.
%\begin{align}
%&\approx 1500
%\end{align}

We presume a simple clinical action in which a fixed proportion $\pi$ of individuals at the highest assessed \textsc{pre} risk are treated with aspirin. We assume $\pi=10\% \approx 2707/25797$, the proportion of individuals assigned to the treatment group in~\cite{rolnik17b} due to having an estimated risk of \textsc{pre} $>1\%$. We assume that if untreated with aspirin, a {proportion $\pi_0$} of individuals designated to be `low-risk' (lowest 90\%) will develop \textsc{pre}, as will a proportion $\pi_1$ of individuals designated high-risk. 

To estimate $\pi_0$, $\pi_1$ and ultimately $k_1$, we considered the study reported in~\cite{ogorman17} assessing sensitivity and specificity of NICE and ACOG guidelines in assessing \textsc{pre} risk. In this study, 8775 indivduals were assessed, amongst which 239 developed \textsc{pre}, for an overall incidence of $239/8875 \approx 0.027$. We estimated the performance of a `baseline' estimator of \textsc{pre} risk (that is, in the absence of any ASPRE score) by linearly interpolating the points corresponding to `ACOG aspirin', `NICE' and `ACOG' on ROC curves in Figure 1 of that paper. On this basis, a baseline estimator identifying the 10\% of individuals at highest \textsc{pre} risk (approximately 800) would correspond to the point $(x,y)$ on the interpolated ROC curve with 
$239x + (8775-239)y=0.1 \times 8875$ % \nonumber
%\end{equation}
%
which occurs at roughly a 20\% detection (true positive) rate and a 10\% false positive rate, close to that of the NICE guidelines. 

Since few women in the study were treated with aspirin, we assume that \textsc{pre} rates in the highest-10\% and lowest-10\% risk groups assessed by baseline risk (NICE) are untreated risk (that is, if not treated with aspirin). At the inferred true and false positive rates, we would expect a \textsc{pre} rate $\pi_0$ amongst the 10\% of women designated highest-risk by the NICE guidelines and $\pi_1$ amongst the 90\% designated lower risk, where 
\begin{align}
\pi_0 &\approx \frac{(1-\textrm{TPR})\times\textrm{(Num. \textsc{pre})}}{\textrm{Num. negative}} = \frac{0.8 \times 239}{0.9 \times 8875} \approx 0.024 \, ,
\nonumber \\ %\hspace{10pt} \textrm{and} \hspace{10pt}
\pi_1 &\approx \frac{\textrm{TPR}\times\textrm{(Num. \textsc{pre})}}{\textrm{Num. positive}} = \frac{0.2 \times 239}{0.1 \times 8875} \approx 0.054 \, ,\nonumber
\end{align}
with standard errors 
$SE(\pi_1) \approx % \frac{\sqrt{\pi_1(1-\pi_1)}}{8875 \times 0.1} \approx 
0.0076$ and $SE(\pi_0) \approx 
0.0017$. 
We denote by $\alpha$ the relative reduction in \textsc{pre} risk with aspirin treatment. Aspirin reduces \textsc{pre} risk to approximately $63\%$ (SE 0.09) of untreated risk ~\citep{rolnik17} so we take $\alpha=1-0.63 = 0.37$. Now, treating errors in $\pi_0$, $\pi_1$ and $\alpha$ as pairwise independent, we have 
\begin{equation}
k_1 = \pi_0(1-\pi) + \pi_1 \pi \alpha \approx 0.0235 \, ,
\end{equation}
with $SE(k_1) = SE\left(\pi_0 (1-\pi) + \pi_1 \pi \alpha\right) \approx 0.0016$. We estimate the population prevalence $\pi_\text{\textsc{pre}}$ of untreated \textsc{pre} as the frequency observed in the original ASPRE data: $\pi_\text{\textsc{pre}}=1426/57974 \approx 2.4\%$. Note that, although this is approximately equal to $\pi_0$, they are different quantities: $\pi_0$ is the population frequency of \textsc{pre} amongst individuals at the lowest 90\% risk by NICE guidelines. 

Denoting $\pi_1(n)$ as the untreated risk of \textsc{pre} in the top 10\% of individuals according to an ASPRE score trained on $n$ individuals (and $\pi_0(n)$ correspondingly), we note that it is equal to the sensitivity (or TPR) of the risk score at the level where proportion $\pi$ of individuals are designated high-risk. Thus for any training set size $n$, we have 
$\pi_0(n)=(\pi_\text{\textsc{pre}}-\pi \pi_1(n))/(1-\pi)$
%\end{equation}
%
so the average cost to an individual in the intervention set may be expressed in terms of $\pi_1(n)$:
\begin{equation}
k_2(n) = \pi_0(n)(1-\pi) + \pi_1(n)\pi\alpha = \pi_\text{\textsc{pre}}-\pi \pi_1(n) (1-\alpha) \, . \nonumber
\end{equation}
We denote `cost' as simply the number of cases of \textsc{pre} in a population, so total expected cost per individual under `baseline' treatment (clinical actions without the aid of a risk model) is
\begin{equation}
k_1 = \pi_0(1-\pi) + \pi_1 \pi \alpha \approx 0.02 \, ,
\end{equation}
with standard error approximately 0.001. Note that this is not equal to the untreated \textsc{pre} risk in the population, since some proportion of individuals are treated pre-emptively.

The data used to fit the initial ASPRE model could be used to estimate $\pi_1(n)$ and hence $k_2(n)$ for potential model updates. %: at the stage at which the ASPRE score was first fitted, the optimal `holdout set' size is as large as possible.  
We do not have access to this dataset, but demonstrate estimation of a learning curve on synthetic data designed to resemble it. In this case, $k_2(n)$ is easy and fast to estimate, and is well-approximated by a power law, so we would favour use of Algorithm~\ref{alg:parametric}. In order to mimic a real example where such estimation is time consuming or costly we use both algorithms and restrict ourselves to use only $|\mathbf{n}|=120$ values of $n$, determined using either Algorithm~\ref{alg:parametric} or~\ref{alg:emulation}. For both algorithms, we assumed a power-law form $k_2\{n;\theta=(a,b,c)\}=a n^{-b}+c$. 

{Using the parametric algorithm, we found an OHS of 12684 (90\% CI 10811-14556), with minimum cost (expected cases over five years) of 8177. Using the emulation algorithm, we found an OHS of 13313 with an expected cost of 8164, with holdout sizes of 9210-17619 having a probability $>0.1$ of cost $<8164$.} Figure~\ref{fig:aspre} shows estimated cost functions, OHSs, and error using the two algorithms.
{
From our parametric approximation of $k_2(n)$, we estimate that if we use the suggested holdout set, the cost to a sample in the subsequent intervention set (that is, the PRE risk for a given pregnancy) is $k_2(12684)=2.034\%$. This compares to a risk of $k_1=2.354\%$ in the current holdout set, and an overall risk of $8177/(5 \times 10^6)=2.044\%$ across both sets. 
}

{
We illustrate the practical mean risk per patient in Figure~\ref{fig:risk_per_patient}. It is evident that the risk for patients in the holdout set ($k_1$, equivalent to use of a risk score fitted to 100 individuals), is only slightly higher than the risk to individuals in the intervention set, which is very close to the lowest possible risk and the average risk to all patients under the holdout set updating strategy (equivalent to a risk score fitted to around 6000 individuals).
}

\begin{figure}[h] %[!h]
\centering
\begin{subfigure}{0.24\textwidth}
\includegraphics[width=\textwidth,clip,trim=0cm 0.5cm 0cm 2cm]
    {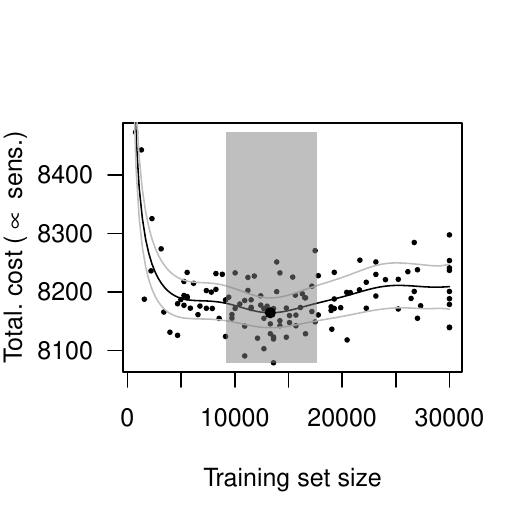}
\caption{}
\label{fig:aspre_parametric}
\end{subfigure}
\begin{subfigure}{0.24\textwidth}
\includegraphics[width=\textwidth,clip,trim=0cm 0.5cm 0cm 2cm]
    {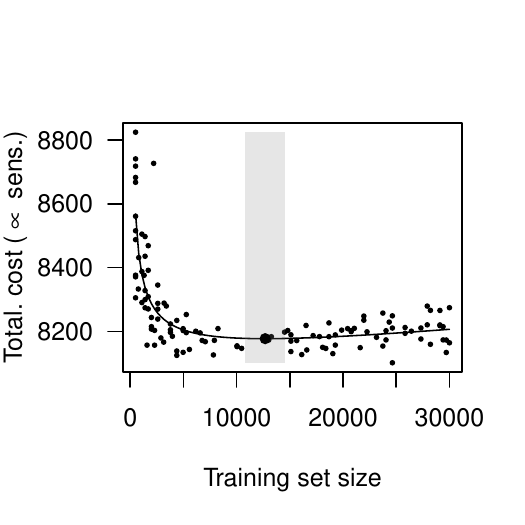}
\caption{}
\label{fig:aspre_emulation}
\end{subfigure}
\begin{subfigure}{0.24\textwidth}
\includegraphics[width=\textwidth,clip,trim=0cm 0.5cm 0cm 2cm]
    {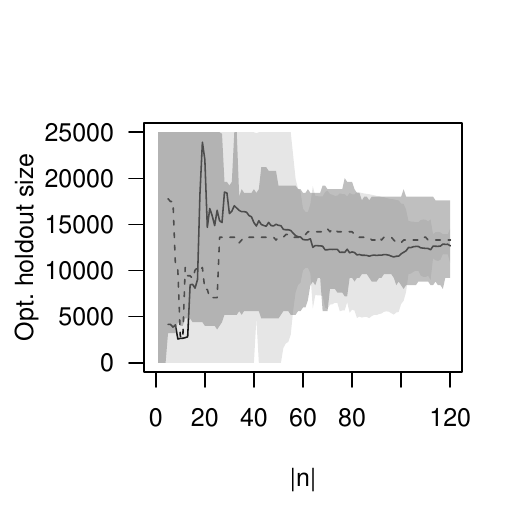}
\caption{}
\label{fig:aspre_track}
\end{subfigure}
\begin{subfigure}{0.24\textwidth}
\includegraphics[width=\textwidth,clip,trim=0cm 0.5cm 0cm 2cm]
    {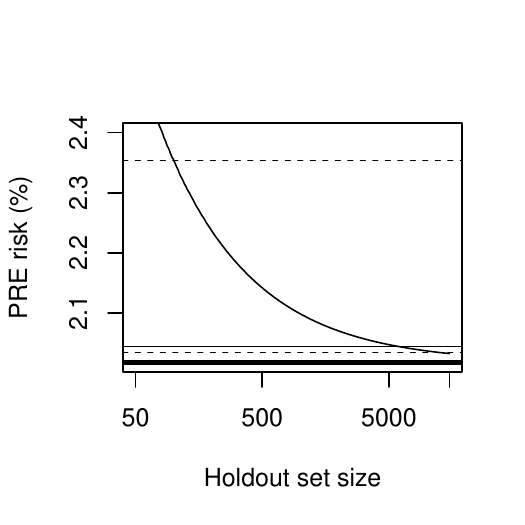}
\caption{}
\label{fig:risk_per_patient}
\end{subfigure}
\caption{
Estimation of cost functions (black lines), OHS (black dots), error using parametric (middle left) and emulation (leftmost; outer lines $\mu(n) + 3\sqrt{\Psi(n)}$) algorithms, change in estimated OHS and error with number of sample points $|\mathbf{n}|$ (middle right; solid lines: parametric, dashed lines: emulation), and overview of PRE risk per patient in final updating strategy (rightmost; solid black curve $k_2(n)$; heavy black line minimum possible risk $\min_n k_2(n)$; upper dashed line risk in holdout set; lower dashed line risk in intervention set; horizontal narrow solid line overall average risk). Note that the `best' points (black dots) to optimize parametric estimation are spread-out to estimate $\theta$ well, but for emulation they are clustered for accurate local approximation. Error measures for OHS in parametric and emulation algorithms (red/blue shaded respectively) have different meanings and are not comparable. 
}
\label{fig:aspre}
\end{figure}

{In summary, we recommend that the ASPRE score use a holdout set of around 12,000 pregnancies in order to train an updated model, under the use assumptions outlined above. In keeping with our discussions in Section~\ref{sec:holdout_set_motivation} and \ref{sec:cost_specification}, we recommend that these samples are held-out late in the five-year period during which a given iteration of the model is in use. For held-out samples, the decision of whether to prescribe aspirin should be based on the best estimate of medical practitioners. }

\section{Concluding remarks}

In this work we propose the use of a holdout set to safely update predictive models, and describe considerations in determining the optimal size of such a set. We establish theoretical properties of this optimal size under common conditions, and develop two algorithms for estimating it, evaluating their use in both a toy simulation and a real-life motivated simulation. The holdout set approach comprises a practical and simple approach to an important problem in practical applied statistical modelling and machine learning, which will be increasingly important as risk scores start to be used ever more routinely to prompt intervention in real-world applications. 

An appealing alternative for managing the effects of intervention without holdout sets is to attempt to explicitly infer parameters of an underlying causal structure~\citep{alaa18,sperrin19}. However, this approach cannot evade the difficulties we describe in Section~\ref{sec:holdout_set_motivation}: either we must be able to observe more detail (for instance, values of covariates after interventions which is often impractical) or make simplifying assumptions (such as absence of drift). In such relaxed settings, non-holdout-set options may allow a lower asymptotic and finite-sample cost than holdout set use. However, they will have higher asymptotic costs than holdout-set use in the more general setting considered here. A second potential non-holdout set option is to explicitly specify interventions which are to be made in response to risk scores, rather than leave them up to end-users~\citep{ochs19,liley21stacked}, but in typical complex settings in which we wish to use risk scores, we may wish to retain the autonomy of end users in decision-making. Indeed, prescriptive risk score based interventions would cause a model to fall under medical device regulation in the United Kingdom, so many risk scores there are developed for information only.

{
We believe that considerations for updating predictive models are under-studied in applied statistics. It is well-recognised that predictive models generally need to be updated~\cite{hippisley17,kansagara11,wallace14}, but often this is planned to be done by simply refitting a predictive model to observed covariates and outcome data, corresponding to what we term `na\"{i}ve updating'. We note that the problem with na\"{i}ve updating is not mentioned in the standard TRIPOD guideline~\citep{collins15}. In a separate paper detailing the updating of a predictive risk score for emergency admissions in Scotland~\citep{liley21medRxiv}, we recognised the shortcomings of na\"{i}ve updating, but in the absence of mature literature on safe updating methods or prior planning on managing updates, we were restricted in our choice of method. We proposed updating the risk score to the \emph{maximum} of a previous risk score and a refitted risk score. This option will tend to lead to \emph{overestimation} of risk, while avoiding \emph{underestimation} of risk, and was satisfactory for a single update, though untenable in the longer term (Supplement~\ref{supp_sec:best_of_two}).  
}

Our methods may be extended in several ways. We do not consider the possibility of combining information over training iterations, which could reduce the number of training samples needed for a given prediction error. {Nonetheless, retention of information is only useful up to a point: the main aim of using a holdout set is to correct for drift in $f_t$ between times $e-1$ and $e$, which is described by the difference $f_{e}(x)-f_{e-1}(x)$ and not assisted by even perfect knowledge of $f_{e-1}(x)$.} It is also possible that information from the intervention set could be used alongside the holdout set to partially infer the effect of interventions. We presume a setting in which a risk score is periodically updated, but continual or `online' updating is also used~\citep{delange21} and is also susceptible to intervention effects. A holdout set approach may also be usable in this setting. {An implicit assumption of our work is that drift in $\mu_t$, $f_t$ and $g_t$ are not influenced by our choice of strategy. Effectively, this means that the risk score affects the underlying system only through $g_t$, and does not affect the covariates or outcomes of samples for whom the risk score is not used. In complex settings such as medicine or finance, this assumption may be generally reasonable, but its relaxation is an important avenue for future research. For the ASPRE score, this corresponds to an expectation that treatment decisions made on the basis of the score will not causally influence the overall PRE incidence in the population (that is, $f_t$) or the characteristics of ultrasound scans or demographics (that is, $\mu_t$). 
}

Our simulations and theoretical findings show several non-obvious properties of the optimal holdout set size. We note that if the mean square error of the risk score decreases as $1/N$ (Assumption~\ref{asm:risk_score_convergence}), then the OHS increases as $N^{1/2}$, with the immediate consequence that the OHS is a vanishing proportion of $N$, for large enough $N$. Moreover, in practical settings, the true OHS is fairly small, with rapidly diminishing returns to increasing risk score accuracy (Section~\ref{sec:aspre}).
Interestingly, as demonstrated in Supplement~\ref{supp_sec:sim_example}, a more accurate risk score does not necessarily lead to lower OHS size.
Given Corollary~\ref{cor:linear_improvement} (and as seen in Figure~\ref{fig:aspre_track}), it is generally better to err on the higher end of the optimal holdout set size, since cost increases at most linearly, whereas it can increase faster for smaller holdout sets. 

We propose two algorithms for estimating an optimal size for a holdout set. The parametric method is simple and converges rapidly, but the use of a Gaussian process emulator requires fewer assumptions. We advise use of the emulator method if the risk score is fitted using complex methods which may not lead to readily parametrisable $k_2$. A reasonable option if parametrisability of $k_2$ is uncertain is to use both estimation algorithms, and favour the emulation method if results disagree.

We strongly suggest planning an updating strategy for a risk model \emph{before} it is deployed. This work illustrates one strategy in this direction and we hope stimulates both use of and extensions of such methods for safe predictive score updating.

\section{Acknowledgments}

The authors would like to thank the anonymous referees, an Associate Editor and the Editor for their constructive comments that improved the quality of this paper.

SH's contributions arose from an MSc dissertation for the MISCADA programme at Durham University.
We thank Catalina Vallejos, Sebastian Vollmer and Bilal Mateen for helpful discussion.

\section{Funding}

LJMA was partially supported by a Health Programme Fellowship at the Alan Turing Institute.
% Grants
JL and LJMA were partially supported by Wave 1 of The UKRI Strategic Priorities Fund under the EPSRC Grant EP/T001569/1, particularly the `Health' theme within that grant and The Alan Turing Institute; and by Health Data Research UK, an initiative funded by UKRI, Department of Health and Social Care (England), the devolved administrations, and leading medical research charities;
SRE was funded by the EPSRC doctoral training partnership at Durham University, grant reference EP/R513039/1.

\section{Index of supplementary material}

\begin{itemize}
\item \textbf{Supplement~\ref{supp_sec:general_notation}-\ref{supp_sec:pre_estimate}}

Contains proofs of Theorems \ref{thm:holdout_asymptotic}, \ref{thm:alternative_asymptotic}, \ref{thm:oracle_comparison}, \ref{thm:ohs_exists}, \ref{thm:ohs_exists_weak}, and \ref{thm:ohs_cor}, discussion of estimates of $k_2(n)$, details of algorithms, details of simulations, and details of the analysis of the ASPRE risk score.

\item \textbf{Supplement~\ref{supp_sec:supplementary_figures}}

Supplementary Figures

\item \textbf{Supplement S11}

R scripts to reproduce figures and other output. Available at \url{https://github.com/jamesliley/OptHoldoutSize_pipelines}

\end{itemize}

\end{bibunit}

\clearpage

%\maketitle
\clearpage

\begin{bibunit}[custom]

\title{Holdout sets for predictive model updating \\
Supplementary materials}
\maketitle

\setcounter{section}{0}
\setcounter{page}{1}
\renewcommand\thesection{S\arabic{section}}
\renewcommand\thetheorem{S\arabic{theorem}}
\renewcommand\thelemma{S\arabic{lemma}}
\renewcommand\thecorollary{S\arabic{corollary}}

\renewcommand\thefigure{S\arabic{section}.\arabic{subsection}.\arabic{figure}}

\textcolor{white}{\part{}} % sick of messing around trying to get this to work
\clearpage
\parttoc 

\clearpage

\section{General notation}
\label{supp_sec:general_notation}

This supplement requires several sets of notation, which will be introduced as needed. However, we note some commonalities. Throughout this document we will take $X$ to generically mean `covariates' and $Y$ to mean `outcomes'.  The subscript $t$ will be taken to mean `time' in a continuous sense, and the subscript $e$ to mean `epoch', referring to consecutive episodes of time. The superscript $h$ will correspond to the holdout set, and $i$ to the intervention set. The number $N$ will refer to the total number of samples on which a risk score may be trained or used during an epoch, and $n$ to denote a holdout set size, usually taken to be variable. We denote the standard normal PDF and CDF by $\phi(\cdot)$, $\Phi(\cdot)$ respectively, the Bernoulli distribution with parameter $p$ as $\textrm{Bern}(p)$, and the Poisson distribution with parameter $\lambda$ as $\textrm{Pois}(\lambda)$.

\clearpage

\section{Proofs of Theorems \ref{thm:holdout_asymptotic}, \ref{thm:alternative_asymptotic} and \ref{thm:oracle_comparison}}
\label{supp_sec:proofs_holdout_dominance}

\subsection{Theorem~\ref{thm:holdout_asymptotic}}

\begin{reptheorem}{thm:holdout_asymptotic}
Suppose we use a holdout set with size $n_\star=\Theta(N^a)$, with $0<a<1$, an $s<1$ of size $s=\Theta(N^{a + \epsilon-1})$ for some $\epsilon$ with $0<\epsilon<1-a$, and an update frequency $\delta \leq 1$ which  may vary with $N$.
Under assumptions~\ref{asm:ft_lipschitz}, \ref{asm:risk_score_convergence}, \ref{asm:usefulness} and \ref{asm:pop_size},
we have
\begin{equation}
\mathbb{E}\left\{C_{(h)}[0,T]\right\} = \delta N T k_2 (2\alpha_1 + \alpha_1^2) +  \delta^{-1} O(N^a) + O(N^{a + \epsilon}) +  O(N^{1-a}) \, .\nonumber 
\end{equation}

\end{reptheorem}

\begin{proof}
We begin by establishing an inequality on the quantity
\begin{equation}
\mathbb{E}\{\xi_t^2(\rho_{f}^{n_*,e,s},f_t)\} \, ,
\end{equation}
where $t-e<\delta$ and the expectation is over the data used to fit $\rho_f^{n,e,s}$. 

As in assumption~\ref{asm:risk_score_convergence}, denote
$F(x)=\mathbb{E}_{t \sim U(e-s,e)} \{f_t(x)\}$ and note that, by assumption \ref{asm:ft_lipschitz}, we have 
\begin{equation}
f_e(x) - \alpha_1 s \leq F(x) \leq f_e(x) + \alpha_1 s \, . \label{eq:fe_bounds_lipschitz}
\end{equation}
To establish an upper bound, we use {(in order) assumptions~\ref{asm:ft_lipschitz}, 
inequality~\ref{eq:fe_bounds_lipschitz}, and assumption \ref{asm:risk_score_convergence}.} Taking these, along with noting that $|\rho(x)-f(x)|\leq 1$ for any $x,\rho$, we have:
\begin{align}
\mathbb{E}\left\{\xi_t^2(\rho_f^{n_*,e,s},f_t)\right\} &= \mathbb{E}\left\{\int \left(\rho_f^{n_*,e,s}(x) - f_t(x)\right)^2 d \mu_t\right\} \nonumber \\
&\leq \mathbb{E}\left\{\int \left(\left|\rho_f^{n_*,e,s}(x) - f_e(x)\right| + |f_t(x)-f_e(x)|\right)^2 d \mu_t\right\} \nonumber \\
&\leq \mathbb{E}\left\{\xi_t^2\left(\rho_f^{n_*,e,s},f_e\right)  + 2 \int \left|f_t(x) - f_e(x)\right| d \mu_t + \xi_t^2(f_t,f_e) \right\} \nonumber \\
&\leq \mathbb{E}\left\{\int \left(\rho_f^{n_*,e,s}(x) - f_e(x)\right)^2 d \mu_t  \right\} + 2\alpha_1 (t-e) + \alpha_1^2(t-e)^2 \nonumber \\
&\leq \mathbb{E}\left\{\int \left(\left|\rho_f^{n_*,e,s}(x) -F(x)\right| + \left|F(x) - f_e(x)\right|\right)^2 d \mu_t  \right\}  + 2\delta\alpha_1 + \delta^2\alpha_1^2 \nonumber \\
&\leq \mathbb{E}\left\{\int \left(\left|\rho_f^{n_*,e,s}(x) -F(x)\right| + \alpha_1 s \right)^2 d \mu_t  \right\}  + \delta(2\alpha_1 + \alpha_1^2) \nonumber \\
&\leq \mathbb{E}\left\{\xi_t^2 \left(\rho_f^{n_*,e,s},F\right) \right\} + (2\alpha_1 + \alpha_1^2)(\delta + s) \nonumber \\
&= O(n_*^{-1}) + (2\alpha_1 + \alpha_1^2)(\delta+s) \, . \label{eq:err_cost}
\end{align}
In any period $(e-\delta,e]$, the probability $p_*$ of at least $n_*$ samples being encountered in $(e-s,e)$, given $\frac{n_*}{Ns} = \Theta(N^{-\epsilon}) \to 0$ and $Ns = \Theta(N^{1-a + \epsilon}) \to \infty$, satisfies the (weak) condition:
\begin{equation}
p_* = P\left(\textrm{Pois}(Ns) \geq n_*\right) = 1-O(N^{-2}) \, .\label{eq:poisson} \end{equation}

Consider costs accrued in the period $(e-\delta,e]$ with $e>\delta$, under the holdout strategy. We encounter some total number of samples $n\sim \textrm{Pois}(N\delta)$. We assign at most $n_*$ samples to the holdout set, with a total cost of at most $k_1 n_*$ (as a consequence of Assumption A5). The remaining samples each accrue a cost proportional to~\ref{eq:err_cost}, as long as at least $n_*$ samples were observed in $(e-\delta-s,e]$. We thus have
\begin{align}
\mathbb{E}\left\{C_{(h)}[e-\delta,e]\right\} &\leq \underbrace{k_1 n_*}_{\substack{\text{Cost for held} \\ \text{-out samples}}} + \mathbb{E}\{n\} \bigg( \underbrace{(1-p_*) k_1}_{\substack{\text{Cost if $<n_*$}\\ \text{samples in} \\ \text{$(e-1-s,e]$}}} +  \underbrace{p_* \mathbb{E} \left\{ k_2\xi_t^2(\rho_f^{n_*,e,s},f_t)\right\}}_{\substack{\text{Cost for non-held-} \\ \text{out samples otherwise}}}\bigg) \label{eq:holdout_cost_expansion} \\
&\leq k_1 n_* + N\delta \left(O(N^{-2}) +  k_2\mathbb{E}\left\{\xi_t^2(\rho_f^{n_*,e,s},f_t)\right\}\right) \nonumber \\
&\leq k_1 n_* + O(\delta N^{-1}) +  \delta N O(n_*^{-1}) + \delta N k_2 (2\alpha_1 + \alpha_1^2)(\delta+s) \nonumber \\
&\leq \Theta(N^{a}) + \delta O(N^{1-a}) + \delta \Theta(Ns) + \delta^2 N k_2 (2\alpha_1 + \alpha_1^2)) \nonumber \\
&= O(N^a) + \delta O(N^{a + \epsilon}) + \delta O(N^{1-a}) + \delta^2 N k_2 (2\alpha_1 + \alpha_1^2)) \, . \label{eq:holdout_asymptotic_cost}
\end{align}
The total cost accrued over the $T/\delta$ total epochs is thus
\begin{align}
\mathbb{E}\left\{C_{(h)}[0,T]\right\} &= \frac{1}{\delta} \mathbb{E}\left\{C_{(h)}[e-\delta,e]\right\}\nonumber \\
&= \delta^{-1} O(N^a) + O(N^{a + \epsilon}) +  O(N^{1-a}) + \delta N T k_2 (2\alpha_1 + \alpha_1^2))  \label{eq:holdout_asymptotic_cost_total} \\
&= \delta^{-1} O(N^a) + O(N^{a + \epsilon}) +  O(N^{1-a}) + \delta O(N)  \, ,
\end{align}
where the bounds in $O(\cdot)$ depend only on $k_1,k_2,\alpha_1,T$. {We note that this result holds for \emph{any} fixed $T$ rather than only the $T$ in assumption~\ref{asm:f0_difference}.}
%\end{equation}
%

\end{proof}

\clearpage

\subsection{Theorem~\ref{thm:alternative_asymptotic}}

\begin{reptheorem}{thm:alternative_asymptotic}
Suppose we choose $s$ such that $s \to 0$ as $N \to \infty$. Under assumptions~\ref{asm:f0_difference}-\ref{asm:pop_size}, for 
sufficiently small $\delta$, we have for $(\textrm{strat}) \in \{(0),(n),(a)\}$:
\begin{equation}
\mathbb{E}\left\{C_{\textrm{(\textrm{strat})}}[0,T]\right\} = \Omega(N) \, .
\end{equation}
\end{reptheorem}

\begin{proof}

We will begin with a simple lemma which we will use repeatedly:
\begin{lemma}
\label{lem:lt_difference}
If, for all $x$, we have $0\leq f(x),g(x),h(x) \leq 1$, then
\begin{equation}
\xi_t^2(f,g) \geq \xi_t^2(f,h) + \xi_t^2(h,g) -2\sqrt{\xi_t^2(h,g)} \, .\nonumber
\end{equation}

\end{lemma}

\begin{proof}
We have
\begin{align}
\xi_t^2(f,g) &= \int \left(f(x)-g(x)\right) d\mu_t \nonumber \\
&\geq \int \left(|f(x)-h(x)|-|h(x)-g(x)|\right) d\mu_t \nonumber \\
&\geq \int \left(f(x)-h(x)\right)^2 d\mu_t + \int \left(h(x)-g(x)\right)^2 d\mu_t - 2\int |f(x)-h(x)||h(x)-g(x)| d\mu_t\nonumber \\
&\geq \xi_t^2(f,h) + \xi_t^2(h,g) - 2\int|h(x)-g(x)| d\mu_t\label{eq:step1} \\
&\geq \xi_t^2(f,h) + \xi_t^2(h,g) - 2\sqrt{\int \left(h(x)-g(x)\right)^2 d\mu_t} \label{eq:step2} \\
&\geq \xi_t^2(f,h) + \xi_t^2(g,h) - 2\sqrt{\xi_t^2(h,g)} \, , \nonumber
\end{align}
using the fact that $|f(x)-h(x)| \leq 1$ at step~\ref{eq:step1} and the Cauchy-Schwarz inequality at step~\ref{eq:step2}.
\end{proof}

We secondly prove a short lemma to show that we need not consider the no-update strategy separately from the alternative strategy:

\begin{lemma}
\label{lem:0isa}
Strategy $(0)$ is a special case of strategy $(a)$.

\end{lemma}

\begin{proof}

We note that from assumption~\ref{asm:f0_difference} {we have
\begin{equation}
\mathbb{E}_{t \sim U[0,T]}\{\xi_t^2(f_0,f_t)\} = \int_0^T\frac{1}{T} \xi_t^2(f_0,f_t)dt > 0 \, .\nonumber % = \frac{\Omega(T)}{T} = \Omega(1)
\end{equation}
}
As $N \to \infty$, since $s \to 0$, we have from assumption~\ref{asm:risk_score_convergence}:
\begin{equation}
\mathbb{E}\left\{\xi_0^2\left(\rho_f^{N,0,s},f_0\right)\right\} \to 0 \, ,
\end{equation}
so %for sufficiently large fixed $T$  
we have (using Lemma~\ref{lem:lt_difference})
\begin{align}
\lim_{N \to \infty} &\left( \mathbb{E}_{t \sim U[0,T]}\left\{\xi_t^2(\rho_f^{N,0,s},f_t)\right\} \right) %= \lim_{N \to \infty} \left( \mathbb{E}_{t \sim U[0,T]}\left\{\int \left(\rho_f^{N,0,s}(x)-f_t(x)\right)^2 d\mu_t \right\}\right) \nonumber \\
\geq \lim_{N \to \infty} \left( \mathbb{E}_{t \sim U[0,T]}\left\{\xi_t^2(f_0,f_t) - 2\sqrt{\xi_t^2(\rho_f^{N,0,s},f_0)} \right\} \right) \nonumber \\
&= \mathbb{E}_{t \sim U[0,T]}\{\xi_t^2(f_0,f_t)\} \nonumber \\
&> 0 \, ,
\end{align}
where the expectation is also over data used to fit $\rho_f^{N,0,s}$. Hence the no-update strategy is in the `alternative' class of strategies, with $\rho_t=\rho_f^{N,0,s}$ for all $t$. 

\end{proof}

Recalling our definition of $b$ as
\begin{equation}
 \lim_{N \to \infty} \left(\mathbb{E}_{t \sim U[0,T], D}\left\{\xi_t^2(\rho_t,f_t)\right\}\right) = b > 0 \, , \nonumber 
\end{equation}
we simply choose any $\epsilon$ with $0<\epsilon<b$, so for large enough $N$, we have
\begin{align}
\mathbb{E}\left\{C_{(a)}[0,T]\right\} &= \mathbb{E}\{\textrm{Pois}(NT)\}\cdot k_2 \mathbb{E}_{t \sim U(0,T),D} \left\{\mathbb{E}\left\{\xi_t^2(\rho_t,f_t)\right\}\right\} \label{eq:alternative_expansion_1} \\
&\geq k_2 N T (b-\epsilon) \nonumber \\ 
&= \Omega(N) \label{eq:series_a_omega_n} 
\end{align}
as required, establishing the theorem for $\textrm{strat}=(0)$ and $\textrm{strat}=(a)$.

We now establish the rate of growth of the costs of the naive update strategy.
The essential idea is
\begin{enumerate}
\item We consider two consecutive time periods $[(i-1)\delta,i\delta)$ and $[i\delta,(i+1)\delta)$, with $i\geq 2$.
\item We introduce an `index' value $\Delta_0$ which is the similarity of the risk score $\rho_g^{N,(i-1)\delta,s}$ used during period $[(i-1)\delta,i\delta)$ to the function $f_{(i-1)\delta}$ governing risk at the start of that period. 
\item Given assumption~\ref{asm:ft_lipschitz}, we establish that $f_{(i-1)\delta-s}$ is not very different from $f_t$ with $t \in [(i-1)\delta,i\delta)$, so $\Delta_0$ is similar to the difference between the risk score and the function $f_t$ throughout the period $[(i-1)\delta,i\delta)$. We conclude that $\Delta_0$ governs the total cost accrued during time period $[(i-1)\delta,i\delta)$, in that the larger $\Delta_0$, the larger the total cost.
\item We then consider the similarity between $g$ and $f$ during the period $[i\delta-s,i\delta)$ during which the risk score $\rho_g^{N,i\delta,s}$ is fitted for use during period $[i\delta,(i+1)\delta)$.  Since the `cost' (the difference between $f_t$ and $g_t$) and `inaccuracy' (the difference between the risk score and $f_t$) are related by assumptions~\ref{asm:costs}, \ref{asm:usefulness}, the difference between $f_t$ and $g_t$ is also governed by $\Delta_0$, with a larger $\Delta_0$ corresponding to a \emph{smaller} difference. 
\item Since $\rho_g^{N,i\delta,s}$ (the risk score for use in time period $[i\delta, (i+1)\delta]$ is fitted to $g_t$ with $t \in [i\delta-s,i\delta)$, the similarity between $f_t$ and $g_t$ in this period is also the similarity between $f_t$ and the new risk score. We establish that $f_t$ is similar in time period $[i\delta-s,i\delta]$ and in time period $[i\delta,(i+1)\delta]$, so the difference between $f_t$ and $g_t$, and hence the difference between $f_t$ and the risk score is largely conserved.  Since a small $\Delta_0$ means a large difference between $f_t$ and $g_t$ for $t \in [i\delta-s,i\delta]$, it means a large difference between the risk score and $f_t$ for $t \in [i\delta, (i+1)\delta]$.
\item Thus a small $\Delta_0$ means a large cost in time period $[i\delta, (i+1)\delta]$ and a large $\Delta_0$ means a large cost in $[(i-1)\delta, i\delta]$. We show that there is a non-negligible cost accrued overall. When summed across all such time periods, this results in a $\Omega(N)$ contribution to overall cost.
\end{enumerate}
When using the naive update strategy, the costs during the time period $[i\delta,(i+1)\delta)$ depend on the similarity between $f_t$ and $g_t$ during the time period $[(i-1)\delta,i\delta)$. 
The risk score used during the time period $[(i-1)\delta,i\delta)$ under the naive update strategy is $\rho_g^{N,(i-1)\delta,s}$. We define
\begin{equation}
\Delta_0 \triangleq \xi_{(i-1)\delta - s}^2\left(\rho_g^{N,(i-1)\delta,s},f_{(i-1)\delta}\right) \, . \nonumber 
\end{equation}
Note that this is not an expectation; we will show a bound on the accrued costs which does not depend on $\Delta_0$. {Denote by $\alpha_2$ the Lipschitz constant of $\mu_t$ in assumption~\ref{asm:d_drift}. }
Given assumption~\ref{asm:ft_lipschitz}, we have, for $t \in [(i-1)\delta,i\delta)$ (using a tighter bound than Lemma~\ref{lem:lt_difference}):
\begin{align}
\xi_t^2 &\left(\rho_g^{N,(i-1)\delta,s},f_t\right) = \int \left(\rho_g^{N,(i-1)\delta,s}(x)-f_t(x)\right)^2 d\mu_t  \nonumber \\
&\leq \int \left(\left|\rho_g^{N,(i-1)\delta,s}(x)-f_{(i-1)\delta}(x)\right|+\left|f_{(i-1)\delta}(x)-f_t(x)\right|\right)^2 d\mu_t  \nonumber \\
&\leq \xi_t^2\left(\rho_g^{N,(i-1)\delta,s},f_{(i-1)\delta}\right) + 2 \int \left|f_{(i-1)\delta}(x)-f_t(x)\right| d\mu_t \nonumber \\
&\phantom{\geq} + \int \left(f_{(i-1)\delta}(x)-f_t(x)\right)^2 d\mu_t \nonumber \\
&\leq \xi_{(i-1)\delta-s}^2\left(\rho_g^{N,(i-1)\delta,s},f_{(i-1)\delta}\right) + \alpha_2(t - ((i-1)\delta-s)) \nonumber \\
&\phantom{\geq} + 2\alpha_1(t-(i-1)\delta) +\alpha_1^2 \delta^2 &&\text{Asm. \ref{asm:ft_lipschitz},\ref{asm:d_drift}} \nonumber \\
&\leq \Delta_0 + (\alpha_2+2\alpha_1)\delta + \alpha_1^2 \delta^2 +\alpha_2 s \nonumber\\
&= \Delta_0 + m_s\, . \label{eq:xit_upper_bound}
\end{align}
%\end{equation}
%
denoting, for brevity,
\begin{equation}
m_s = (\alpha_2+2\alpha_1)\delta + \alpha_1^2 \delta^2 +\alpha_2 s \, .
\end{equation}
Note that by choosing a large enough $N$ (and hence sufficiently small $s$) and a sufficiently small $\delta$ we  may ensure $m_s$ is arbitrarily small.

Using similar arguments, we have
\begin{align}
\xi_t^2 &\left(\rho_g^{N,(i-1)\delta,s},f_t\right) = \int \left(\rho_g^{N,(i-1)\delta,s}(x)-f_t(x)\right)^2 d\mu_t  \nonumber \\
&\geq \int \left(\left|\rho_g^{N,(i-1)\delta,s}(x)-f_{(i-1)\delta}(x)\right|-\left|f_{(i-1)\delta}(x)-f_t(x)\right|\right)^2 d\mu_t  \nonumber \\
&\geq \xi_t^2\left(\rho_g^{N,(i-1)\delta,s},f_{(i-1)\delta}\right) -2 \int \left|f_{(i-1)\delta}(x)-f_t(x)\right| d\mu_t \nonumber \\
&\geq \xi_{(i-1)\delta-s}^2\left(\rho_g^{N,(i-1)\delta,s},f_{(i-1)\delta}\right) - \alpha_2(t - ((i-1)\delta-s)) \nonumber \\
&\phantom{\geq} - 2\alpha_1(t-(i-1)\delta) &&\text{Asms. \ref{asm:ft_lipschitz},\ref{asm:d_drift}}\nonumber \\
&= \Delta_0 - (\alpha_2 + 2\alpha_1)(t-(i-1)\delta) - \alpha_2 s \nonumber \\
&\geq \Delta_0  - \delta(\alpha_2 + 2\alpha_1) - \alpha_2 s \nonumber \\
&\geq \Delta_0 - m_s \, . \label{eq:lt_delta0}
\end{align}
For any $t$ in the time period $[i\delta-s,i\delta)$, we have, by assumptions~\ref{asm:usefulness} and \ref{asm:costs}:
\begin{align}
k_1 - \mathbb{E}_{X \sim \mu_t} \left\{f_t(X)-g_t(X)\right\} &=  k_2 \xi_t^2\left(\rho_g^{N,(e-1)\delta,s},f_t\right) \, . \label{eq:cost_two_ways}
\end{align}
We now consider two cases. 

\begin{case} 
$\mathbb{E}_{X \sim \mu_{\tau}} \left\{f_{\tau}(X)-g_{\tau}(X)\right\}<0$ for some $\tau \in [i\delta-s,i\delta)$
\end{case}

Conceptually, in this case, use of the risk score $\rho_g^{N,(e-1)\delta,s}$ in fact makes the risk worse than would the use of no risk score at all at time $\tau$. In this case, we have, by assumption:
\begin{align}
\xi_{\tau}^2\left(\rho_g^{N,(e-1)\delta,s},f_{\tau}\right) > \frac{k_1}{k_2} \, , \nonumber
\end{align}
so for any $t \in [(i-1)\delta,i\delta)$:
\begin{align}
\xi_t^2\left(\rho_g^{N,(e-1)\delta,s},f_t\right) &\geq  \xi_t^2\left(\rho_g^{N,(e-1)\delta,s},f_{\tau}\right) - 2\sqrt{\xi_t^2\left(f_t,f_{\tau}\right)} \nonumber \\
&\geq \xi_{\tau}^2\left(\rho_g^{N,(e-1)\delta,s},f_{\tau}\right) - \alpha_2|t-\tau| - 2\alpha_1|t-\tau| \nonumber \\
&\geq \frac{k_1}{k_2} - (\alpha_2 + 2\alpha_1)\delta \, , \nonumber
\end{align}
and the total expected cost accrued over the time period $[(i-1)\delta,(i+1)\delta]$ (during which we encounter $\textrm{Pois}(2N\delta)$ samples) is:
\begin{align}
\mathbb{E}\left\{ C_{(n)} [(i-1)\delta,(i+1)\delta]\right\} &\geq \mathbb{E}\left\{ C_{(n)} [(i-1)\delta,i\delta]\right\} \label{eq:cost_expansion_naive_1} \\
&= \mathbb{E}\left\{\textrm{Pois}(N\delta)\right\} k_2 \mathbb{E}_{t \sim U((i-1)\delta,i\delta)}\left\{\xi_t^2\left(\rho_g^{N,(e-1)\delta,s},f_t\right)\right\} \nonumber \\
&\geq  N\delta \left(k_1 - k_2(\alpha_2 + 2\alpha_1)\delta\right) \, , \nonumber
\end{align}
and for any $\epsilon$ we may choose $\delta$ dependent only on $\alpha_2,\alpha_1,k_2$ sufficiently small that
\begin{align}
\mathbb{E}\left\{ C_{(n)} [(i-1)\delta,(i+1)\delta]\right\} &\geq   N\delta \left(k_1 - \epsilon \right) \, . \label{eq:cost_case1}
\end{align}

\begin{case}
$\mathbb{E}_{X \sim \mu_t} \left\{f_t(X)-g_t(X)\right\}>0$ for all $t \in [i\delta-s,i\delta)$.
\end{case}

The risk score used during the period $[i\delta,(i+1)\delta)$ is $\rho_g^{N,i\delta,s}$. Denote $G(x) = \mathbb{E}_{t\sim U(i\delta-s,i\delta)}\{g_t(x)\}$ (so $\rho_g^{N,i\delta,s}(x)$ estimates $G(x)$). We now have: %, using inequality~\ref{eq:delta_assumption}:
\begin{align}
\xi_{i\delta}^2\left(f_{i\delta},G\right) &= \int \left(f_{i\delta}(x)-G(X)\right)^2 d \mu_{i\delta} \nonumber \\
&\geq \left(\int \left|f_{i\delta}(x)-G(x)\right| d \mu_{i\delta}\right)^2\nonumber \\
&\geq \left(\int \left(f_{i\delta}(x)-G(x)\right) d \mu_{i\delta}\right)^2\nonumber \\
&= \left(\int \frac{1}{s}\int_{i\delta-s}^{i\delta} (f_{i\delta}(x)-g_t(x))dt \, d \mu_{i\delta}\right)^2\nonumber \\
&= \left(\frac{1}{s}\int_{i\delta-s}^{i\delta} \left(\int (f_t(x)-g_t(x))d \mu_{i\delta} + \int (f_{i\delta}(x)-f_{t}(x))d \mu_{i\delta}\right) dt \right)^2\nonumber \\
&\geq \left(\frac{1}{s}\int_{i\delta-s}^{i\delta} \max\left(0,\mathbb{E}_{X \sim \mu_{i\delta}} \left\{f_t(x)-g_t(x)\right\} - \alpha_1\delta\right) dt \right)^2 \nonumber &&\text{Asm. \ref{asm:ft_lipschitz}}\\
&\geq \left(\frac{1}{s}\int_{i\delta-s}^{i\delta} \max\left(0,\mathbb{E}_{X \sim \mu_t} \left\{f_t(x)-g_t(x)\right\} -\alpha_2 s - \alpha_1\delta\right) dt \right)^2 \nonumber &&\text{Asm. \ref{asm:d_drift}}\\
&\geq \left(\frac{1}{s}\int_{i\delta-s}^{i\delta} \max\left(0,k_1 - k_2 \xi_t^2\left(\rho_g^{N,(e-1)\delta,s},f_t\right) - \alpha_2 s - \alpha_1\delta\right) dt \right)^2 \nonumber &&\text{Asm. \ref{asm:costs}}\\
&\geq \max\left(0,k_1 - k_2 \left(\Delta_0 + (\alpha_2+2\alpha_1)\delta + \alpha_1^2 \delta^2 +\alpha_2 s
\right) - \alpha_2 s - \alpha_1\delta\right)^2 &&\text{Ineq. \ref{eq:xit_upper_bound}}\nonumber\\
&\geq \max\left(0,k_1 - k_2 \left(\Delta_0 + m_s
\right) - m_s \right)^2\, . \label{eq:xibound_case2}
\end{align}
We consider the expectation over the data used to fit $\rho_g^{N,i\delta,s}$ to note that for $t \in [i\delta,(i+1)\delta)$, using Lemma~\ref{lem:lt_difference}:
\begin{align}
\mathbb{E}\left\{\xi_{t}^2(\rho_g^{N,i\delta,s},f_t)\right\} &\geq \xi_t^2(G,f_t) - 2\sqrt{\xi_t^2\left(\rho_g^{N,i\delta,s},G\right)} \nonumber \\
&\geq \xi_t^2(G,f_t) - O\left(N^{-\frac{1}{2}}\right)\nonumber \\
&\geq \xi_t^2(G,f_{i\delta}) - 2\sqrt{\xi_t^2(f_t,f_{i\delta})} - O\left(N^{-\frac{1}{2}}\right)\nonumber \\
&\geq \xi_t^2(G,f_{i\delta}) - 2\alpha_1(t-i\delta) - O\left(N^{-\frac{1}{2}}\right) &&\text{Asm. \ref{asm:ft_lipschitz}}\nonumber \\
&\geq \xi_{i\delta}^2(G,f_{i\delta}) - \alpha_2(t-i\delta) - 2\alpha_1(t-i\delta) - O\left(N^{-\frac{1}{2}}\right) &&\text{Asm. \ref{asm:d_drift}} \nonumber \\
&\geq \xi_{i\delta}^2(G,f_{i\delta}) - \delta(\alpha_2+2\alpha_1) - O\left(N^{-\frac{1}{2}}\right) \, . \nonumber
\end{align}
The expected total cost accrued during the period $[(i-1)\delta,i\delta)$ (during which we encounter $\textrm{Pois}(N\delta)$ samples) is thus, from expression~\ref{eq:lt_delta0}:
\begin{align}
\mathbb{E}\left\{ C_{(n)}[(i-1)\delta,i\delta] \right\} &= \mathbb{E}\left\{\textrm{Pois}(N\delta)\right\} k_2 \mathbb{E}_{t\sim U((i-1)\delta,i\delta)}\left\{\xi_t^2\left(\rho_g^{N,(i-1)\delta,s},f_t\right) \right\}\label{eq:cost_expansion_naive_case2_1} \\
&\geq  N\delta k_2 \left(\Delta_0  - \delta(\alpha_2 + 2\alpha_1) - \alpha_2 s\right)\nonumber \\
&\geq  N\delta k_2 \left(\Delta_0  - m_s \right) \, . \nonumber
\end{align}
The total cost accrued during the period $[i\delta,(i+1)\delta)$ is, from expression~\ref{eq:xibound_case2}
\begin{align}
\mathbb{E}\left\{ C_{(n)} [i\delta,(i+1)\delta]\right\} &= \mathbb{E}\left\{\textrm{Pois}(N\delta)\right\} k_2 \mathbb{E}_{t\sim U(i\delta,(i+1)\delta)}\left\{\xi_t^2\left(\rho_g^{N,i\delta,s},f_t\right) \right\} \label{eq:cost_expansion_naive_case2_2}\\
&\geq  N\delta k_2 \left(k_1-k_2(\Delta_0 - m_s)-m_s\right)^2 \, , \nonumber
\end{align}
and hence the total cost over both periods is
\begin{align}
\mathbb{E}\left\{ C_{(n)} [(i-1)\delta,(i+1)\delta]\right\} &\geq  N\delta k_2 \left(\Delta_0  - m_s + (k_1-k_2(\Delta_0 - m_s)-m_s)^2\right) \, . \nonumber
\end{align}
For any $\epsilon$ we may choose $\delta$ (dependent only on $k_1$, $k_2$, $\alpha_2$, $\alpha_1$) sufficiently small that for large enough $N$:
\begin{align}
\mathbb{E}\left\{ C_{(n)} [(i-1)\delta,(i+1)\delta]\right\} &\geq N\delta \left( k_2\min_{0\leq \Delta_0 \leq 1} \left(\Delta_0 + (k_1-k_2\Delta_0)^2\right) - \epsilon \right) \, . \nonumber 
\end{align}
Recalling expression~\ref{eq:cost_case1} for the earlier case, we denote 
\begin{equation}
c_{(n)}=\min\left(k_2 \min_{0\leq \Delta_0 \leq 1} \left(\Delta_0 + (k_1-k_2\Delta_0)^2\right), k_1\right) > 0 \, ,\label{eq:cndef}
\end{equation}
so, in either case:
\begin{align}
\mathbb{E}\left\{ C_{(n)} [(i-1)\delta,(i+1)\delta]\right\} &\geq N\delta \left( c_{(n)} - \epsilon \right) \, .\nonumber 
\end{align}
We now finally consider the cost accrued over the entire time period $[0,T]$, where $T>2\delta$. We have
\begin{align}
\mathbb{E}\left\{C_{(n)}[0,T]\right\} &= \mathbb{E}\left\{\sum_{i=0}^{T/\delta - 1} C_{(n)}[i\delta,(i+1)\delta]\right\} \nonumber \\
&\geq \frac{1}{2} \mathbb{E}\left\{\sum_{i=1}^{T/\delta - 1} C_{(n)}[(i-1)\delta,(i+1)\delta]\right\} \nonumber \\
&\geq \frac{1}{2} \left(\frac{T}{\delta}-1\right)N\delta(c_{(n)}-\epsilon) \nonumber \\
&=\Omega(N) \nonumber
\end{align}
as required.

%\end{equation}
%\begin{align}
%\end{align}
%\end{equation}

\end{proof}

\clearpage

\subsection{Theorem~\ref{thm:oracle_comparison}}

\begin{reptheorem}{thm:oracle_comparison}
Consider use of each strategy in parallel with an `oracle' procedure. Under assumptions~\ref{asm:f0_difference}-\ref{asm:pop_size}, with sufficiently small fixed $\delta$, holdout set size $n_*=\Theta(N^{2/3})$, and $s<1$ of size $s=\Theta(N^{\epsilon-1/3})$ for some $\epsilon$ with $0<\epsilon<1/3$, we have for $(\textrm{strat}) \in \{(0),(n),(a)\}$:
\begin{equation}
\lim_{N \to \infty}\left( \frac{\mathbb{E}\left\{C_{(h)}[0,T]\right\}}{\mathbb{E}\left\{C_{(o)}[0,T]\right\}}\right) = 1\textrm{, and } %\hspace{5pt} 
\lim_{N \to \infty}\left(\frac{\mathbb{E}\left\{C_{(\textrm{strat})}[0,T]\right\}}{\mathbb{E}\left\{C_{(o)}[0,T]\right\}}\right) > 1 \hspace{5pt} \left(=\Omega(\delta^{-2})\right) \, .\nonumber
\end{equation}
\end{reptheorem}

\begin{proof}

%\end{equation}
%

We begin with the following lemma:

\begin{lemma}
\label{lemma:bigT}
For the $T$ in assumption~\ref{asm:f0_difference}:
\begin{equation}
I(T) \triangleq \sum_{i=0}^{T/\delta - 1} \mathbb{E}_{t \sim U(i\delta,(i+1)\delta)}\left\{\xi_t^2(f_{i\delta},f_t)\right\} > 0 \, .
\end{equation}
\end{lemma}

\begin{proof}
{We employ assumptions~\ref{asm:f0_difference}-~\ref{asm:ft_lipschitz}. As above, we denote $\alpha_2$ as the Lipschitz constant of $\mu_t$. %Firstly, we choose a $T$ (with existence guaranteed by assumption~\ref{asm:f0_difference}) for which
We have $\int_0^T \xi_t^2(f_t,f_0) dt > 0$. There must be some $u<T$ with $j\delta < u < (j+1) \delta$ such that }
\begin{equation}
d \triangleq \xi_{u}^2(f_{u},f_{j\delta}) dt > 0 \, . \nonumber
\end{equation}
Let $\tau$ be a number satisfying
\begin{equation}
0 < \tau < \min\left(\frac{d}{\alpha_2+2\alpha_1},\frac{u}{j\delta}\right)
\end{equation}
so $d - (\alpha_2 + 2\alpha_1) \tau > 0$ and $u-\tau>j\delta$. We now have
\begin{align}
I(T) &\geq \mathbb{E}_{t \sim U(j\delta,(j+1)\delta)}\left\{\xi_t^2(f_{j\delta},f_t)\right\} \nonumber \\
&= \int_{j\delta}^{(j+1)\delta} \frac{1}{\delta} \xi_t^2(f_{j\delta},f_t) dt \nonumber \\
&\geq \frac{1}{\delta} \int_{u-\tau}^{u}  \xi_t^2(f_{j\delta},f_t) dt \nonumber \\
&= \frac{1}{\delta} \int_{u-\tau}^{u}  \int \left(f_t(x)-f_{j\delta}(x)\right)^2 d\mu_t dt \nonumber \\
&\geq \frac{1}{\delta} \int_{u-\tau}^{u}  \left(\int \left(|f_u(x) - f_{j\delta}(x)| - |f_t(x)-f_u(x)|\right)^2 d\mu_u - \alpha_2(u-t)\right) dt \nonumber \\
&\geq \frac{1}{\delta} \int_{u-\tau}^{u}  \left( \xi_t^2(f_u,f_{j\delta}) + \xi_u^2(f_t,f_u) - 2\sup_x(|f_t(x)-f_u(x)|) \right) - \alpha_2(u-t) dt \nonumber \\
&\geq \frac{1}{\delta} \int_{u-\tau}^{u}  \left( \xi_u^2(f_u,f_{j\delta})  - \alpha_2(u-t)  - 2\alpha_1(u-t)\right) dt \nonumber \\
&\geq \frac{1}{\delta}(d-2\alpha_2\tau - \alpha_1\tau) \nonumber \\
&>0 \, .\nonumber
\end{align}

\end{proof}

We firstly consider $C_{(h)}[0,T]$. 
We have, %, selecting a $T$ large enough that $I(T)>0$ 
by Lemma~\ref{lemma:bigT}, and recalling expressions~\ref{eq:poisson} and~\ref{eq:holdout_cost_expansion}:
\begin{align}
 &\lim_{N \to \infty} \left(\frac{\mathbb{E}\left\{C_{(h)}[0,T]\right\}}{\mathbb{E}\left\{C_{(o)}[0,T]\right\}}\right) =  \lim_{N \to \infty} \left( \frac{\sum_{i=0}^{T/\delta - 1} C_{(h)}[i\delta,(i+1)\delta]}{\sum_{i=0}^{T/\delta - 1} C_{(o)}[i\delta,(i+1)\delta]} \right) \nonumber \\
 &= \lim_{N \to \infty} \left( \frac{\sum_{i=0}^{T/\delta - 1} \left(k_1 n_* + N\delta \left(p_* +  (1-p_*)k_2\mathbb{E}_{t \sim U(i\delta,(i+1)\delta)}\left\{\xi_t^2(\rho_f^{n_*,i\delta,s},f_t)\right\}\right)\right) }{\sum_{i=0}^{T/\delta - 1} \left(N\delta k_2\mathbb{E}_{t \sim U(i\delta,(i+1)\delta)}\left\{\xi_t^2(\rho_f^{N,i\delta,s},f_t)\right\}\right)} \right) \nonumber \\
 &= \lim_{N \to \infty} \left( \frac{\frac{T}{\delta} k_1 \Theta \left(N^{\frac{1}{3}}\right) + N\delta k_2  \sum_{i=0}^{T/\delta - 1} \left(  \mathbb{E}_{t \sim U(i\delta,(i+1)\delta)}\left\{\xi_t^2(f_{i\delta},f_t)\right\}\right) }{N\delta k_2 \sum_{i=0}^{T/\delta - 1} \mathbb{E}_{t \sim U(i\delta,(i+1)\delta)}\left\{\xi_t^2(f_{i\delta},f_t)\right\}} \right) \nonumber \\
 &= \lim_{N \to \infty} \left( \frac{\frac{T}{\delta} k_1 \Theta \left(N^{\frac{1}{3}}\right) + N\delta k_2  I(T) }{N\delta k_2 I(T)}\right) \nonumber \\
 &= 1 \, .\label{eq:oracle_holdout_derivation}
\end{align}

We next consider $C_{(n)}[0,T]$ and $C_{(a)}[0,T]$, recalling from Lemma~\ref{lem:0isa} that we need only consider the latter. We note that
\begin{align}
I(T) &= \sum_{i=0}^{T/\delta - 1} \int_{i\delta}^{(i+1)\delta} \xi_t^2(f_{i\delta},f_t) \frac{1}{\delta} dt \nonumber \\
&\leq \frac{1}{\delta} \sum_{i=0}^{T/\delta - 1} \int_{i\delta}^{(i+1)\delta} \int (f_{t}-f_{i\delta})^2 d\mu_t  dt  \nonumber \\
&\leq \frac{1}{\delta} \sum_{i=0}^{T/\delta - 1} \int_{i\delta}^{(i+1)\delta} \int (\alpha_1\delta)^2 d\mu_t  dt  \nonumber \\
&=T\delta \alpha_1^2 \, . \label{eq:it_upper_bound}
\end{align}
Recalling our definition of $b$ as
\begin{equation}
 \lim_{N \to \infty} \left(\mathbb{E}_{t \sim U[0,T], D}\left\{\xi_t^2(\rho_t,f_t)\right\}\right) = b > 0 \, , \nonumber 
\end{equation}
we choose any $\epsilon$ with $0<\epsilon<b$, so for large enough $N$, we have for small enough $\delta < \frac{\sqrt{b}}{\alpha_1}$:
\begin{align}
 \lim_{N \to \infty} \left(\frac{\mathbb{E}\left\{C_{(a)}[0,T]\right\}}{\mathbb{E}\left\{C_{(o)}[0,T]\right\}}\right) &=\lim_{N \to \infty} \left( \frac{N T k_2 \mathbb{E}_{t \sim U(0,T)} \left\{\xi_t^2(\rho_t,f_t)\right\}}{N \delta k_2 I(T)} \right) \nonumber \\
 &\geq\lim_{N \to \infty} \left( \frac{ T (b-\epsilon)}{T \delta^2 \alpha_1^2} \right) \nonumber \\
&=\Omega(\delta^{-2}) \nonumber \\
&>1 \, . \nonumber
\end{align}
We next consider $C_{(n)}[0,T]$. We recall from expression~\ref{eq:cndef} in the proof of Theorem~\ref{thm:alternative_asymptotic} that %if we denote
%\end{equation}
%then 
for any $\epsilon>0$ we may choose $\delta$ (dependent only on $k_1$, $k_2$, $\alpha_2$, $\alpha_1$) sufficiently small that for large enough $N$ we have:
\begin{align}
\mathbb{E}\left\{ C_{(n)} [(i-1)\delta,(i+1)\delta]\right\} &\geq N\delta \left( c_{(n)} - \epsilon \right) \, ,  \nonumber
\end{align}
where $c_{(n)}$ does not depend on $\delta$. Thus, for sufficiently small $\delta$:
\begin{align}
 &\lim_{N \to \infty} \left(\frac{\mathbb{E}\left\{C_{(n)}[0,T]\right\}}{\mathbb{E}\left\{C_{(o)}[0,T]\right\}}\right) \geq \lim_{N \to \infty} \left( \frac{\frac{1}{2}\sum_{i=1}^{T/\delta - 1} C_{(n)}[(i-1)\delta,(i+1)\delta]}{\sum_{i=0}^{T/\delta - 1} C_{(o)}[i\delta,(i+1)\delta]} \right) \label{eq:cost_oracle_expansion_naive} \\
 &\geq \lim_{N \to \infty} \left(\frac{ \frac{1}{2} N \delta \left(\frac{T}{\delta}-1\right)(c_{(n)}-\epsilon)}{N\delta k_2 I(T)} \right) \nonumber \\
 &\geq \lim_{N \to \infty} \left(\frac{ (T-\delta)(c_{(n)}-\epsilon)}{ 2 k_2 T \delta^2 \alpha_1^2} \right) \nonumber \\
&=\Omega(\delta^{-2}) \nonumber \\
&>1 \nonumber
\end{align}
as required.

\end{proof}

\clearpage

\subsection{Relaxation of assumption~\ref{asm:usefulness}}
\label{supp_sec:relaxation_usefulness}

Our assumption~\ref{asm:usefulness} is prescriptive, necessitating a strict relationship between a particular conception of `accuracy' of the risk score (as measured by $\xi_t^2$) and the amount by which we can reduce $f_t$ in expectation (assumption~\ref{asm:costs}). 
{
Assumption~\ref{asm:usefulness} allows some flexibility in the form of $g_t$: given $f_t$, an explicit example of a function $g_t(x,\rho)$ satisfying assumptions~\ref{asm:costs} and~\ref{asm:usefulness} can be constructed by considering constants $p_1,p_2>0$ and functions $q_1(x)$, $q_2(x)$ satisfying $\mathbb{E}_{X \sim \mu_t}\{q_1(X)\}=\mathbb{E}_{X \sim \mu_t}\{q_2(X)\}=0$, and considering any form of $g_t(x,\rho)$ satisfying:
\begin{equation}
g_t(x,\rho)=\underbrace{f_t(x)}_{\textrm{T1}} - \underbrace{p_1 + q_1(x)}_{\textrm{T2}} + \underbrace{ \xi_t^2(\rho,f_t)(p_2+q_2(x))}_{\textrm{T3}} \, , \nonumber
\end{equation}
where term T1 is the underlying risk (in the absence of a risk score), term T2 is the maximum potential reduction in risk (covariate-dependent, through $q_1(\cdot)$) and term T3 arises due to imperfect prediction: for instance, representing under-treatment or over-treatment due to $\rho(\cdot)$ differing from $f_t(\cdot)$, in a covariate-dependent manner through $q_2(\cdot)$. This can be seen to satisfy assumption~\ref{asm:costs} as:
\begin{align}
k_1 &= \mathbb{E}_{X \sim \mu_t} \left\{f_t(X) - g_t(X,f_t)\right\} \nonumber \\
&= \mathbb{E}_{X \sim \mu_t} \left\{p_1 - q_1(X) + \xi_t^2(f_t,f_t)(p_2+q_2(x))\right\} \nonumber \\
&= p_1 > 0 \, , \nonumber
\end{align}
since $\xi_t^2(f_t,f_t)=0$, and since:
\begin{align}
c_t(\rho_t) &= \mathbb{E}_{X \sim \mu_t} \left\{g_t(X,\rho_t) - g_t(X,f_t)\right\} \nonumber \\
&= \mathbb{E}_{X \sim \mu_t} \left\{\xi^2(\rho_t,f_t)\left(p_2+q_2(X)\right) - \xi^2(f_t,f_t)\left(p_2+q_2(X)\right) \right\} \nonumber \\
&= \xi^2(\rho_t,f_t)\mathbb{E}_{X \sim \mu_t} \left\{p_2+q_2(X) \right\} \nonumber \\
&= p_2 \xi^2(\rho_t,f_t) \, ,  \nonumber
\end{align}
the function $g_t$ satisfies assumption~\ref{asm:usefulness} with $k_2=p_2$.
}

We may substantially weaken assumption~\ref{asm:usefulness} and maintain our results, although the statement of Theorem~\ref{thm:holdout_asymptotic} becomes a little more complex. We consider the alternative assumption (using the conventions $u,l$ for `upper' and `lower'):
\begin{assumption}{A6-alt}\label{asm:usefulness_alt}
Suppose a risk score $\rho_t$ is in use at time $t$, and denote $\xi = \xi_t^2(\rho_t,f_t)$. We have
\begin{equation}
k_2^l\xi^{b_l} \leq c_t \leq k_2^u \xi^{b_u} \, , \nonumber
\end{equation}
for some constants $k_2^l, k_2^u > 0$, and constants $b_l$, $b_u$ satisfying $0< b_u \leq 1 \leq b_l$. 
\end{assumption}
along with a slight strengthening we will need for Theorem~\ref{thm:oracle_comparison} (stating that cost must be determined by a function of $\xi$, not merely bounded)
\begin{assumption}{A6-add}\label{asm:usefulness_add}
With $\rho_t$, $\xi = \xi_t^2(\rho_t,f_t)$ as per assumption~\ref{asm:usefulness_alt} and $c_t$ as per assumption~\ref{asm:costs}, we have $c_t = k_2(\xi)$, where $k_2$ is a continuous function.
\end{assumption}
and note the following:
\begin{remark}
Suppose we replace~\ref{asm:usefulness} with~\ref{asm:usefulness_alt} in the statement of Theorems~\ref{thm:holdout_asymptotic}, \ref{thm:alternative_asymptotic} and \ref{thm:oracle_comparison}. Then, replacing the statement of Theorem~\ref{thm:holdout_asymptotic} with
\begin{align}
\mathbb{E}\left\{C_{(h)}[0,T]\right\} &= \delta^{b_u} N T k_2^u (2\alpha_1 + \alpha_1^2)^{b_u} +   \delta^{-1} O(N^a) + O(N^{1 + b_u(a + \epsilon-1)}) +  O(N^{1-b_u a}) \, , \nonumber 
\end{align}
Theorems~\ref{thm:holdout_asymptotic} and \ref{thm:alternative_asymptotic} still hold. If we additionally make assumption~\ref{asm:usefulness_add}, then the statement of Theorem~\ref{thm:oracle_comparison} still holds, with an asymptotic growth rate of $\Omega(\delta^{-2b_u})$ rather than $\Omega(\delta^{-2})$ for strategies $(0)$, $(a)$, $(n)$.
\end{remark}

Before proceeding to the proof, we note that in analogy to the main manuscript, we may specify an asymptotic optimal holdout size and corresponding cost in terms of $N$, although this time we must allow dependency on $b_u$. 

For a fixed $\delta$, a holdout set size of $n_*=\Theta\left(N^{\frac{1}{2}}\right)$ (regardless of $b_u$) will give an optimal asymptotic cost of 
\begin{align}
\mathbb{E}\left\{C_{(h)}[0,T]\right\} &= \delta^{b_u} N k_2^u (2\alpha_1 + \alpha_1^2)^{b_u} +   O\left(N^{1 + \left(\epsilon-\frac{1}{2}\right)b_u}\right) \, , \nonumber 
\end{align}
and if we choose 
\begin{equation}
\delta=O\left(N^{\frac{1}{2+b_u}}\right), \hspace{10pt} n_*=\Theta\left(N^{\frac{1}{2+b_u}}\right), \hspace{10pt} s=\Theta\left(N^{\frac{1}{2+b_u} + \epsilon - 1}\right) \nonumber
\end{equation}
we may achieve an optimal growth rate (for sufficiently small $\epsilon$) of:
\begin{align}
\mathbb{E}\left\{C_{(h)}[0,T]\right\} &= O\left(N^{\frac{2}{2+b_u}}\right) \, . \nonumber 
\end{align}

\begin{proof}
\textbf{Theorem~\ref{thm:holdout_asymptotic}}. For Theorem~\ref{thm:holdout_asymptotic} we proceed in the same way until expression~\ref{eq:holdout_cost_expansion}, where we instead have (employing the observations that $\mathbb{E}\{\xi^{b_u}\} \leq \mathbb{E}\{\xi\}^{b_u}$ and $(a+b)^{b_u} \leq a^{b_u} + b^{b_u}$ for $a,b,\xi>0$, since $b_u\leq 1$)
\begin{align}
\mathbb{E}\left\{C_{(h)}[e-\delta,e]\right\} &\leq \underbrace{k_1 n_*}_{\substack{\text{Cost for held} \\ \text{-out samples}}} + \mathbb{E}\{n\} \bigg( \underbrace{(1-p_*) k_1}_{\substack{\text{Cost if $<n_*$}\\ \text{samples in} \\ \text{$(e-1-s,e]$}}} +  \underbrace{p_* \mathbb{E} \left\{ k_2^u \left(\xi_t^2(\rho_f^{n_*,e,s},f_t)\right)^{b_u}\right\}}_{\substack{\text{Cost for non-held-} \\ \text{out samples otherwise}}}\bigg) \nonumber \\
&\leq k_1 n_* + N\delta \left(O(N^{-2}) +  k_2^u\mathbb{E}\left\{\xi_t^2(\rho_f^{n_*,e,s},f_t)\right\}^{b_u}\right) \nonumber \\
&\leq k_1 n_* + N\delta\left(O(n_*^{-1}) + (2\alpha_1 + \alpha_1^2)(\delta+s)\right)^{b_u} \nonumber \\
&\leq k_1 n_* + N\delta O(n_*^{-1})^{b_u} + N\delta (2\alpha_1 + \alpha_1^2)^b_u(\delta^{b_u}+s^{b_u}) \nonumber \\
&\leq \Theta(N^{a}) + \delta O(N^{1-b_u a}) + \delta \Theta(N s^{b_u}) + \delta^{1+b_u} N k_2^u (2\alpha_1 + \alpha_1^2) \nonumber \\
&= O(N^{a}) + \delta O(N^{1-b_u a}) + \delta O(N^{1 + b_u(a + \epsilon - 1)}) + \delta^{1+b_u} N k_2^u (2\alpha_1 + \alpha_1^2) \, , \label{eq:holdout_assumption_weaker_cost}
\end{align}
with a total cost per unit time accrued over the $T/\delta$ total epochs is thus
\begin{equation}
\mathbb{E}\left\{C_{(h)}[0,T]\right\} = \delta^{-1} O(N^a) + O(N^{1 + b_u(a + \epsilon-1)}) +  O(N^{1-b_u a}) + \delta^{b_u} N T k_2^u (2\alpha_1 + \alpha_1^2)^{b_u}) \, . \nonumber
\end{equation}

\textbf{Theorem~\ref{thm:alternative_asymptotic}}. To establish Theorem~\ref{thm:alternative_asymptotic} for $(\textrm{strat}) \in \{(0),(a)\}$ we pick up at equation~\ref{eq:alternative_expansion_1} to instead note 
\begin{align}
\mathbb{E}\left\{C_{(a)}[0,T]\right\} &\geq \mathbb{E}\{\textrm{Pois}(NT)\}\cdot \mathbb{E}_{t \sim U(0,T),D} \left\{\mathbb{E}\left\{k_2^l \left(\xi_t^2(\rho_t,f_t)\right)^{b_l}\right\}\right\} \nonumber \\
&\geq N T k_2^l \mathbb{E}\left\{ \xi_t^2(\rho_t,f_t)\right\}^{b_l} \nonumber \\
&\geq N T k_2^l (b-\epsilon)^{b_l} \nonumber \\
&= \Omega(N) \, . \nonumber 
\end{align}
For $(\textrm{strat}) = (n)$ we begin by rephrasing equation~\ref{eq:cost_two_ways} as
\begin{align}
k_2^u \xi_t^2\left(\rho_g^{N,(e-1)\delta,s},f_t\right)^{b_u} \geq k_1 - \mathbb{E}_{X \sim \mu_t} \left\{f_t(X)-g_t(X)\right\} &\geq  k_2^l \xi_t^2\left(\rho_g^{N,(e-1)\delta,s},f_t\right)^{b_l} \, . \nonumber
\end{align}
We must rework case 1; we now have, by assumption:
\begin{align}
k_1 - k_2^u \xi_u^2\left(\rho_g^{N,(e-1)\delta,s},f_u\right)^{b_u} &< 0 \nonumber \\
\Leftrightarrow \hspace{10pt} 
\xi_{\tau}^2\left(\rho_g^{N,(e-1)\delta,s},f_u\right) > \left(\frac{k_1}{k_2^u}\right)^{\frac{1}{b_u}} \, , \nonumber
\end{align}
so for any $t \in [(i-1)\delta,i\delta)$:
\begin{align}
\xi_t^2\left(\rho_g^{N,(e-1)\delta,s},f_t\right) &\geq  \xi_t^2\left(\rho_g^{N,(e-1)\delta,s},f_{\tau}\right) - 2\sqrt{\xi_t^2\left(f_t,f_{\tau}\right)} \nonumber \\
&\geq \xi_u^2\left(\rho_g^{N,(e-1)\delta,s},f_{\tau}\right) - \alpha_2|t-\tau| - 2\alpha_1|t-\tau| \nonumber \\
&\geq \left(\frac{k_1}{k_2^u}\right)^{\frac{1}{b_u}}  - (\alpha_2 + 2\alpha_1)\delta \, , \nonumber
\end{align}
and the total expected cost accrued over the time period $[(i-1)\delta,(i+1)\delta]$ (during which we encounter $\textrm{Pois}(2N\delta)$ samples) is:
\begin{align}
\mathbb{E}\left\{ C_{(n)} [(i-1)\delta,(i+1)\delta]\right\} &\geq \mathbb{E}\left\{ C_{(n)} [(i-1)\delta,i\delta]\right\} \nonumber \\
&\geq \mathbb{E}\left\{\textrm{Pois}(N\delta)\right\} k_2^l \mathbb{E}_{t \sim U((i-1)\delta,i\delta)}\left\{\xi_t^2\left(\rho_g^{N,(e-1)\delta,s},f_t\right)^{b_l}\right\} \nonumber \\
&\geq  N\delta k_2^l \left( \left(\frac{k_1}{k_2^u}\right)^{\frac{1}{b_u}}  - (\alpha_2 + 2\alpha_1)\delta \right)^{b_l} \, , \nonumber
\end{align}
hence we may choose $\delta$ dependent only on $\alpha_2,\alpha_1,k_2$ sufficiently small that
\begin{align}
\mathbb{E}\left\{ C_{(n)} [(i-1)\delta,(i+1)\delta]\right\} &\geq N\delta m_1 \label{eq:cost_case1_alternative}
\end{align}
for some positive constant $m_1$.

For case 2, we again appeal to expression~\ref{eq:lt_delta0} to rewrite expression~\ref{eq:cost_expansion_naive_case2_1} (using the fact that $\mathbb{E}\{\xi^{b_l}\}\geq \mathbb{E}\{\xi\}^{b_l}$ since $b_l>0$)
\begin{align}
\mathbb{E}\left\{ C_{(n)}[(i-1)\delta,i\delta] \right\} &\geq \mathbb{E}\left\{\textrm{Pois}(N\delta)\right\} k_2^l \mathbb{E}_{t\sim U((i-1)\delta,i\delta)}\left\{\xi_t^2\left(\rho_g^{N,(i-1)\delta,s},f_t\right)^{b_l} \right\}\nonumber \\
&\geq N\delta  k_2^l \mathbb{E}_{t\sim U((i-1)\delta,i\delta)}\left\{\xi_t^2\left(\rho_g^{N,(i-1)\delta,s},f_t\right)\right\}^{b_l} \nonumber \\
&\geq  N\delta k_2^l \left(\Delta_0  - \delta(\alpha_2 + 2\alpha_1) - \alpha_2 s\right)^{b_l}\nonumber \\
&\geq  N\delta k_2^l \left(\Delta_0  - m_s \right)^{b_l} \, . \nonumber
\end{align}
We may recalculate the final three lines of derivation~\ref{eq:xibound_case2} as:
\begin{align}
\xi_{i\delta}\left(f_{i\delta},G\right) &\geq \left(\frac{1}{s}\int_{i\delta-s}^{i\delta} \max\left(0,\mathbb{E}_{X \sim \mu_t} \left\{f_t(x)-g_t(x)\right\} - \alpha_1\delta\right)\right)^2 \nonumber \\
&\geq \left(\frac{1}{s}\int_{i\delta-s}^{i\delta} \max\left(0,k_1 - k_2^u \xi_t^2\left(\rho_g^{N,(e-1)\delta,s},f_t\right)^{b_u} - \alpha_1\delta\right)\right)^2 \nonumber \\
&\geq \max\left(0,k_1 - k_2^u \left(\Delta_0 + (\alpha_2+2\alpha_1)\delta + \alpha_1^2 \delta^2 +\alpha_2 s
\right)^{b_u} - \alpha_1\delta\right)^2 \label{eq:xibound_case2_alternative}
\end{align}
and hence rewrite  expression~\ref{eq:cost_expansion_naive_case2_2} as
\begin{align}
\mathbb{E}\left\{ C_{(n)} [i\delta,(i+1)\delta]\right\} &\geq \mathbb{E}\left\{\textrm{Pois}(N\delta)\right\} k_2^l \mathbb{E}_{t\sim U(i\delta,(i+1)\delta)}\left\{\xi_t^2\left(\rho_g^{N,i\delta,s},f_t\right)^{b_l} \right\} \nonumber \\
&\geq N\delta k_2^l \mathbb{E}_{t\sim U(i\delta,(i+1)\delta)}\left\{\xi_t^2\left(\rho_g^{N,i\delta,s},f_t\right)\right\}^{b_l}  \nonumber \\
&\geq  N\delta k_2^l \left(k_1-k_2^u(\Delta_0 - m_s)^{b_u}-m_s\right)^{2b_l} \, , \nonumber
\end{align}
and hence the total cost over both periods is
\begin{align}
\mathbb{E}\left\{ C_{(n)} [(i-1)\delta,(i+1)\delta]\right\} &\geq  N\delta k_2 \left( (\Delta_0  - m_s)^{b_l} + (k_1-k_2^u(\Delta_0 - m_s)^{b_u}-m_s)^{2b_l}\right) \, . \nonumber
\end{align}
Analogous to the proof of Theorem~\ref{thm:alternative_asymptotic}, given $\epsilon$, we may choose $\delta$ (dependent only on $k_1$, $k_2$, $\alpha_2$, $\alpha_1$) sufficiently small that for large enough $N$:
\begin{align}
\mathbb{E}\left\{ C_{(n)} [(i-1)\delta,(i+1)\delta]\right\} &\geq N\delta \left( k_2\min_{0\leq \Delta_0 \leq 1} \left(\Delta_0^{b_l} + (k_1-k_2^u\Delta_0^{b_u})^{2b_l}\right) - \epsilon \right) \, . \nonumber 
\end{align}
Recalling expression~\ref{eq:cost_case1_alternative} for the earlier case, we denote 
\begin{equation}
c_{(n)}=\min\left(k_2^l \min_{0\leq \Delta_0 \leq 1} \left(\Delta_0^{b_l} + (k_1-k_2^u\Delta_0^{b_u})^{2b_l}\right), k_1\right) > 0 \, , \label{eq:cndef2}
\end{equation}
so, in either case
\begin{align}
\mathbb{E}\left\{ C_{(n)} [(i-1)\delta,(i+1)\delta]\right\} &\geq N\delta \left( c_{(n)} - \epsilon \right) \, ,\nonumber 
\end{align}
and the proof proceeds as above.

\textbf{Theorem~\ref{thm:oracle_comparison}}. For Theorem~\ref{thm:oracle_comparison}, we must additionally make assumption~\ref{asm:usefulness_add}, since without it there is no guarantee that costs of various strategies will be similar even if the associated risk scores are equally similar to $f_t$.

We require a slightly modified form of Lemma~\ref{lemma:bigT}; namely that for sufficiently large $T$ we have 
\begin{equation}
I(T) \triangleq \sum_{i=0}^{T/\delta - 1} \mathbb{E}_{t \sim U(i\delta,(i+1)\delta)}\left\{k_2\left(\xi_t^2(f_{i\delta},f_t)\right)\right\} > 0 \, , \nonumber
\end{equation}
which can proved in essentially the same way as the original lemma by first noting that since $k_2(\xi) \geq k_2^l \xi^{b_l}$, we have
\begin{equation}
I_2(T) \geq k_2^l \sum_{i=0}^{T/\delta - 1} \mathbb{E}_{t \sim U(i\delta,(i+1)\delta)}\left\{\xi_t^2(f_{i\delta},f_t)^{b_l}\right\} \, . \nonumber
\end{equation}
With this, we may trudge back to derivation~\ref{eq:oracle_holdout_derivation}, delete any occurrences of the \emph{constant} $k_2$, replace $I(\cdot)$ with $I_2(\cdot)$, and replace any instances of $\xi_t^2(f_{i\delta},f_t)$ with $k_2\left(\xi_t^2(f_{i\delta},f_t)\right)$, and see that the derivation holds. 

Given that $k_2(\xi)\leq k_2^u \xi^{b_u}$ we may amend expression~\ref{eq:it_upper_bound} to note:
\begin{equation}
I_2(T) \leq T k_2^u \alpha_1^{2b_u} \delta^{2b_u - 1} \, , \nonumber
\end{equation}
so for sufficiently small $\delta$:
\begin{align}
 \lim_{N \to \infty} \left(\frac{\mathbb{E}\left\{C_{(a)}[0,T]\right\}}{\mathbb{E}\left\{C_{(o)}[0,T]\right\}}\right) &=\lim_{N \to \infty} \left( \frac{N T k_2 \mathbb{E}_{t \sim U(0,T)} \left\{k_2\left(\xi_t^2(\rho_t,f_t)\right\}\right)}{N \delta I_2(T)} \right) \nonumber \\
  &\geq \lim_{N \to \infty} \left( \frac{ T k_2^l \mathbb{E}_{t \sim U(0,T)} \left\{\xi_t^2(\rho_t,f_t)^{b_l}\right\}}{ T \delta k_2^u \alpha_1^{b_u} \delta^{2b_u - 1}} \right) \nonumber \\
 &\geq \lim_{N \to \infty} \left( \frac{ k_2^l \mathbb{E}_{t \sim U(0,T)} \left\{\xi_t^2(\rho_t,f_t)\right\}^{b_l}}{ k_2^u \alpha_1^{b_u} \delta^{2b_u}} \right) \nonumber \\
 &\geq\lim_{N \to \infty} \left( \frac{k_2^l(b-\epsilon)^{b_l}}{k_2^u \delta^{2b_u} \alpha_1^{2b_u}} \right) \nonumber \\
&=\Omega(\delta^{-{2b_u}}) \nonumber \\
&>1 \, .\nonumber
\end{align}
For $(\textrm{strat})=(n)$, allowing for the new definition of $c_{(n)}$ in expression~\ref{eq:cndef2}, we have (picking up at derivation~\ref{eq:cost_oracle_expansion_naive})
\begin{align}
 &\lim_{N \to \infty} \left(\frac{\mathbb{E}\left\{C_{(n)}[0,T]\right\}}{\mathbb{E}\left\{C_{(o)}[0,T]\right\}}\right) \geq \lim_{N \to \infty} \left( \frac{\frac{1}{2}\sum_{i=1}^{T/\delta - 1} C_{(n)}[(i-1)\delta,(i+1)\delta]}{\sum_{i=0}^{T/\delta - 1} C_{(o)}[i\delta,(i+1)\delta]} \right) \nonumber \\
 &\geq \lim_{N \to \infty} \left(\frac{ \frac{1}{2} N \delta \left(\frac{T}{\delta}-1\right)(c_{(n)}-\epsilon)}{N\delta I_2(T)} \right) \nonumber \\
 &\geq \lim_{N \to \infty} \left(\frac{ (T-\delta)(c_{(n)}-\epsilon)}{ 2 k_2^u T \delta^{2b_u} \alpha_1^{2b_u}} \right) \nonumber \\
&=\Omega(\delta^{-2b_u}) \nonumber \\
&>1 \, , \nonumber
\end{align}
as required.
\end{proof}

\clearpage

\subsection{Updating in the absence of drift}
\label{supp_sec:f0diff}

{
We briefly consider performance of updating strategies (section~\ref{sec:general_setup}) when no drift occurs. In some such settings researchers may still want to update risk scores: for instance, in order to make use of a new or better training algorithm. Essentially we claim that \emph{either} the risk score should not be updated, or a holdout set should be used to update it, even in this case: the naive-update strategy will still lead to suboptimal costs.
}

We firstly consider a fixed time period $[0,T]$ for which there is no $T_1 \leq T$ for which assumption~\ref{asm:f0_difference} holds. We note that the proof of Theorem~\ref{thm:alternative_asymptotic} for $(\textrm{strat})=(n)$ makes no use of assumption~\ref{asm:f0_difference}, so if naive updating is used, cost accrues at a rate $\Omega(N)$. Essentially, the costs accrued due to use of naive updating arise from the tension that a difference between $f_t$ and $g_t$ both indicates that the risk score is a `success' in reducing incidence of adverse events, but means that it is inevitably inaccurate once updated. 

The upper bound in Theorem~\ref{thm:holdout_asymptotic} similarly does not require assumption~\ref{asm:f0_difference}, so we may attain $O(N)$ cost for fixed $\delta$ or $O(N^{2/3})$ cost if we may choose $\delta$. The costs accrued using the no-update strategy ($(\textrm{strat})=(0)$) are 0 if we assume full knowledge of $f_0$, since $f_t=f_0$ almost everywhere with respect to $\mu_t$ for all $t\leq T$. 

We now turn our attention to an asymptotic regime with an infinite time horizon. Since $\sup_{x,\rho}|f_t(x)-g_t(\rho,x)| \leq 1$, we have, for fixed $N$:
\begin{equation}
\int_0^T \xi_t^2(f_tf_0)dt = O(T) \, .\nonumber
\end{equation}
If we had:
\begin{equation}
\int_0^T \xi_t^2(f_tf_0)dt = o(T) \, ,\nonumber
\end{equation}
then the mean deviation of $f_t$ from $f_0$ would converge to 0 as $T \to \infty$, as would the time-averaged total cost if the no-update strategy is used with fixed $N$:
\begin{equation}
\frac{1}{T}C_{(0)}[0,T] =  \frac{1}{T} k_2 \int_0^T \xi_t^2(f_t,f_0)dt = \frac{o(T)}{T} \to 0 \, . \nonumber
\end{equation}
If either the naive-update or holdout-set update strategies are followed with a fixed update frequency $\delta$, the time-averaged cost as above will not converge to 0, since a new cost is added each time the model is updated. We note that over any finite time horizon over which some drift occurs, our usual formulations of Theorems~\ref{thm:holdout_asymptotic}, \ref{thm:alternative_asymptotic} and~\ref{thm:oracle_comparison} govern the relative costs of each strategy.

If we have 
\begin{equation}
\int_0^T \xi_t^2(f_tf_0)dt = \Theta(T) \, , \nonumber
\end{equation}
then for all sufficiently large finite time horizons $T$, assumption~\ref{asm:f0_difference} will hold, and Theorems~\ref{thm:holdout_asymptotic}, \ref{thm:alternative_asymptotic} and~\ref{thm:oracle_comparison} describe the performance of each strategy.

\clearpage

\section{Notes on ethics and alternative options}

\subsection{Non-identifiability of $f_t$ from $g_t$ and $\rho_t$}
\label{supp_sec:nonidentifiability}

As in the main paper, suppose that we are a researcher at time $t$, at which a risk score $\rho_t$ is in place, and we have at hand \emph{only} a set of sample covariate values $x$ and values $g_t(x)=g_t(x,\rho_t)$ (or rather, samples from $\textrm{Bern}(g_t(x,\rho_t))$). We show here that we cannot infer the function $f_t$ from only this information: note that, in particular, we do not directly observe $f_t(x)$ or values $\textrm{Bern}(f_t(x))$. We will demonstrate this with a simple counterexample.

Intuitively, we have \emph{some} knowledge of $f_t$; it cannot be an arbitrary function. In particular $f_t$ will be similar to $g_t$ if we have little confidence in the risk score (and hence take little risk-score guided action) and $f_t$ will be different from $g_t$ if we are more confident that the risk score resembles $\rho_t$. These intuitions are quantified by  assumptions~\ref{asm:costs} and \ref{asm:usefulness}, in that we have (all expectations over $X \sim \mu_t$):
\begin{align}
c_t(\rho_t) &= k_2\xi_t^2(f_t,\rho_t) &&\text{Assumption~\ref{asm:usefulness}} \nonumber \\
&= \mathbb{E}\left\{g_t(X,\rho_t) - g_t(X,f_t)\right\} &&\text{Assumption~\ref{asm:costs}} \nonumber \\
&= \mathbb{E}\left\{g_t(X,\rho_t)-f_t(X) + f_t(X)- g_t(X,f_t)\right\} \nonumber \\
&= \mathbb{E}\left\{g_t(X,\rho_t)-f_t(X)\right\} + \mathbb{E}\left\{f_t(X)- g_t(X,f_t)\right\} \nonumber \\
&= \mathbb{E}\left\{g_t(X,\rho_t)-f_t(X)\right\} + k_1 &&\text{Assumption~\ref{asm:costs}} \nonumber \\
&= k_1 - \mathbb{E}\left\{f_t(X) - g_t(X,\rho_t)\right\} \nonumber
\end{align}
Therefore, as stated in the main paper,
\begin{equation}
k_2\xi_t^2(f_t,\rho_t) = k_1 - \mathbb{E}_{X \sim \mu_t}\{f_t(X)-g_t(X, \rho_t)\} \, . \label{eq:ft_all_we_know}
\end{equation}
This comprises the only constraint we have on $f_t$, given knowledge of the functions $\rho_t$ and $g_t(\cdot,\rho_t)$.

We show in this section that knowledge of the value of this is (unsurprisingly) not enough to identify $f_t$. Our construction is elementary, but somewhat laborious. Let $h_t(x)$ be a function and consider the function $f_t^h(x)=f_t(x) + h_t(x) - h$ as follows:
\begin{align}
&k_2\xi(f_t^h,\rho_t) - \left(k_1-\mathbb{E}\{f_t^h(X)-g_t(X, \rho_t)\}\right) \nonumber \\
&= k_2\mathbb{E}\{(f_t(X) + (h_t(X)-h) -\rho_t(X))^2\} - \left(k_1 - \mathbb{E}\{f_t(X)+h_t(X)-h-g_t(X, \rho_t)\}\right) \nonumber \\
&= k_2\xi_t^2(f_t,\rho_t) + k_2\mathbb{E}\{(h_t(X)-h)(f_t(X) -\rho_t(X))\} + 2k_2\mathbb{E}\{(h_t(X)-h)^2\} \nonumber \\
&\phantom{=} - \left(k_1-\mathbb{E}\{f_t(X)-g_t(X, \rho_t)\}\right) + \mathbb{E}\{h_t(X) - h\} \nonumber \\
&= \mathbb{E}\{(h_t(X)-h)(k_2(f_t(X) -\rho_t(X) + h_t(X)-h) + 1)\} \nonumber \\
&= \mathbb{E}\{(h_t(X)-h)(k_2(f_t(X) -\rho_t(X) + h_t(X)-h) + 1)\} \nonumber \\
&= A h^2 + \mathbb{E}\{B\} h + \mathbb{E}\{C\} \, , \label{eq:banana}
\end{align}
where 
\begin{align}
A &= k_2 \nonumber \\
B &= -\left(2k_2(f_t(X) + h_t(X) - \rho_t(X)) + 1\right) \nonumber \\
C &= \left(k_2 \left( h_t(X)^2 + 2h_t(X)(f_t(X)-\rho_t(X))\right) + h_t(X)\right) \, , \nonumber
\end{align}
and all expectations are over $X \sim \mu_t$. We note that 
\begin{equation}
\mathbb{E}\{B^2 - 4AC\} = \mathbb{E}\{B^2\} - 4A\mathbb{E}\{C\} = \left(2 k_2 (f_t(X)-\rho_t(X)) + 1\right)^2 > 0 \, .
\end{equation}
If we choose $h_t(X)$ close to $\rho_t(X)-f_t(X)$, we can ensure $\textrm{var}(B)$ is small enough that 
\begin{equation}
\textrm{var}(B) = \mathbb{E}\{B^2\} - \mathbb{E}\{B\}^2  < \mathbb{E}\{B^2 - 4AC\} \, , \nonumber
\end{equation}
and hence $\mathbb{E}\{B\}^2 - 4A\mathbb\{C\}>0$ and equation~\ref{eq:banana} has a root $h$ (call it $h_t^0$). Thus, for \emph{arbitrary} $h_t(X)$ close to $\rho_t(X)-f_t(X)$, we can find an $h_t^0$ such that $f_t^h(x)=f_t(x) + h_t(x) - h_t^0$ also satisfies
\begin{equation}
k_2\xi_t^2(f_t^h,\rho_t) = k_1 - \mathbb{E}_{X \sim \mu_t}\{f_t^h(X)-g_t(X)\} \, , \nonumber
\end{equation}
that is, given $\rho_t$ and $g_t(\cdot,\rho_t)$ we have identified a class of distinct functions $f^h_t$ which are all consistent with identity~\ref{eq:ft_all_we_know}. Hence $f_t$ is not distinguishable from $f_t^h$, even with knowledge of $k_1$, $k_2$, and hence is not identifiable from $g_t,\mu_t, \rho_t$ alone.

\clearpage

\subsection{Use of natural hold-out sets,  recorded interventions, and maximum-of-two updating}
\label{supp_sec:natural_holdout}

Ethical objections to the use of holdout sets are in a sense due to the need to \emph{actively} with-hold some samples from access to a risk score. A brief discussion is warranted on potential use of `natural' hold-out sets, in which some proportion of a population may be assumed to behave like a holdout set without actively requiring withholding of scores. We will use our motivating example (section~\ref{sec:motivating_example}) to illustrate this idea.

An obvious setting in which a natural holdout set would be appropriate is in the case where an effectively random subset of samples already do not have access to a risk score. In our example, this may comprise a set of individuals under the care of a medical practitioner who chose not to use the ASPRE score. As long as the distribution of covariates of such individuals is typical of the population distribution (that is, $\mu_t$ is conserved) and the behaviour of such practitioners is typical of the general behaviour of practitioners in the absence of a risk score (that is, $f_t$ is conserved) then such a subset of individuals can be used as a holdout set.

\subsubsection{Recorded interventions}

%\item 

{
Risk scores are generally used to simplify information in complex settings, including medicine or finance. This complexity is generally also present in the range and effect of intervention: it is difficult to determine the optimal intervention in a given context. We generally wish to make use of expertise of domain experts (for instance, doctors or financial analysts) in making interventions, and hence do not generally consider direct recommendations of intervention to be practical in this circumstance. Moreover, even \emph{recording} of interventions may be difficult: a doctor responding to a high disease risk may see a patient more often, reduce thresholds for further investigation, or consider new treatments, all of which may be difficult to record. }

{However, we consider here options for a setting in which the intervention is simple and recorded (for the case of PRE, whether an individual was given aspirin).} The set of individuals for whom no intervention was recorded (that is, did not get prescribed aspirin) do not constitute a `natural' holdout set in our sense: the function $f_t$ measures the probability of an event \emph{under normal care without a risk score}, and patients without a risk score may still get prescribed aspirin. 

Suppose we are working at a fixed time $t$ (that is, disregard dependence on time) and denote by $X$ a set of covariates included in a risk score, $A$ an indicator for whether aspirin was prescribed or not, $L$ a `latent' covariate (taking values in finite set $\mathscr{L}$ for simplicity) representing patient characteristics visible to a medical practitioner but which are not amongst covariates $X$, $Y \in \{0,1\}$ the event in question (in this case, the incidence of \textsc{pre}) and $G$ the event of whether the practitioner had access to a risk score ($G=1$ for yes, $G=0$ for no). 

We firstly note that recording $A$ does not necessarily help in meaningfully `updating' the risk score, essentially because the same considerations as in Figure~\ref{fig:epoch} continue to hold. In particular, the treatment decision $P(A|X)$ depends on the risk score currently in place. Should we estimate $\rho_{e+1}=P(Y|A,X)$ while a risk score $\rho_e$ is in use, and replace $\rho_e$ with $\rho_{e+1}$, then $\rho_{e+1}$ will now not generally estimate $P(Y|A,X)$, since $P(A|X)$ has changed. 

{It is worthwhile to consider a more general estimation problem. Faced with the problem of whether to allocate treatment to a particular patient, given covariates $x$, we ideally wish to estimate the counterfactuals 
\begin{equation}
P_{A\gets 0}(Y|X=x) \hspace{10pt} \textrm{ and } \hspace{10pt} P_{A\gets 1}(Y|X=x) \, ; \label{eq:estimable_counterfactuals}
\end{equation}
or potentially the counterfactuals:
\begin{equation}
P_{A\gets 0}(Y|X=x,L=\lambda) \hspace{10pt} \textrm{ and } \hspace{10pt} P_{A\gets 1}(Y|X=x,L=\lambda) \, ; \label{eq:estimable_counterfactuals2}
\end{equation}
that is, the probability of the outcome on a patient with $X=x$ if we forcibly assign a treatment $A=0$ or $A=1$. The counterfactual quantities are computed assuming that random variables $X,G,Y,A,L$ have the following causal structure:}
\begin{center}
\vspace{10pt}
\begin{tikzpicture}
   % Left graph

    % x node set with absolute coordinates
    \node[state] (x) at (0,4) {$X$};
    \node[state] (y) at (0,0) {$Y$};
    \node[state] (a) at (2,2) {$A$};
    \node[state] (g) at (0.5,2.5) {$G$};
    \node[state] (l) at (4,2.5) {$L$};

    % Directed edge
    \path (x) edge (y);
    \path (x) edge (l);
    \path (g) edge (a);
    \path (x) edge (a);
    \path (a) edge (y);
    \path (l) edge (a);
    \path (l) edge (y);

\end{tikzpicture}
\vspace{10pt}
\end{center}
in which $L$ are partly dependent on $X$, the decision $A$ to prescribe aspirin is made based on $X$ and $L$ with the decision rule modulated by $G$, and $Y$ is dependent on $A$, $X$ and $L$. To reconcile $L$ with our earlier notation we have
\begin{align}
f_t(x) &=P(Y=1|X=x,G=0) =\sum_{\lambda \in \mathscr{L}} P(Y=1|X=x,G=0,L=\lambda)P(L=\lambda|X=x) \nonumber \\
g_t(x) &=P(Y=1|X=x,G=1) =\sum_{\lambda \in \mathscr{L}} P(Y=1|X=x,G=1,L=\lambda)P(L=\lambda|X=x) \, , \nonumber 
\end{align}
taking $L$ and $G$ as conditionally independent given $X$. 

The data we may use to estimate these counterfactuals are direct estimates of the conditionals 
\begin{equation}
P(Y|X=x,A=1) \hspace{10pt} \textrm{ and } \hspace{10pt} P(Y|X=x,A=0) \, , \label{eq:estimable_conditionals}
\end{equation}
whose values are dependent on the risk score through $P(A|X=x)$. If we make a \emph{deterministic} decision on $A$ on the basis of $x$, so $P(A|X=x) \in \{0,1\}$, then for each value of $x$ we can estimate one of the values~\ref{eq:estimable_conditionals} and it is equal to the corresponding counterfactual~\ref{eq:estimable_counterfactuals}, but we lose all ability to estimate the other. 

In more generality, we consider two circumstances:
\begin{enumerate}
\item We make a deterministic decision on the basis of $X,L$. 
\item We have a degree of randomness, so some patients with $X=x,L=\lambda$ are treated and some are not.
\end{enumerate}

In the \textbf{first circumstance}, this leaves us with much the same problem as when we use a deterministic decision based on $x$: faced with a patient with $(X,L)=(x,\lambda)$, we can estimate only one of the counterfactuals~\ref{eq:estimable_counterfactuals2} and at most one of the counterfactuals~\ref{eq:estimable_counterfactuals}. In other words, we can say what will happen to this patient if we take a specific, forced course of action, but not if we take the alternative. We claim that knowledge of $f_t$ (that is, the probability of the outcome \emph{under normal clinical care without a risk score}) is more clinically useful. 

In the \textbf{second circumstance}, as long as we allocate patients with $X=x$ to $A=1$ or $A=0$ with positive probabilities, we can estimate the counterfactuals~\ref{eq:estimable_counterfactuals2}, and thereby decide on the best option. However, we note that since $L$ represents everything that a clinician can see, this scenario constitutes \emph{randomising patients to a treatment}. With this in mind:% and make a  claim that:
\begin{itemize}
\item We expect the counterfactual values to change with time, so we will need to re-estimate them, and keep allocating patients to both $A=0$ and $A=1$. 
\item Given that consent to randomisation may not be independent of $X$ and $L$, it may be difficult to make unbiased estimates of the counterfactual values
\item While randomisation of treatment is considered ethically reasonable in a clinical trial while in a position of equipoise over the effectiveness of the treatment, it is less ethically permissible when the treatment is known to be effective (as we are assuming). In a sense, this is \emph{less ethical than using a holdout set}, in that treatment is compelled, in a sense, rather than left to the clinician's best discretion in the absence of a risk score. 
\end{itemize}
In summary, we claim that recording interventions may enable estimation of counterfactual quantities~\ref{eq:estimable_counterfactuals}, but that this is not straightforward, that estimates would be unlikely to be consistent or unbiased, and that this may be ethically less acceptable than use of a holdout set. Nonetheless, this remains a potential avenue for future work.

\subsection{Maximum-of-two updating}
\label{supp_sec:best_of_two}

{
In a parallel applied work~\cite{liley21medRxiv} we recognised the potential problems with naive updating as applied to a real-world risk score. The score in question predicted yearly individual risk of emergency hospital admission for the majority of the Scottish population, with risk scores deployed monthly to Scottish general practitioners for the patients in their care. We developed the fourth version of the risk score, the third version being currently in use. 
}

{
In the absence of a holdout set (and the lack of precedent) we chose to first re-fit the risk score naively: in the notation of section~\ref{sec:holdout_set_motivation}, taking $t=1,2,3,4$ as the times when the first through fourth versions of the risk score were fitted, if $\rho_3$ is the third version of the risk score, our naive estimate $\rho_4'$ was 
\begin{equation}
\rho_4'(x) \approx g_4(x,\rho_3) \, . \nonumber
\end{equation}
We then proposed to designate the updated risk score $\rho_4$ as:
\begin{equation}
\rho_4 = \max\left (\rho_3(x),\rho_4'(x)\right) \, . \nonumber
\end{equation}
Our intent in doing so was to try and ensure that $\rho_4(x) \geq f_4(x)$, given that we could not aim for $\rho_4(x) = f_4(x)$. We consider that this was the best option available to us at the time. However, this approach has many drawbacks compared to a holdout set approach, and we do not consider that it is viable in general or in the long term. Our reasons for this are as follows:
}
{
\begin{enumerate}
\item Overestimation of risk, while possibly less costly than underestimation of risk, is not without cost. In practical terms, overestimation of risk could lead to overtreatment, or diversion of care away from patients in greater need. 
\item There is no clear extension of this approach to updating the model a second time (say, to $\rho_5$). Given $\rho_5'(x) \approx g_t(x,\rho_4)$, we could take 
\begin{equation}
\rho_5(x)=\max\left(\rho_4,\rho_5'\right) = \max\left(\rho_3,\rho_4',\rho_5'\right) \, , \nonumber 
\end{equation}
but this will lead to more severe overestimation of $f_5$, ultimately lowering confidence in the risk score as above. It is also more susceptible to drift than our $\rho_3 \to \rho_4$ update, since we are relying on an estimate made two epochs ago rather than just one. This is not avoided if we take $\rho_5(x)=\max\left(\rho_4',\rho_5'\right)$, since $\rho_3$ has a causal influence on $\rho_4'$ as per Figure~\ref{fig:epoch}. 
\item As per Section~\ref{sec:cost_specification}, should we use a holdout set, we would generally try and ensure the holdout set is as close as possible to $t=4$, and thus reduce the effect of drift between $t=3$ and $t=4$. Our approach in this case is more susceptible to drift, because we use the risk score $\rho_3$ directly in our estimate of $\rho_4$. 
\end{enumerate}
}
\clearpage

\section{Proofs of Theorems~\ref{thm:ohs_exists}, \ref{thm:ohs_exists_weak}, and \ref{thm:ohs_cor}}
\label{apx:thm1proof}

\begin{reptheorem}{thm:ohs_exists}
Suppose assumptions~\ref{item:k1_indep_pi}-\ref{item:k2_2nd_der} in the main manuscript hold. 
Then there exists a $N_* \in (0,N)$ with $N \in \mathbb{N}$, which we call the optimal holdout set size, such that:
\begin{align}
    \ell(i) \geq \ell(j) &\textrm{ for } 0 < i < j < N_* \nonumber \\
    \ell(i) \leq \ell(j) &\textrm{ for } N_* < i < j < N \, . \nonumber
\end{align}
\end{reptheorem}

\begin{proof}
%\begin{align}
%\end{align}
%

As discussed in the main manuscript, %From the definition above and by assumption~\ref{item:k2_decrease}, 
we may impose that
\begin{align}
\frac{\partial}{\partial n} k_2(n) &< 0  \, .   \label{eq:k_diffs}
\end{align}

Since both $k_2(n)$ and $(N - n)$ are positive and monotonically decreasing in $n$, so is $k_2(n)(N-n)$. %, by equation~\ref{eq:def_l}. % By assumption~\ref{item:k1_indep_pi}, we impose that $k_1$ is independent of $n$.
Now %First, we show that $l$ has at least one extremum strictly between 0 and N:
\begin{align}
    \ell'(n) %&= \frac{\partial}{\partial n} (l_1(n) + l_2(n))  \\
    &= \frac{\partial}{\partial n} \left(k_1 n + k_2\left(n\right) \left(N-n\right)\right)  \\
    &= k_1 + k_2'\left(n\right)\left(N-n\right) - k_2\left(n\right) \\
    &= \left(k_1-k_2\left(n\right)\right) + k_2'\left(n\right)\left(N-n\right) \, .   \label{eq:ldash}
\end{align}
By assumption~\ref{item:k1_l_k2} in the main manuscript, $k_1 < k_2(0)$, and, from equation~\ref{eq:k_diffs}, $k_2'(0) < 0$, so both terms in equation~\ref{eq:ldash} are negative when $n=0$ and $\ell'(0) < 0$. When $n=N$, the second term vanishes while the first one is positive, as $k_1 > k_2(N)$ by assumption~\ref{item:k1_l_k2} in the main manuscript. We thus have $\ell'(N) > 0$. By assumption, $\ell$ is smooth, so %As $l$ is a twice differentiable function, $l'$ must be continuous. Therefore, 
by Bolzano's Theorem, there must exist at least one point $n_*$ for which $\ell'\left(n_*\right)=0$, which is an extremum of $\ell$.

We now prove that this extremum is unique and a minimum. First, by assumption~\ref{item:k2_2nd_der} in the main manuscript, we may impose that
\begin{equation}
    \frac{\partial^2}{\partial n^2} k_2(n) > 0 \, .      \label{eq:k2_2nd_der}
\end{equation}
Taking the second derivative of $\ell$: 
\begin{align}
    \frac{\partial^2}{\partial n^2} \ell(n) % &= \frac{\partial^2}{\partial n^2} \left(l_1(n) + l_2(n)\right) \\
    &= \frac{\partial^2}{\partial n^2}\left( k_1 n + k_2(n)(N - n) \right) \\
    &= k_2''(n) (N - n) - 2k_2'(n) \, ;
\end{align}
and using equations~\ref{eq:k_diffs} and \ref{eq:k2_2nd_der}, we see that $\ell''(n)$ is strictly positive, and, as a consequence, $\ell'(n)$ is monotonically increasing. Therefore, the extremum of $\ell(n)$ at $n_*$ we found earlier is unique and, as $\ell''(n) > 0$, it is a minimum.

If $n_* \in 1..(N-1)$%\mathbb{N}$
, let $N_*=n_*$. If $n_* \notin \mathbb{N}$, let $N_*$ be the closest natural number to either side of $n_*$. From assumption~\ref{item:k1_l_k2} in the main manuscript, $N_*$ cannot be $0$ or $N$. In both scenarios, this completes the proof.

\end{proof}

%\end{theorem*}

%\begin{proof}
%\end{align*}
%\end{align*}
%\end{proof}

For the proofs of Theorems~\ref{thm:ohs_exists_weak} and \ref{thm:ohs_cor}, we will use the following lemma:

\begin{lemma} \label{lem:lMlN}
Suppose assumption 1 holds and there exists $0<M<N$ such that $k_2(M) < k_1$. Then $\ell(M) < \ell(N)$.
\end{lemma}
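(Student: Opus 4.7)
The plan is to prove this essentially by direct computation, since the statement reduces to a single algebraic manipulation once we substitute into the loss formula. Assumption~\ref{item:k1_indep_pi} plays the simple role of ensuring that $k_1$ is a fixed constant (independent of $n$), so the expression $\ell(n) = k_1 n + k_2(n)(N-n)$ from equation~\ref{eq:loss_specification} can be evaluated at both $n=M$ and $n=N$ without ambiguity.

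First I would evaluate $\ell$ at the endpoint $N$. Since the second term $k_2(N)(N-N)$ vanishes, we obtain $\ell(N) = k_1 N$. Next I would write $\ell(M) = k_1 M + k_2(M)(N-M)$ and split $k_1 N = k_1 M + k_1(N-M)$, so that the desired inequality $\ell(M)<\ell(N)$ becomes
\begin{equation}
k_2(M)(N-M) < k_1(N-M). \nonumber
\end{equation}
Since $M<N$ by hypothesis, $N-M>0$, so this is equivalent to $k_2(M)<k_1$, which is precisely the standing assumption of the lemma. This yields the conclusion.

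There is no real obstacle to this argument; the only things to be careful about are (i) invoking Assumption~\ref{item:k1_indep_pi} to justify treating $k_1$ as a constant across the two evaluations of $\ell$, and (ii) noting that the strict inequality $M<N$ (rather than $M\leq N$) is what allows division by $N-M$ and hence preserves strictness in the final step. The lemma will be used downstream to anchor the existence results in Theorems~\ref{thm:ohs_exists_weak} and~\ref{thm:ohs_cor} by guaranteeing that the minimum of $\ell$ over $\{0,\dots,N\}$ is not attained at $n=N$, which combined with a similar comparison at $n=0$ locates a nontrivial interior minimiser.
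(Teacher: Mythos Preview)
Your proposal is correct and matches the paper's own proof essentially line for line: the paper also multiplies $k_2(M)<k_1$ through by the positive factor $N-M$, adds $Mk_1$ to both sides, and identifies the two sides as $\ell(M)$ and $\ell(N)$. The only cosmetic difference is that the paper presents the chain of implications forward from the hypothesis, whereas you reduce the conclusion to the hypothesis; the algebraic content is identical.
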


\begin{proof}
\begin{align}
    k_2(M) < k_1 &\implies (N-M)k_2(M) < (N-M)k_1 \nonumber \\
    &\implies (N-M)k_2(M) + Mk_1 < Nk_1 \nonumber \\
    &\implies \ell(M) < \ell(N) \, . \nonumber
\end{align}
\end{proof}

We now have

\begin{reptheorem}{thm:ohs_exists_weak} %\label{thm:ohs_exists_weak}
Suppose assumptions \ref{item:k1_indep_pi} and
\ref{item:frac_imp} hold, and $k_2(0) \leq k_1$. Then there exists an $N_* \in \{1, \dots, N-1\}$ such that:
$\ell(i) \geq \ell(N_*) \textrm{ for } i \in \{1,\dots,N-1\}$ and
$\ell(i) > \ell(N_*) \textrm{ for } i \in \{0,N\}$
\end{reptheorem}

\begin{proof}
All that is needed to show that there exists a holdout set size $M$ where $\ell(M) < \ell(0)$ and $\ell(M) < \ell(N)$. This will be the $M$ in assumption~\ref{item:frac_imp}.

It immediately follows from assumption~\ref{item:frac_imp} that $k_2(M)<k_1$, so by Lemma \ref{lem:lMlN} $\ell(M) < \ell(N)$.
If $k_1 = k_2(0)$ then we are done as $\ell(N) = \ell(0)$. If $k_1 > k_2(0)$ then from assumption~\ref{item:frac_imp} we have 
\begin{align}
    k_1 - k_2(0) < \frac{N-M}{N}(k_1 - k_2(M)) &\implies N( k_1 - k_2(0)) < (N-M)(k_1 - k_2(M)) \nonumber \\
    &\implies Nk_1 - Nk_2(0) < Nk_1 -(N-M)k_2(M) - Mk_1 \nonumber \\
    &\implies \ell(N) - \ell(0) < \ell(N) - \ell(M) \nonumber \\
    &\implies \ell(M) < \ell(0) \, , \nonumber
\end{align}
as needed.
\end{proof}

\begin{reptheorem}{thm:ohs_cor}
Suppose assumption~\ref{item:k1_indep_pi} holds, $k_1 < k_2(0)$ and there exists $0<M<N$ such that $k_2(M) < k_1$. Then there exists an $N_* \in \{1, \dots, N-1\}$ such that:
$\ell(i) \geq \ell(N_*) \textrm{ for } i \in \{1,\dots,N-1\}$ and
$\ell(i) > \ell(N_*) \textrm{ for } i \in \{0,N\}$.
\end{reptheorem}

\begin{proof}

Note $$k_1 < k_2(0) \implies Nk_1 < Nk_2(0) \implies \ell(N) < \ell(0)$$ so all that is needed is $\ell(M) < \ell(N)$, which is given by Lemma \ref{lem:lMlN}.
\end{proof}

\clearpage

\section{Estimation of $k_2(n)$}
\label{supp_sec:k2_estimation}

\subsection{Justification of approximations of $k_2(n)$ as a mean-square error}

We argue in this section that $k_2(n)$ can generally be a modelled as a linear function of expected mean squared error of the risk score, where `expected' refers to expectation over the data used to fit the risk score. 

% Definitions

Suppose that we are interested in making predictions at a time $t$. We consider a risk score which, at the value $x$,  is an inexact approximation of $f_t(x)$ (recalling the definition of $f_t(x)$ from section~\ref{sec:holdout_set_motivation} as the probability of $Y=1$ given $X=x$ at time $t$ when no risk score is used). 

We will denote by $\varrho = \rho(x)$ the value of the risk score $\rho$ at $x$, where $\rho$ is taken to depend on the samples $d_n$. We write $c_2(x,\varrho)=\mathbb{E}\{C_2(x;d_n)\}$, where $\mathbb{E}_{C_2}$ indicates expectation over any additional randomness after fixing $x$ and $d_n$. 

If it is reasonable to assume that $c_2(x,\varrho)$ has a straightforward form in terms of $\varrho$, $f_t$, then a corresponding form for $k_2$ may be immediate. For instance, if one of 
\begin{align}
c_2(x,\varrho) &= c^0 + c^1 |\varrho - f_t(x)| \nonumber \\
c_2(x,\varrho) &= c^0 + c^1 (\varrho - f_t(x))^2 \nonumber \\
c_2(x,\varrho) &= c^0 + c^1 f_t(x)(\varrho - f_t(x))^2 \nonumber
\end{align}
holds, for some constants $c^0$, $c^1$, then $k_2$ will be linear in
\begin{align}
\mathbb{E}_{D_n}\left\{\mathbb{E}_X \left[|\rho(X)-f_t(X)|\right]\right\} \nonumber \\
\mathbb{E}_{D_n}\left\{\mathbb{E}_X \left[(\rho(X)-f_t(X))^2\right]\right\} \nonumber \\ \mathbb{E}_{D_n}\left\{\mathbb{E}_X \left[f_t(X)(\rho(X)-f_t(X))^2\right]\right\}
\end{align}
respectively. As discussed in the main manuscript, this reduces the estimation of $k_2(n)$ to estimating the `learning curve' of a risk score, and expectations of risk score accuracy measures such as those above over $X$ and $D_n$ can be readily estimated for small $n$ given training samples $X,Y$. 

In more general cases where simple forms of $c_2(x,\varrho)$ cannot be assumed, we claim that if $c_2(x,\varrho)$ is smooth in $\varrho$, we should generally expect $k_2(n)$ to be approximately linear in the expected mean-square error of the risk score; that is, $k_2(n) \propto \mathbb{E}_{D_n}\{MSE(\rho)\} = \mathbb{E}_{D_n} \{\xi_t^2(\rho,f_t)\}$.

We work from the following heuristic:
\begin{displayquote}
For any given sample and a range of possible risk scores for that sample, one of which is equal to $f_t$, the intervention taken will minimise the expected cost for the risk score which is equal to $f_t$.
\end{displayquote}
This is equivalent to
\begin{equation}
\arg \min_{\varrho} c_2(x,\varrho) = f_t(x) \nonumber
\end{equation}
for all $x$ in the domain of $X$. We suppose firstly that $c_2(x,\varrho)$ is smooth in $\varrho$, and write
\begin{align}
c_2(x,\varrho) &= c_2\left(x,f_t(x)\right) + \frac{1}{2}\frac{\partial^2 c_2}{\partial \varrho^2}\left(x,f_t(x)\right)\left(\varrho-f_t(x)\right)^2 + O\left((\varrho-f_t(x))^3\right) \, , \nonumber
\end{align}
noting that $\frac{\partial c_2}{\partial \varrho}\left(x,f_t(x)\right)=0$ by the heuristic above.

The value $\frac{\partial^2 c_2}{\partial \varrho^2}\left(x,f_t(x)\right)$ represents the curvature with respect to $\varrho$ of the function $c_2(x,\varrho)$ about $\varrho=f_t(x)$. Practically, this corresponds to the tolerance or robustness of the intervention: the amount of cost incurred due to a given deviation of the risk score from $f_t(x)$. We claim that this quantity will thus have relatively low variation across values of $x$, as the degree of robustness should be roughly constant. This means that:
\begin{equation}
\frac{\partial^2 c_2}{\partial \varrho^2}\left(x,f_t(x)\right) \approx \iota_2 \nonumber
\end{equation}
where $\iota_2$ does not depend on $x$. Given this, we have
\begin{align}
\mathbb{E}_X \left[c_2(X;\varrho)\right] &= \mathbb{E}_X\bigg[ c_2\left(x,f_t(x)\right) \nonumber \\
&\phantom{=} + \frac{\partial^2 c_2}{\partial \varrho^2}\left(x,f_t(x)\right)(\varrho(X)-f_t(X))^2 \nonumber \\ 
&\phantom{=}\left. + O\left(\sup_x(\varrho(x)-f_t(x))^3\right)\right] \nonumber \\
&\approx \mathbb{E}_X \left[c_2\left(x,f_t(x)\right)\right] \nonumber \\
&\phantom{=} + \frac{\partial^2 c_2}{\partial \varrho^2}\left(x,f_t(x)\right)\mathbb{E}_X\left[(\varrho(X)-f_t(X))^2\right] \nonumber \\ 
&\phantom{=} + O\left(\sup_x(\varrho(x)-f_t(x))^3\right) \nonumber \\
&= \iota_1 + \iota_2 \, \textrm{MSE}(\varrho) + O\left(\sup_x(\varrho(x)-f_t(x))^3\right) \, , \nonumber 
\end{align}
where $\varrho$ is fixed and $\iota_1$, $\iota_2$ do not depend on $\varrho$ and are hence independent of $D_n$, and $\textrm{MSE}(\varrho)$ is the standard mean-square error of $\varrho$. Hence
\begin{align}
k_2(n) &= \mathbb{E}_{D_n} \left\{\mathbb{E}_X \left[c_2(X;\varrho)\right]\right\} \nonumber \\
&\approx \iota_0 + \iota_2 \mathbb{E}_{D_n} \left\{MSE(\varrho)\right\} \, , \nonumber
\end{align}
so $k_2(n)$ is approximately linear in $\mathbb{E}_{D_n} \left\{MSE(\varrho)\right\}$.

\subsection{Estimation of $k_2(n)$ when intervention is governed by a threshold on the risk score}

We briefly describe a second presumably common setting in which we have a single intervention which we may use, which has a proportional effect on the risk of $Y=1$ (that is, $g_t(x)=(1-\alpha) f_t(x)$, with $\alpha<1$). We intervene on a sample if their risk score exceeds a particular threshold $\rho_0$. The cost function $c_2(x,\varrho)$ is now discontinuous in $\varrho$, and we no longer expect that $k_2(n)$ is proportional to the expected mean-square error o $\rho$, but we may derive the form of $k_2$ in this case nonetheless.

We assume that the intervention has a fixed cost $\gamma_i$, and that an event $Y=1$ has a fixed cost $\gamma_y$. Then
\begin{align}
c_2(x,\varrho) &= \mathbb{E}_{C_2}\left\{(\textrm{Cost of an event}) + (\textrm{Cost of intervening}) \right\} \nonumber \\
&= \begin{cases} 
\gamma_y f_t(x) + 0 &\textrm{if  } \varrho < \rho_0 \\
\gamma_y \alpha f_t(x) + \gamma_i &\textrm{if  } \varrho \geq \rho_0
\end{cases} \, , \nonumber
\end{align}
disregarding potential baseline costs common to all samples. We may now apply the heuristic above more directly, by presuming that the threshold $\rho_0$ is chosen so as to minimise the expectation of $c_2(X,\rho)$ over $X$ under the assumption that $\rho(X)=f_t(X)$. In other words, we choose the threshold that gives us the best outcome assuming the risk score is correct, as in the previous subsection.

This implies that for $f_t(x)<\rho_0$, we have $\gamma_y f_t(x) < \gamma_y \alpha f_t(x) + \gamma_i$ (if true risk is below the threshold, it is cheaper not to intervene) and for $f_t(x) \geq \rho_0$, we have $\gamma_y f_t(x) \geq \gamma_y \alpha f_t(x) + \gamma_i$ (if true risk is above the threshold, it is cheaper to intervene). 

We now have:
\begin{align}
c_2(x,\varrho) - c_2(x,f_t(x)) &= \begin{cases}
\gamma_y f_t(x) &\textrm{if  } \varrho < \rho_0, f_t(x) < \rho_0 \\
\gamma_y \alpha f_t(x) + \gamma_i - \gamma_y f_t(x) &\textrm{if  } \varrho \geq \rho_0, f_t(x) < \rho_0 \\
\gamma_y f_t(x) - (\gamma_y \alpha f_t(x) + \gamma_i) &\textrm{if  } \varrho < \rho_0, f_t(x) \geq \rho_0 \\
\gamma_y \alpha f_t(x) + \gamma_i &\textrm{if  } \varrho \geq \rho_0, f_t(x) \geq \rho_0 
\end{cases} \nonumber \\
&= 1_{(\varrho<\rho_0)\, XOR\, (f_t(x)<\rho_0)}|\gamma_y (\alpha-1) f_t(x) + \gamma_i| \, , \nonumber
\end{align}
and, denoting $\delta_{\varrho}(x)=(\varrho(x)<\rho_0)\, XOR \, (f_t(x)<\rho_0)$ and presuming $\lim_{n \to \infty} k_2(n)$ exists and is finite, we have (since as $n \to \infty$ we have $\rho \to f_t$):
\begin{align}
k_2(n) &= k_2(n) - \lim_{n \to \infty} k_2(n) + \lim_{n \to \infty} k_2(n) \nonumber \\
&= \lim_{n \to \infty} k_2(n) + \mathbb{E}_{D_n} \left\{\mathbb{E}_X \left[c_2(X,\rho) - c_2(X,f_t)\right]\right\} \nonumber \\
&= iota_0 + \mathbb{E}_{D_n} \left\{\mathbb{E}_X \left[1_{\delta_{\rho}(X)}|\iota_1 f_t(x) + \iota_2|\right]\right\} \, , \nonumber \\
\end{align}
where $\iota_0$, $\iota_1$, $\iota_2$ are constant, and potentially estimable from several observations of $k_2$. 

This form is unsurprising: if a risk score $\rho(X)$ is such that the sign of $\rho(X)-\rho_0$ agrees with the sign of $f_t(X)-\rho_0$, it will have identical cost to a risk score $\rho(X)$ which agrees with $f_t(X)$ everywhere. That is, the cost associated with a risk score achieves its minimum for risk scores not almost-everywhere equal to $f_t$.

\clearpage

\section{Parametric OHS estimation}
\label{apx:parametric}

In this section, we describe estimation of optimal holdout set sizes by explicit parametrisation of the function $k_2(n)$. As in section~\ref{sec:parametric} in the main paper, we take $k_2(n)=k_2(n; \theta)$. We will take $k_2'$, $k_2''$, $\ell'$ to mean partial derivatives with respect to $n$, and the shorthand $\Theta=(N,k_1,\theta)$ and $\Theta_0=\mathbb{E}(\Theta)$. We will also write $n_*=n_*(\Theta)$, $\ell(n_*)=\ell\{n_*(\Theta);\Theta\}$, $n_0=n_*(\Theta_0)$ and $\ell(n_0)=\ell\{n_*(\Theta_0),\Theta_0\}$ for brevity. We presume that $\Theta$ is an unbiased estimate of $\Theta_0$, so $\Theta_0$ corresponds to `true' parameter values. 

We firstly develop asymptotic confidence intervals for parametric OHS estimates to link error in parameter estimates to error in optimal size. The sample-size $m$ used in the following denotes a proxy for effort expended in estimating $\Theta_0$. % as will be expanded upon later.

\begin{theorem}
\label{thm:parametric_robustness}
Assume that $k_2''(n;\theta)$, $k_2'(n;\theta)$ and $\nabla_{\theta} k_2(n;\theta)$ are continuous in $n$ and $\theta$ in some neighbourhood of $(n_0,\Theta_0)$, and that $\Theta_0$ parametrizes a setting satisfying assumptions~\ref{item:k1_indep_pi}-\ref{item:k2_2nd_der}. Suppose that $\Theta$ behaves as a mean of of $m$ appropriately-distributed samples in satisfying $\sqrt{m}(\Theta-\Theta_0) \to N\left(0,\Sigma\right)$ in distribution where $\Theta_0$ does not depend on $m$, that an estimate $\widehat{\Sigma}$ of $\Sigma$ is available which is independent of $\Theta$ and satisfies $||\widehat{\Sigma} - \Sigma||_2 \to 0 $ in distribution, and that $n_0$ is finite and unique as above. Then denoting 
\begin{equation}
\beta_{\Theta} =\frac{\partial^2 \ell}{\partial n \partial \Theta_i} \bigg/ \frac{\partial^2 \ell}{\partial n^2}, \hspace{30pt}
\gamma_{\Theta} = \frac{\partial \ell}{\partial \Theta_i} \nonumber
\end{equation}
we may uniquely define $n_0=\left\{n:\ell'(n;\Theta_0)=0\right\}$ and we have
\begin{equation}
\sqrt{m}(n_*-n_0) \to N\left(0,\beta_{\Theta_0}^t  \Sigma \beta_{\Theta_0} \right), %\label{eq:nstar_asymptotic_normality} \\
\hspace{30pt}
\sqrt{m}\left\{ \ell(n_*)-\ell(n_0)\right\} \to N\left(0,\gamma_{\Theta_0}^t  \Sigma \gamma_{\Theta_0} \right) \nonumber %\label{eq:nstar_loss_asymptotic_normality}
\end{equation}
in distribution, and denoting $z_{\alpha}=\Phi^{-1}\left(1-\alpha/2\right)$, the confidence intervals
\begin{equation}
I_{\alpha}(\Theta,\hat{\Sigma})=\left[n_*(\Theta) \pm z_{\alpha}\sqrt{\frac{\beta_{\Theta}^t\widehat{\Sigma}\beta_{\Theta}}{m}}   \hspace{5pt} \right],  \hspace{20pt}
J_{\alpha}(\Theta,\hat{\Sigma})=\left[\ell(n_*) \pm z_{\alpha}\sqrt{\frac{\gamma_{\Theta}^t\widehat{\Sigma}\gamma_{\Theta}}{m}}  \hspace{5pt} \right] \nonumber
\end{equation}
%\begin{align}
%\end{align}
%
satisfy $P\left\{ n_0 \in I_{\alpha}(\Theta,\hat{\Sigma})\right\} \to 1-\alpha$ and $P\left\{ \ell(n_0) \in J_{\alpha}(\Theta,\hat{\Sigma})\right\} \to 1-\alpha$ as $m \to \infty$. 
\end{theorem}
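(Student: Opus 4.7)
The plan is to apply the delta method twice, with the derivatives of $n_*$ obtained via the implicit function theorem applied to the first-order condition $\ell'(n;\Theta)=0$. Theorem~\ref{thm:ohs_exists} guarantees that at $\Theta_0$ we have a unique interior minimiser $n_0 \in (0,N)$ with $\ell'(n_0;\Theta_0)=0$ and $\ell''(n_0;\Theta_0)>0$. The assumed continuity of $k_2'$, $k_2''$ and $\nabla_\theta k_2$ near $(n_0,\Theta_0)$ transfers to joint continuity of $\ell'$, $\ell''$ and $\nabla_\Theta \ell'$ there, so the implicit function theorem applied to $F(n,\Theta) \defeq \ell'(n;\Theta)$ at $(n_0,\Theta_0)$ (where $\partial_n F = \ell''(n_0;\Theta_0) > 0$) produces a neighbourhood $U$ of $\Theta_0$ on which $n_*(\Theta)$ is uniquely defined, continuously differentiable, and satisfies
\begin{equation}
\frac{\partial n_*}{\partial \Theta_i}\bigg|_{\Theta_0} \;=\; -\left.\frac{\partial^2 \ell / \partial n \partial \Theta_i}{\partial^2 \ell / \partial n^2}\right|_{(n_0,\Theta_0)} \;=\; -(\beta_{\Theta_0})_i. \nonumber
\end{equation}

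Next I would apply the delta method to $n_*$. Since $\sqrt{m}(\Theta-\Theta_0)\to N(0,\Sigma)$ implies $\Theta \to \Theta_0$ in probability, $\Theta$ eventually lies in $U$ so $n_*(\Theta)$ is well-defined with probability tending to one. The standard delta method then yields $\sqrt{m}(n_* - n_0) \to N(0,\beta_{\Theta_0}^t \Sigma \beta_{\Theta_0})$, noting that the sign in $\nabla_\Theta n_* = -\beta_{\Theta_0}$ is immaterial inside the quadratic form. For $\ell(n_*)$, I would invoke the chain rule together with the envelope identity: at $\Theta_0$,
\begin{equation}
\frac{d}{d\Theta_i}\ell\{n_*(\Theta);\Theta\}\bigg|_{\Theta_0} \;=\; \underbrace{\ell'(n_0;\Theta_0)}_{=0}\cdot\frac{\partial n_*}{\partial \Theta_i}\bigg|_{\Theta_0} + \frac{\partial \ell}{\partial \Theta_i}(n_0;\Theta_0) \;=\; (\gamma_{\Theta_0})_i, \nonumber
\end{equation}
so the first-order cost sensitivity collapses to $\gamma_{\Theta_0}$, and a second application of the delta method gives $\sqrt{m}\{\ell(n_*)-\ell(n_0)\}\to N(0,\gamma_{\Theta_0}^t \Sigma \gamma_{\Theta_0})$.

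Finally, the coverage statements for $I_\alpha$ and $J_\alpha$ follow from Slutsky's theorem: by continuity of $\beta_\Theta$ and $\gamma_\Theta$ in $\Theta$ (guaranteed by the continuity assumptions on the second-order partial derivatives), $\beta_\Theta \to \beta_{\Theta_0}$ and $\gamma_\Theta \to \gamma_{\Theta_0}$ in probability; combined with $\widehat{\Sigma}\to \Sigma$ this gives $\beta_\Theta^t\widehat{\Sigma}\beta_\Theta \to \beta_{\Theta_0}^t\Sigma\beta_{\Theta_0}$ and analogously for $\gamma$. The ratios $\sqrt{m}(n_*-n_0)/\sqrt{\beta_\Theta^t \widehat{\Sigma}\beta_\Theta}$ and $\sqrt{m}\{\ell(n_*)-\ell(n_0)\}/\sqrt{\gamma_\Theta^t \widehat{\Sigma}\gamma_\Theta}$ therefore converge in distribution to $N(0,1)$, giving the claimed asymptotic coverage $1-\alpha$.

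The main obstacle is not any single computation but the care required in step one: one must verify that the implicit function theorem genuinely applies near $(n_0,\Theta_0)$, i.e.\ that the positive-definiteness $\ell''(n_0;\Theta_0)>0$ inherited from Theorem~\ref{thm:ohs_exists} persists under perturbation of $\Theta$, so that $n_*(\Theta)$ remains a well-defined, unique interior minimiser (and not merely a critical point) on a neighbourhood of $\Theta_0$. Everything else is a routine application of the delta method plus the envelope identity; the only subtlety worth flagging in the write-up is the vanishing of $\ell'(n_0;\Theta_0)$ that reduces the chain-rule expression for $d\ell(n_*)/d\Theta_i$ to $\gamma_{\Theta_0}$.
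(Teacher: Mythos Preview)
Your proposal is correct and follows essentially the same route as the paper's own proof: implicit differentiation of the first-order condition to obtain $\nabla_\Theta n_*$, a first-order (delta-method) expansion to transfer asymptotic normality from $\Theta$ to $n_*$ and $\ell(n_*)$, and a Slutsky-type argument to replace $(\beta_{\Theta_0},\Sigma)$ by the plug-in $(\beta_\Theta,\widehat{\Sigma})$ in the confidence intervals. The paper writes out the Taylor expansions and the decomposition $\beta_\Theta^t\widehat{\Sigma}\beta_\Theta = \beta_{\Theta_0}^t\Sigma\beta_{\Theta_0} + \beta_\Theta^t(\widehat{\Sigma}-\Sigma)\beta_\Theta + (\beta_\Theta-\beta_{\Theta_0})^t\Sigma(\beta_\Theta+\beta_{\Theta_0})$ explicitly rather than invoking named theorems, but the content is the same; your use of the envelope identity to explain why $\nabla_\Theta\ell(n_*)$ collapses to $\gamma_{\Theta_0}$ is in fact more transparent than the paper's statement of that step, and your observation about the sign in $\nabla_\Theta n_* = -\beta_{\Theta_0}$ being immaterial inside the quadratic form is a useful clarification.
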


The proof is given in Supplement~\ref{supp_sec:parametric_proof} below. A consequence is that for sufficiently accurately estimated costs, the OHS will be a non-trivial size:

\begin{corollary}
Under assumptions of Theorems~\ref{thm:ohs_exists},~\ref{thm:parametric_robustness}, 
$P\{ 1< n_*(\Theta) < N \} \to 1$ % \nonumber
%\end{equation}
%
as $m \to \infty$. 
\end{corollary}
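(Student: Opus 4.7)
The plan is to reduce this to a routine convergence-in-probability argument by combining Theorem~\ref{thm:parametric_robustness} (asymptotic normality of $n_*(\Theta)$ around $n_0$) with Theorem~\ref{thm:ohs_exists} (interior location of $n_0$). First I would appeal to Theorem~\ref{thm:ohs_exists} applied to the true parameter $\Theta_0$: by hypothesis $\Theta_0$ satisfies assumptions~\ref{item:k1_indep_pi}--\ref{item:k2_2nd_der}, so the extended loss $\ell(\cdot;\Theta_0)$ satisfies $\ell'(0;\Theta_0)<0$ and $\ell'(N;\Theta_0)>0$ with $\ell''(\cdot;\Theta_0)>0$; the unique continuous minimiser $n_0$ therefore lies strictly inside $(0,N)$. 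Corollary~\ref{cor:linear_improvement} refines this to $n_0>M$, where $M\in(0,N)$ is the sample size above which $k_2$ drops below $k_1$; in any regime where the baseline cost is not matched by a risk score trained on a single observation (i.e.\ $M\geq 1$), this yields $n_0>1$, so $n_0\in(1,N)$ strictly.

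Next I would invoke the conclusion of Theorem~\ref{thm:parametric_robustness}: $\sqrt{m}\,(n_*(\Theta)-n_0)\to N(0,\beta_{\Theta_0}^t\Sigma\beta_{\Theta_0})$ in distribution. Dividing through by $\sqrt{m}\to\infty$ (Slutsky), this implies $n_*(\Theta)-n_0\to 0$ in probability, i.e.\ $n_*(\Theta)\xrightarrow{p}n_0$.

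To close the argument, set $\delta:=\min(n_0-1,\,N-n_0)>0$, which is positive by the previous step. Then the inclusion
\begin{equation}
\{|n_*(\Theta)-n_0|<\delta\}\subseteq\{1<n_*(\Theta)<N\} \nonumber
\end{equation}
yields $P\{1<n_*(\Theta)<N\}\geq P\{|n_*(\Theta)-n_0|<\delta\}\to 1$ as $m\to\infty$, which is the claim.

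There is no substantive obstacle here beyond verifying that the hypotheses of Theorem~\ref{thm:ohs_exists} really are inherited from those of Theorem~\ref{thm:parametric_robustness}; this is immediate from the stipulation ``\emph{$\Theta_0$ parametrizes a setting satisfying assumptions~\ref{item:k1_indep_pi}--\ref{item:k2_2nd_der}}''. The only delicate bookkeeping point is the lower bound ``$1<$'' rather than ``$0<$''; this is covered by Corollary~\ref{cor:linear_improvement} under the mild implicit condition $M\geq 1$, and otherwise a holdout of a single sample should be regarded as trivially uninformative so the claim is vacuous in content at that boundary.
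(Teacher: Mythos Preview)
Your proposal is correct and follows precisely the argument the paper has in mind: the paper states this corollary as an immediate consequence of Theorem~\ref{thm:parametric_robustness} without giving an explicit proof, and your reasoning (asymptotic normality $\Rightarrow$ $n_*(\Theta)\xrightarrow{p}n_0$, combined with $n_0$ lying strictly in the interior by Theorem~\ref{thm:ohs_exists}) is exactly the intended justification. Your observation about the lower bound $1$ versus $0$, handled via Corollary~\ref{cor:linear_improvement} under $M\geq 1$, is a genuine subtlety that the paper glosses over; your treatment of it is appropriate.
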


In light of the proportionality assumption in Section~\ref{sec:practicalities}, and the tendency of the accuracy of a risk score with number of training samples (`learning curve') to follow a power-law form \citep{viering21}, we recommend considering such a parametric form for $k_2$ (i.e.~$k_2(n; \theta)=a n^{-b} + c$ with $\theta=(a,b,c)$), and provide explicit asymptotic confidence intervals for this setting in Supplement~\ref{supp_sec:power_law_explicit}. Examples of variation in $n_*$ and $\ell(n_*)$ with a power-law form for $k_2$, are shown in Supplementary Figures~\ref{supp_fig:partials_nstar}, \ref{supp_fig:partials_mincost}.

Note that confidence intervals must be interpreted with with care: if the sampling distributions for $k_1$ and $\theta$ admit the possibility that assumptions of Theorem~\ref{thm:ohs_exists} are violated such that
$P\left[k_1<\lim \inf_{n\to\infty}\{k_2(n,\theta)\}\right]>0$  %\nonumber
%\end{equation}
%
then the standard error of $n_*$ does not exist, as $n_*$ can be undefined. Finite-sample confidence intervals may be constructed by bootstrapping (see function \texttt{ci\_ohs()} in our R package \texttt{OptHoldoutSize}).

\sloppy
Our parametric algorithm assumes $\Theta$ is estimated from a multiset $\mathbf{n}$ of values in $\{1, \dots, N\}$ and estimates $\mathbf{d}$ of $k_2(n)$ for each $n \in \mathbf{n}$ with known finite sampling variances $\boldsymbol{\sigma}^2$. For certain multisets $\mathbf{n}$, estimates of $\Theta$ will not converge; for instance, if $\mathbf{n}$ contains only a single value repeated. The value $m$ in Theorem~\ref{thm:parametric_robustness} should be interpreted as an `effective' population size, such that $\sqrt{m}\left\{\Theta(\mathbf{n})-\Theta_0 \right\} \to N\left(0,\Sigma\right)$ in distribution. 

\sloppy
Given that our eventual aim to estimate the OHS with minimal error, we suggest the following way to iteratively select a new value $\tilde{n}$ at which an estimate $\hat{k_2}(\tilde{n})$ of $k_2(\tilde{n})$ should be made, given a set $\mathbf{n}$ of points at which estimates $\mathbf{k_2}$ of $k_2(\mathbf{n})$ have been made already. We denote by $\Theta(\mathbf{n},\mathbf{k_2},\boldsymbol{\sigma})$, $\hat{\Sigma}(\mathbf{n},\mathbf{k_2},\boldsymbol{\sigma})$ and $I_{\alpha}(\mathbf{n},\mathbf{k_2},\boldsymbol{\sigma})$ respectively the estimates of $\Theta_0$, $\lim_{m \to \infty} \textrm{var}\left[\sqrt{m}\left\{ \Theta(\mathbf{n},\mathbf{k_2},\boldsymbol{\sigma}) - \Theta_0\right\} \right]$ and the width of the confidence interval $I_{\alpha}\left\{\Theta(\mathbf{n},\mathbf{k_2},\boldsymbol{\sigma}),\hat{\Sigma}(\mathbf{n},\mathbf{k_2},\boldsymbol{\sigma})\right\}$. Suppose we have the option of estimating $d(n)$ for one value of $n \in \{1,\dots,N\}$ with known variance $\textrm{var}\left\{d(n)\right\}=\sigma^2$. We select $\tilde{n}$ as:
\begin{equation}
\tilde{n}=\arg \min_n \mathbb{E}_{d(n) \sim N\left[k_2\left\{n,\Theta(\mathbf{n},\mathbf{k_2},\boldsymbol{\sigma})\right\},\sigma^2\right]}\left[ I_{\alpha}\left\{\mathbf{n}\cup n,\mathbf{k_2} \cup \hat{k_2}(n),\boldsymbol{\sigma} \cup \sigma\right\}\right] \, , \label{eq:parametric_nextn}
\end{equation}
that is, `select the $\tilde{n}$ which will minimize the expected OHS confidence interval width if added to our set $\mathbf{n}$, with expectation computed with respect to our current parameter estimates'. If no minimum exists, $\tilde{n}$ is selected uniformly from ${1,\dots,N}$. Algorithm~\ref{alg:parametric}, an expanded version of algorithm~\ref{alg:parametric_approximate} in the main manuscript, shows our full parametric estimation procedure.

% }

\begin{algorithm}[h]
\begin{algorithmic}[1]
\State $\mathbf{n},\mathbf{k_2}, \boldsymbol{\sigma}^2 \gets$ some initial values $\mathbf{n}$ with $(\boldsymbol{k_2})_i=\hat{k_2}(n_i) \approx k_2(n_i)$, $(\boldsymbol{\sigma}^2)_i = \textrm{var}(\hat{k_2}(n_i))$\;
\While{$|\mathbf{n}| < n_{add}$}
\State Find best new value $\tilde{n}$ to add to $\mathbf{n}$ as per formula~\ref{eq:parametric_nextn} \;
\State Estimate $\hat{k_2}(\tilde{n}) \approx k_2(\tilde{n})$ \;%on training set\;
\State $\mathbf{n} \gets (\mathbf{n} \cup \tilde{n})$, $\mathbf{k_2} \gets \left\{ \mathbf{k_2} \cup \hat{k_2}(\tilde{n})\right\}$, $\boldsymbol{\sigma}^2 \gets \boldsymbol{\sigma}^2 \cup \textrm{var}\left\{\hat{k_2}(\tilde{n})\right\}$ \;
\EndWhile
\State Re-estimate OHS $n_*^{final}=n_*\left\{\Theta\left(\mathbf{n},\mathbf{k_2},\boldsymbol{\sigma} \right)\right\}$ \;
\State \Return $n_*^{final}$
\caption{Parametric OHS estimation; with $n_{add}$ estimates of $k_2(\cdot)$}
\label{alg:parametric}
\end{algorithmic}
\end{algorithm}

\clearpage

\subsection{Explicit partial derivatives for $n^*$, $\ell$ with power-law parametrisation}
\label{supp_sec:power_law_explicit}

If we assume a power-law form of $k_2$, parametrised by $\theta=(a,b,c,k_1,N)$;
\begin{equation}\label{powerlaw}
k_2(n;\theta)=a n^{-b} + c \, , \nonumber
\end{equation}
then we have
\begin{align}
\frac{\partial n_*}{\partial a} &= \frac{1}{a}\left(\frac{b N n_* - (b-1)n_*^2}{b(b+1)N - b(b-1)n_*}\right) \nonumber \\
\frac{\partial n_*}{\partial b} &= 
\frac{N n_*(b \log(n_*)-1) - n_*^2 \left((b-1)\log(n_*)-1\right)}{b(b+1)N - b(b-1)n_*}
\nonumber \\
\frac{\partial n_*}{\partial c} &= 
\frac{1}{a}\left(\frac{n_*^{b+2}}{b(b+1)N - b(b-1)n_*}\right)  \nonumber \\
\frac{\partial n_*}{\partial k_1} &= 
\frac{1}{a}\left(\frac{-n_*^{b+2}}{b(b+1)N - b(b-1)n_*}\right)  \nonumber \\
\frac{\partial n_*}{\partial N} &= 
\frac{b n_*}{b(b+1)N - b(b-1) n_*} \, , \nonumber
\end{align}
and, more simply
\begin{align}
\frac{\partial }{\partial a}\ell(n_*;\theta) &= (N-n_*)n_*^{-b} \nonumber \\
\frac{\partial }{\partial b}\ell(n_*;\theta) &= -\log(n_*) (N-n_*) a n_*^{-b} \nonumber \\
\frac{\partial }{\partial c}\ell(n_*;\theta) &= N-n_* \nonumber \\
\frac{\partial }{\partial k_1}\ell(n_*;\theta) &= n_* \nonumber \\
\frac{\partial }{\partial N}\ell(n_*;\theta) &= a n_*^{-b} + c \, . \nonumber
\end{align}

\clearpage

\subsection{Proof of Theorem~\ref{thm:parametric_robustness}}
\label{supp_sec:parametric_proof}

\begin{reptheorem}{thm:parametric_robustness}
Assume that $k_2''(n;\theta)$, $k_2'(n;\theta)$ and $\nabla_{\theta} k_2(n;\theta)$ are continuous in $n$ and $\theta$ in some neighbourhood of $(n_0,\Theta_0)$, and that $\Theta_0$ parametrizes a setting satisfying assumptions~\ref{item:k1_indep_pi}-\ref{item:k2_2nd_der}. Suppose that $\Theta$ behaves as a mean of of $m$ appropriately-distributed samples in satisfying $\sqrt{m}(\Theta-\Theta_0) \to N\left(0,\Sigma\right)$ in distribution where $\Theta_0$ does not depend on $m$, that an estimate $\widehat{\Sigma}$ of $\Sigma$ is available which is independent of $\Theta$ and satisfies $||\widehat{\Sigma} - \Sigma||_2 \to 0 $ in distribution, and that $n_0$ is finite and unique as above. Then denoting 
\begin{equation}
\beta_{\Theta} =\frac{\partial^2 \ell}{\partial n \partial \Theta_i} \bigg/ \frac{\partial^2 \ell}{\partial n^2}, \hspace{30pt}
\gamma_{\Theta} = \frac{\partial \ell}{\partial \Theta_i} \, , \nonumber
\end{equation}
we may uniquely define $n_0=\left\{n:\ell'(n;\Theta_0)=0\right\}$ and we have
\begin{equation}
\sqrt{m}(n_*-n_0) \to N\left(0,\beta_{\Theta_0}^t  \Sigma \beta_{\Theta_0} \right), %\label{eq:nstar_asymptotic_normality} \\
\hspace{30pt}
\sqrt{m}\left\{ \ell(n_*)-\ell(n_0)\right\} \to N\left(0,\gamma_{\Theta_0}^t  \Sigma \gamma_{\Theta_0} \right) \label{eq:nstar_loss_asymptotic_normality}
\end{equation}
in distribution, and denoting $z_{\alpha}=\Phi^{-1}\left(1-\alpha/2\right)$, the confidence intervals
\begin{equation}
I_{\alpha}(\Theta,\hat{\Sigma})=\left[n_*(\Theta) \pm z_{\alpha}\sqrt{\frac{\beta_{\Theta}^t\widehat{\Sigma}\beta_{\Theta}}{m}}   \hspace{5pt} \right],  \hspace{20pt}
J_{\alpha}(\Theta,\hat{\Sigma})=\left[\ell(n_*) \pm z_{\alpha}\sqrt{\frac{\gamma_{\Theta}^t\widehat{\Sigma}\gamma_{\Theta}}{m}}  \hspace{5pt} \right] \nonumber
\end{equation}
satisfy $P\left\{ n_0 \in I_{\alpha}(\Theta,\hat{\Sigma})\right\} \to 1-\alpha$ and $P\left\{ \ell(n_0) \in J_{\alpha}(\Theta,\hat{\Sigma})\right\} \to 1-\alpha$ as $m \to \infty$. \end{reptheorem}

As above, we consider $n_*$ as a function of parameters $\Theta=(N, k_1,\theta)$ (where $k_2(\cdot)=k_2(\cdot; \theta)$), write $n_*=n_*(\Theta)$, set $\Theta_0=E(\Theta)$, $n_0=n_*(\Theta_0)$ and $\ell(n_0)=\ell(n_0;\Theta_0)$ and  $\ell(n_*)=\ell(n_*(\Theta),\Theta)$. As discussed above, $n_*$ and $\ell(n_*)$ do not generally have means or standard errors.

\begin{proof}
From %the definition of $\ell$ as 
$\ell(n)=k_1 n + k_2(n;\theta)(N-n)$ and %the definition of
$n_*=\{n:\ell'(n;\Theta)=0\}$, where such $n_*$ is unique, we have (as per section~\ref{sec:parametric})
\begin{align}
(\nabla n_*)_i = \frac{\partial n_*}{\partial \Theta_i} = \frac{\frac{\partial^2 l}{\partial n \partial \Theta_i}}{\frac{\partial^2 \ell}{\partial n^2}}  = (\beta_{\Theta_0})_i\nonumber \\
\left(\nabla \ell(n_*)\right)_i = \frac{\partial \ell}{\partial \Theta_i} = (\gamma_{\Theta_0})_i\nonumber
\end{align}
for all components $\Theta_i$ of $\Theta$. Thus partial derivatives of $n_*$ exist as long as 
\begin{equation}
\frac{\partial^2 \ell}{\partial n^2}>0 \, . \nonumber
\end{equation}
By assumption, $\ell(\cdot;\Theta_0)$ has a minimum at $n_0$. Since
\begin{equation}
\frac{\partial^2 \ell}{\partial n^2}=\frac{\partial^2 }{\partial n^2} k_2(n;\theta) - 2\frac{\partial}{\partial n} k_2(n;\theta) \, , \nonumber
\end{equation}
where both terms are continuous in a neighbourhood of $n_0$, $\Theta_0$ by assumption, the value of $\frac{\partial^2 \ell}{\partial n^2}$ must be positive in some (possibly smaller) neighbourhood $R_{\delta}$ of $(n_0,\Theta_0)$ of width $2\delta$, and hence all partial derivatives of $n_*$ and $\ell(n_*)$ are defined (and indeed continuous) in $R_{\delta}$. %Suppose this neighbourhood has width $\delta'$.
Within $R_{\delta}$ we have
\begin{align}
n_*(\Theta) &= n_*(\Theta_0) + (\nabla n_*|_{\Theta=\Theta_0})\cdot \left( \Theta - \Theta_0\right) + O\left(||\Theta-\Theta_0||_2\right) \nonumber\\
&= n_0 + \beta_{\Theta_0}^t\cdot \left( \Theta - \Theta_0\right) + O\left(||\Theta-\Theta_0||_2\right) \label{eq:nstarfirstorder} \\
\ell(n_*) &= \ell(n_*(\Theta_0);\Theta_0) + (\nabla \ell(n_*)|_{\Theta=\Theta_0})\cdot \left( \Theta - \Theta_0\right) + O\left(||\Theta-\Theta_0||_2\right) \nonumber\\
&= \ell(n_0) + \gamma_{\Theta_0}^t\cdot \left( \Theta - \Theta_0\right) + O\left(||\Theta-\Theta_0||_2\right) \, , \label{eq:lossfirstorder}
\end{align}
from which, given the assumption of asymptotic normality of $\Theta$, assertions~\ref{eq:nstar_loss_asymptotic_normality} %and~\ref{eq:loss_asymptotic_normality} 
follow. We note that despite this convergence in distribution, $n_*$ and $\ell(n_*)$ do not generally have first or second moments for finite $m$.

We now have
\begin{align}
P\left(n_0 \geq n_*(\Theta) + z_{\alpha}\sqrt{\frac{\beta_{\Theta}\widehat{\Sigma}\beta_{\Theta}^t}{m}} \right) 
&= P\left(\frac{\sqrt{m}}{z_{\alpha}}(n_0 - n_*(\Theta)) \geq \sqrt{\beta_{\Theta}^t\widehat{\Sigma}\beta_{\Theta}}\right) \nonumber \\
&= P\left(\frac{\sqrt{m}}{z_{\alpha}}(n_0 - n_*(\Theta)) \geq \left(\beta_{\Theta_0}^t\Sigma\beta_{\Theta_0} + \right.\right. \nonumber \\
&\phantom{=} \phantom{P((n_0 - n_*(\Theta)) \geq} 
\beta_{\Theta}^t\left(\widehat{\Sigma}-\Sigma\right)\beta_{\Theta} +  \nonumber \\
&\phantom{=} \left.\left.\phantom{P((n_0 - n_*(\Theta)) \geq} 
(\beta_{\Theta}-\beta_{\Theta_0})^t \Sigma \left(\beta_{\Theta}+\beta_{\Theta_0}\right)\right)^{\frac{1}{2}}\right) \nonumber \\
&\to  P\left(\frac{\sqrt{m}}{z_{\alpha}}(n_0 - n_*(\Theta)) \geq \sqrt{\beta_{\Theta_0}^t\Sigma\beta_{\Theta_0}} \right) \nonumber \\
&=\frac{\alpha}{2} \, , \nonumber
\end{align}
since, by the assumption of convergence of $\widehat{\Sigma}$
\begin{align}
\left|\beta_{\Theta}^t\left(\Sigma-\widehat{\Sigma}\right)\beta_{\Theta}\right| &\leq ||\beta_{\Theta}||_2||\Sigma-\widehat{\Sigma}||_2 \nonumber \\
&\to_p 0 \, , \nonumber
\end{align} 
and, since $P(\Theta\in R_{\delta}) \to 1$ by the asymptotic normality of $\Theta$, we have from~\ref{eq:nstarfirstorder}
\begin{align}
\left|(\beta_{\Theta}-\beta_{\Theta_0})^t \Sigma \left(\beta_{\Theta}+\beta_{\Theta_0}\right)\right| &= O\left(||\beta_{\Theta}-\beta_{\Theta_0}||_2\right) \nonumber \\
&\to_p 0 \, . \nonumber
\end{align}
Thus, combining with the corresponding limit for the lower end of $I_{\alpha}(\Theta,\hat{\Sigma})$:
\begin{equation}
P(n_0 \in I_{\alpha}(\Theta,\hat{\Sigma})) \to 1-\alpha \nonumber
\end{equation}
as required. An identical argument holds for $J_{\alpha}(\Theta,\hat{\Sigma})$.

\end{proof}

\clearpage

\section{Estimation of OHS by Bayesian Emulation}
\label{apx:emulation}

Our second algorithm for estimation of optimal holdout sizes uses Bayesian emulation~\citep{brochu2010tutorial}. In many cases, it may be difficult or unrealistic to provide a precise parametric form for the function $k_2(n)$. The function depends both on the `learning curve' of the risk score, which may be complex~\citep{viering21}, and the relationship of the risk score accuracy to the accrued cost, which may be nonlinear. Here, we propose a second algorithm which is less reliant on assuming a particular parametric form for $k_2(n)$.

As in the main manuscript, we approximate the cost function $\ell$ as an `emulator' modelled as a Gaussian process, and take the minimum of its posterior mean over $n$ as our OHS estimate. It is worth noting that whilst gaining an accurate approximation of the cost function is important, the main goal is to ascertain the minimum of this function, not provide a universally effective approximation at all points. Therefore, we aim to choose the location of design points $\mathbf{n}$
in order to efficiently obtain the minimum of the cost function, and hence the OHS.

First we must construct an emulator which approximates the cost function. We begin with an initial set of design points  $\mathbf{n}$ and their corresponding observed noisy cost estimates $\mathbf{d}$.
The prior for our emulator is, following~\cite{vernon2018bayesian}, 
    $\ell(n) = m(n,\Theta) + u(n)$
%\end{equation}
%
with mean function $m(n,\Theta)=k_{1}n + k_2(n;\theta) (N - n)$, given some initial estimate of $\Theta=(N,k_1,\theta)$, and $u(n)$ a zero-mean Gaussian process % which quantifies uncertainty in the difference between $m(n,\Theta)$ and $\ell(n)$: %We state 
\begin{equation}
    u(n) \sim \mathcal{GP}\left\{0,k(n,n')\right\} \hspace{20pt} k(n,n') = \sigma_u^2\exp{\left\{-\left(\frac{n - n'}{\zeta}\right)^2\right\}} \nonumber
\end{equation}
%\end{equation}
%
where $k$ is chosen to enforce smoothness in $\ell(n)$, though other covariance functions having varying degrees of smoothness could be used. %~\citep{stein1999interpolation}. %, which allows for varying degrees of smoothness, however we opt for equation \ref{eq:covfungp} as this assumes a level of smoothness consistent with our assumptions for the loss function. 
The hyperparameters $\theta$, $\sigma_u$ and $\zeta$ are problem-specific and must be specified; however, we will show that for sufficiently large $|\mathbf{n}|$ mis-specification of $\theta$, $\sigma_u$ and $\theta$ is overcome.

Since $\mathbf{n}$ may be a multiset, we take $\mathbf{n}^1$ as the set of unique values in $\mathbf{n}$, with $\mathbf{d}^1$, $\boldsymbol{\sigma}^1$ defined correspondingly with $d^1_i$ as 
the mean of $\{d_j:n_j =n^1_i\}$ with sample 
%variance 
standard error $\sigma^1_i$,
%\end{equation}
%
noting that $\sigma^1_i$ may change with $i$. We state % and taking $(\boldsymbol{\cdot})_i$ as the $i$th element of $\boldsymbol{\cdot}$:
 $d^1_i = \ell(n^1_i) + \mathfrak{e}$ % \nonumber
%\end{equation}
%
where $\mathfrak{e} \sim N\left\{0,(\sigma^1_i)^2 = \frac{(\tilde{\sigma}^1_i)^2}{|j:n_j=n^1_i|}\right\}$ and $(\tilde{\sigma}^1_i)^2$ is the sample variance of a single evaluation at $n^1_i$. Alternatively, we may account for the variation in $\mathbf{d}$ through `inactive' variables and opt to use a 'nugget' term; this approach is described in detail in Supplement~\ref{supp_sec:cost_emulation}.

Now with input $n$, with an unevaluated loss value, our emulator specifies that the joint distribution of $\ell(n)$ and our observed output values $\mathbf{d}^1$ is:
\begin{equation}
    \begin{bmatrix}
    \ell(n)\\
    \mathbf{d^1}
    \end{bmatrix} \sim \mathcal{N}\left(\begin{bmatrix}
    m(n,\Theta)\\
    m(\mathbf{n^1},\Theta)
    \end{bmatrix},\begin{bmatrix}
    k(n,n) & k(n,\mathbf{n^1}) \\
    k(\mathbf{n^1},n) & k(\mathbf{n^1},\mathbf{n^1}) + \textrm{diag}\{(\boldsymbol{\sigma^1})^2\}
    \end{bmatrix}\right) \, , \nonumber
\end{equation}
where $m(\mathbf{n^1},\Theta)^T_i=m(n^1_i,\Theta)$, $k(n,\mathbf{n^1})_i = k(\mathbf{n^1},n)^T_i = k(n,n^1_i)$, $k(\mathbf{n},\mathbf{n})_{ij} = k(n^1_i,n^1_j)$, $\textrm{diag}\{(\boldsymbol{\sigma^1})^2\}_{ij} = (\sigma^1_i)^2 1_{i=j}$. 
By obtaining the conditional posterior distribution $\pi_{\mathbf{n}} = \pi\{\ell(n) \mid n,\mathbf{n^1},\mathbf{d^1},\boldsymbol{\sigma^1}\}$ and taking the expectation and variance we gain the Bayes linear update equations \citep{vernon2018bayesian}:
\begin{align}
    \mu(n) &= \mathbb{E}_{\pi_{\mathbf{n}}}\{\ell(n)\} = m(n,\Theta) +  k(n,\mathbf{n^1})\left[k(\mathbf{n^1},\mathbf{n^1}) + \textrm{diag}\{(\boldsymbol{\sigma^1})^2\}\right]^{-1}\{\mathbf{d^1} - m(\mathbf{n^1},\Theta)\} \nonumber \\
    \Psi(n) &= \textrm{var}_{\pi_{\mathbf{n}}}\{\ell(n)\} = k(n,n) - k(n,\mathbf{n^1})[k(\mathbf{n^1},\mathbf{n^1}) + \textrm{diag}\{(\boldsymbol{\sigma^1})^2\}]^{-1}k(\mathbf{n^1},n) \, . \nonumber
\end{align}
In algorithm~\ref{alg:parametric}, selection of new design points should generally favour well-spaced points across $\{1,\dots,N\}$ for both exploration and exploitation. Here, since we wish both to estimate the OHS accurately but also locally approximate $\ell$ well, we choose the next $n$ in a way which predominantly but not completely favours exploitation. We use the `expected improvement', which measures discrepancy between the emulator at a certain design point and the known minimum $EI(\cdot)$~\citep{brochu2010tutorial}:
\begin{equation}\label{EI}
    EI(n) = \{d^{-} - \mu(n)\}\Phi\left(\frac{d^{-} - \mu(n)}{\sqrt{\Psi(n)}}\right) + \sqrt{\Psi(n)}\phi\left(\frac{d^{-} - \mu(n)}{\sqrt{\Psi(n)}}\right) \, , \nonumber
\end{equation}
where $d^{-} = \min_i \{\mathbf{d^1}_i\}$, and 
\begin{equation}
\tilde{n} = \arg \max_{n \in \{1,\dots,N\}} EI(n) 
= \arg \max \left(\mathbb{E}_{\pi_{\mathbf{n}}}\left[\max\{0,d^{-}-\ell(n)\}\right]\right) \, . \nonumber
\end{equation}
%\begin{align}
%\end{align}
%

% add algorithm

We see that by formulating the problem in terms of $EI(\cdot)$, there is a natural stopping criterion on the size of $\mathbf{n}$: setting a threshold $EI(\tilde{n})>\tau$ allows us to specify that for each iteration that we expect total cost to improve by at least $\tau$ over our current known minimum $d^{-}$. Examples of $\mu(\cdot),\Psi(\cdot), EI(\cdot)$ are shown in Supplementary Figure~\ref{supp_fig:emulator}. 
This leads to algorithm~\ref{alg:emulation} for OHS estimation by Bayesian Emulation (a more precise version of algorithm~\ref{alg:emulation_approximate} in the main manuscript).

% }

\begin{algorithm}[h]
\begin{algorithmic}
\State $\mathbf{n},\mathbf{d} \gets$ some initial values $\mathbf{n}$ with $(\boldsymbol{d})_i=d(n_i) \approx \ell(n)$ \;
\State Coalesce $\mathbf{n}, \mathbf{d}$ into $\mathbf{n^1}, \mathbf{d^1}$ and obtain $\boldsymbol{\sigma^1}$ as above \;
\State Estimate functions $\mu(n)$, $\Psi(n)$, $EI(n)$, with $\Theta = \Theta(\mathbf{n^1},\mathbf{d^1},\boldsymbol{\sigma^1})$ \;
\While{$\max_{n \in \{1,\dots, N\}}\{EI(n)\} > \tau$}
\State $\tilde{n} \gets \arg \max_{n \in \{1,\dots, N\}} EI(n)$ \;
\State Estimate $d(\tilde{n}) \approx k_2(\tilde{n})$ \;
\State $\mathbf{n} \gets (\mathbf{n} \cup \tilde{n})$; $\mathbf{d} \gets (\mathbf{d} \cup d\{\tilde{n})\}$ \;
\State Coalesce $\mathbf{n}, \mathbf{d}$ into $\mathbf{n^1}, \mathbf{d^1}$ and obtain $\boldsymbol{\sigma^1}$ \;
\State Re-estimate functions $\mu(n)$, $\Psi(n)$, $EI(n)$, with $\Theta = \Theta(\mathbf{n^1},\mathbf{d^1},\boldsymbol{\sigma^1})$ \;
\EndWhile
\State \Return $n_*^{final} = \arg \min_{n_i \in \mathbf{n}^1} \left\{d^1_i\right\}$
\caption{Emulation OHS estimation; minimum cost improvement $\tau$ }
\label{alg:emulation}
\end{algorithmic}
\end{algorithm}

Various results on the consistency of the expected improvement algorithm have been proved, albeit in differing settings; either with noiseless observations $\mathbf{d}$~\citep{locatelli1997bayesian,vazquez2010convergence,bull2011convergence} or with noisy observations with known variance~\citep{ryzhov2016convergence}. We prove the following consistency results specifically for the setting of this work in Supplement~\ref{apx:consistency_emulation}.

\begin{theorem}
\label{thm:fixed_points_var}
If $\ell(n)$, $\boldsymbol{\tilde{\sigma}^1}$, and $m(n,\Theta)$ are almost surely bounded and $d^1_i = \ell(n^1_i) + \mathfrak{e}$ where $\mathfrak{e} \sim N\left\{0,\frac{(\tilde{\sigma}^1_i)^2}{|j:n_j=n^1_i|}\right\}$ then for every $n \in \{1,\ldots,N\}$, as the multiplicity of $n$ in $\mathbf{n}$ tends to $\infty$ we have $\mu(n) \longrightarrow \ell(n)$ and $\Psi(n) \to 0$ almost surely with respect to variation in $\mathbf{d}$.
\end{theorem}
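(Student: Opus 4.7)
The plan is to exploit the Schur-complement structure of the Gaussian-process update equations, isolating the contribution of the aggregated observation at $n$ from the contributions of all other observations. After deduplication let $i$ be the index with $n^1_i = n$, and write $\epsilon \defeq (\sigma^1_i)^2 = \bigl(\sum_{j:n_j=n}\sigma_j^{-2}\bigr)^{-1}$ for the variance of the combined observation at $n$. Let $\mu_{-i}(n)$, $\Psi_{-i}(n)$ denote the posterior mean and variance at $n$ obtained from the same update formulas but after deleting the $i$-th row and column (i.e.\ conditioning only on the other design points). Partitioning $k(\mathbf{n}^1,\mathbf{n}^1)+\textrm{diag}\{(\boldsymbol{\sigma}^1)^2\}$ with the $i$-th index last and using a block-matrix inverse, together with the facts $k(n,n^1_i)=k(n,n)$ and $k(\mathbf{n}^1_{-i},n^1_i)=k(\mathbf{n}^1_{-i},n)$, gives after a short calculation the two identities
\[
\Psi(n) \;=\; \frac{\epsilon\,\Psi_{-i}(n)}{\epsilon+\Psi_{-i}(n)}, \qquad \mu(n)\;=\;\frac{\epsilon}{\epsilon+\Psi_{-i}(n)}\,\mu_{-i}(n)\;+\;\frac{\Psi_{-i}(n)}{\epsilon+\Psi_{-i}(n)}\,d^1_i.
\]

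With these identities in hand, the theorem reduces to two sub-claims: (a) $\epsilon \to 0$ almost surely, and (b) $d^1_i \to \ell(n)$ almost surely. Given (a), together with the bound $\Psi_{-i}(n)\le k(n,n)=\sigma_u^2$, the first identity immediately gives $\Psi(n)\le\epsilon\to 0$. Given (a), (b), and the a.s.\ finiteness of $\mu_{-i}(n)$, the weights in the second identity converge to $0$ and $1$ respectively, so $\mu(n)\to d^1_i\to\ell(n)$ a.s. Thus both conclusions follow together from the same two one-dimensional facts.

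For (a), the assumed a.s.\ bound $\sigma_j\le\sigma^\ast$ gives $\epsilon \le \sigma^{\ast 2}/M$, where $M$ is the multiplicity of $n$ in $\mathbf{n}$; this tends to $0$ as $M\to\infty$. For (b), enumerate the observations placed at $n$ as $d_{j_1},\ldots,d_{j_M}$ and write $d^1_i-\ell(n)=A_M/B_M$ with $A_M=M^{-1}\sum_{k=1}^{M}(d_{j_k}-\ell(n))\sigma_{j_k}^{-2}$ and $B_M=M^{-1}\sum_{k=1}^{M}\sigma_{j_k}^{-2}$. The summands in $A_M$ are independent, zero-mean, and uniformly variance-bounded by $\sigma_{j_k}^{-2}\le\sigma_{\ast}^{-2}$ for any lower bound $\sigma_\ast$ on $\sigma_j$ (in fact a standard Kolmogorov SLLN needs only $\sum_k \textrm{Var}/k^2<\infty$, which follows from uniform boundedness of the weights), so $A_M\to 0$ a.s.; meanwhile $B_M\ge\sigma^{\ast-2}>0$, so the ratio tends to $0$ a.s., giving $d^1_i\to\ell(n)$.

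The main obstacle is ensuring $\mu_{-i}(n)$ is a.s.\ finite and independent of the multiplicity $M$. Under the natural reading of the theorem, where only the multiplicity of $n$ grows and the other design points are held fixed, the matrix $K_{--}=k(\mathbf{n}^1_{-i},\mathbf{n}^1_{-i})+\textrm{diag}\{(\boldsymbol{\sigma}^1_{-i})^2\}$ is positive definite and fixed, the observations $d^1_{-i}$ are Gaussians with bounded means and bounded variances hence almost surely finite, and $\mu_{-i}(n)$ is a fixed a.s.\ finite constant, closing the argument. If one wishes to allow multiplicities at other points to grow simultaneously, monotonicity of posterior variance under additional observations bounds $\Psi_{-i}(n)$ uniformly, and the same weighted-SLLN argument applied point-wise controls $\mu_{-i}(n)$; the reduction to (a) and (b) above is unchanged.
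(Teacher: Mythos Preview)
Your proof is correct and takes a genuinely different route from the paper's. The paper argues directly at the limit: assuming without loss of generality that $(\mathbf{n}^1)_1=n$, it verifies the linear-algebra identity
\[
k(n,\mathbf{n}^1)\bigl[k(\mathbf{n}^1,\mathbf{n}^1)+\textrm{diag}((\boldsymbol{\sigma}^1)^2)\bigr]^{-1}=(1,0,\dots,0)
\]
when $(\boldsymbol{\sigma}^1)_1^2=0$ exactly, simply by checking that $(1,0,\dots,0)$ times the matrix returns its first row $k(n,\mathbf{n}^1)$; substituting this into the update formulas gives $\mu(n)=d^1_1$ and $\Psi(n)=0$ at once. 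Your approach instead isolates the observation at $n$ via the sequential-update (Schur-complement) identities for Gaussian-process conditioning, reducing everything to the two scalar facts $\epsilon\to 0$ and $d^1_i\to\ell(n)$.

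What your decomposition buys is a cleaner treatment of the limit: you never evaluate an inverse at a possibly singular point, and the roles of the two ingredients are made explicit, which also lets you discuss what happens when other multiplicities vary. The paper's argument is shorter but tacitly relies on continuity of the inverse near $(\boldsymbol{\sigma}^1)_1^2=0$ and asserts the almost-sure convergence $d^1_1\to\ell(n)$ from $\textrm{var}(d^1_1)\to 0$ without further comment; your Kolmogorov-SLLN step fills that in. One small caveat: your SLLN bound on the variances of the summands uses a lower bound $\sigma_\ast$ on the noise standard deviations, which is not literally among the stated hypotheses (the paper only says $\boldsymbol{\sigma}$ is bounded); this is a harmless technical assumption in context, and the paper's own argument is no more careful on this point.
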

also noting the following simple result, proved in Supplement~\ref{supp_sec:proof_ei_lim}: 

\begin{theorem}
\label{thm:ei_lim}
Given the conditions of Theorem \ref{thm:fixed_points_var}, for every $n \in \{1,\ldots,N\}$, as the multiplicity of $n$ in $\mathbf{n}$ tends to $\infty$, 
\begin{equation*}
    EI(n) \longrightarrow 0 \nonumber
\end{equation*}
almost surely with respect to randomness in $\mathbf{d}$
\end{theorem}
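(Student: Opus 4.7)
The plan is to sandwich $EI(n)$ between zero and two quantities that both vanish in the limit, using Theorem~\ref{thm:fixed_points_var} together with a short argument controlling $d^-$.

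From Theorem~\ref{thm:fixed_points_var}, $\mu(n) \to \ell(n)$ and $\Psi(n) \to 0$ almost surely as the multiplicity of $n$ in $\mathbf{n}$ grows. I would handle $d^-$ separately. Since the observations $d_j$ with $n_j = n$ are independent draws from $N\{\ell(n),\sigma_j^2\}$ and the sampling variances are almost surely bounded, the inverse-variance weighted mean $d^1_{i_n}$ (where $i_n$ is the index of $n$ in $\mathbf{n}^1$) is normal with mean $\ell(n)$ and variance $1/\sum_{j:n_j=n}\sigma_j^{-2}$, which tends to zero as the multiplicity grows because bounded $\sigma_j$ forces $\sum_j \sigma_j^{-2}$ to diverge. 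A standard Borel--Cantelli or SLLN argument therefore yields $d^1_{i_n} \to \ell(n)$ almost surely. The remaining entries of $\mathbf{d}^1$ are unaffected by varying the multiplicity of $n$, so $d^- = \min_i d^1_i \to \min\{\ell(n), D\}$ almost surely, where $D$ denotes the minimum over indices $i \neq i_n$.

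The key step is the elementary inequality
\begin{equation*}
f(z) := z\Phi(z) + \phi(z) \leq z^+ + \frac{1}{\sqrt{2\pi}},
\end{equation*}
valid for all $z \in \mathbb{R}$: if $z \leq 0$ then $z\Phi(z) \leq 0$ and $\phi(z) \leq 1/\sqrt{2\pi}$, while if $z > 0$ then $z\Phi(z) \leq z$ and $\phi(z) \leq 1/\sqrt{2\pi}$. Setting $z = (d^- - \mu(n))/\sqrt{\Psi(n)}$ and multiplying through by $\sqrt{\Psi(n)} > 0$ gives
\begin{equation*}
0 \leq EI(n) = \sqrt{\Psi(n)}\,f(z) \leq \bigl(d^- - \mu(n)\bigr)^+ + \frac{\sqrt{\Psi(n)}}{\sqrt{2\pi}}.
\end{equation*}
The second term vanishes by Theorem~\ref{thm:fixed_points_var}, and by the limits above $d^- - \mu(n) \to \min\{\ell(n), D\} - \ell(n) \leq 0$ almost surely, so $(d^- - \mu(n))^+ \to 0$ almost surely. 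Hence $EI(n) \to 0$ almost surely.

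The only mild obstacle is verifying the almost sure convergence of the weighted mean $d^1_{i_n}$, which is routine given the boundedness of $\boldsymbol{\sigma}$ assumed in Theorem~\ref{thm:fixed_points_var}. The clean bound on $f(z)$ neatly sidesteps any sign-based case analysis on $d^- - \mu(n)$ that a more direct manipulation of the two summands in $EI(n)$ would require.
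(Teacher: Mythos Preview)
Your proof is correct and takes a genuinely different route from the paper. The paper argues by case analysis on the limiting sign of $d^{-}-\mu(n)$: when the limit is strictly negative it uses $\Phi\bigl((d^{-}-\mu(n))/\sqrt{\Psi(n)}\bigr)\to 0$, and when the limit is zero it uses boundedness of $\Phi,\phi$ together with $\Psi(n)\to 0$. You instead establish the uniform sandwich
\[
0 \le EI(n) \le (d^{-}-\mu(n))^{+} + \tfrac{1}{\sqrt{2\pi}}\sqrt{\Psi(n)}
\]
via the elementary inequality $z\Phi(z)+\phi(z)\le z^{+}+1/\sqrt{2\pi}$, which is clean and dispenses with the case split entirely. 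Your control of $d^{-}$ can in fact be shortened: rather than tracking the limit $\min\{\ell(n),D\}$ (which implicitly assumes the other entries of $\mathbf{d}^1$ are held fixed), you may simply note that $d^{-}\le d^1_{i_n}$ always, so $(d^{-}-\mu(n))^{+}\le (d^1_{i_n}-\mu(n))^{+}\to 0$; this matches the paper's one-line contradiction ``$d^{-}>\mu(n)$ in the limit would give $d^1_{i_n}<\min_i d^1_i$'' and avoids any assumption on the other design points. That aside, your argument is both correct and tidier than the paper's.
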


These results assert that $\mu(n)$ can eventually approximate any loss function sufficiently well given enough estimates of $\ell$ at all values of $n$. It is not obvious that this is guaranteed by algorithm~\ref{alg:emulation}, although we show that this generally does occur in the following, the proof of which is given in Supplement~\ref{supp_sec:proof_emulation_consistency}:

\begin{theorem}
\label{thm:emulation_consistency}
If $\ell(n)$, $\boldsymbol{\tilde{\sigma}^1}$, and $m(n,\Theta)$ are almost surely bounded and $d^1_i = \ell(n^1_i) + \mathfrak{e}$ where $\mathfrak{e} \sim N\left\{0,\frac{(\tilde{\sigma}^1_i)^2}{|j:n_j=n^1_i|}\right\}$ then under algorithm~\ref{alg:emulation} with $\tau=0$, the value $\mu(\tilde{n})$ converges almost surely to $\ell(\tilde{n})$ for every $\tilde{n} \in \{1,\ldots,N\}$.
\end{theorem}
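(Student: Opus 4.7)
The plan is to reduce the claim to Theorem \ref{thm:fixed_points_var}: if under Algorithm \ref{alg:emulation} with $\tau=0$ every index $n \in \{1,\ldots,N\}$ is appended to $\mathbf{n}$ infinitely often, then the multiplicity of each $n$ tends to infinity, and Theorem \ref{thm:fixed_points_var} applied to each of the finitely many indices (together with a union bound) delivers $\mu(\tilde{n}) \to \ell(\tilde{n})$ almost surely for every $\tilde{n}$. I first note that the algorithm never terminates: at every iteration there is at least one $n$ with $\Psi(n)>0$, and the $\sqrt{\Psi(n)}\phi((d^--\mu(n))/\sqrt{\Psi(n)})$ summand of $EI(n)$ is strictly positive there, so $\max_n EI(n) > 0 = \tau$. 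Hence the sequence $\mathbf{n}$ is infinite, and the substantive content is to show that each index appears in it with infinite multiplicity.

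Next I would run a sample-path contradiction argument. Suppose, on a set of positive probability, there exists an index $n_0 \in \{1,\ldots,N\}$ chosen only finitely often; let $A$ be the random set of indices with finite multiplicity and $B = \{1,\ldots,N\}\setminus A$, so $B$ is nonempty and $n_0 \in A$. Theorems \ref{thm:fixed_points_var} and \ref{thm:ei_lim} together force $EI(n) \to 0$ almost surely for every $n \in B$. Provided I can show $\liminf_i EI(n_0) > 0$, the selection rule $\tilde{n} = \arg\max_n EI(n)$ must eventually choose from $A$; but $A$ is finite, so by pigeonhole some element of $A$ would be selected infinitely often, contradicting the definition of $A$. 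Consequently $A=\emptyset$ almost surely, and the reduction in the first paragraph completes the proof.

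The hard part will therefore be the strictly positive lower bound on $\liminf_i EI(n_0)$, which I would obtain from a strictly positive lower bound on $\liminf_i \Psi(n_0)$. As multiplicities at the points of $B$ grow, the precision-combined noise variances $(\sigma^1_i)^2$ defined in equation \ref{eq:d1sigma1} collapse to zero at those points, so the posterior variance formula for $\Psi(n_0)$ converges to
\begin{equation}
k(n_0,n_0) - k(n_0,B^\star)\bigl[k(B^\star,B^\star)+D\bigr]^{-1}k(B^\star,n_0), \nonumber
\end{equation}
where $B^\star$ enumerates the distinct observed indices and $D$ is diagonal with entries zero at points of $B$ (in the limit) and, if $n_0$ has been sampled at all, strictly positive at $n_0$ (its retained finite-multiplicity precision stays bounded under the hypothesis that the $\sigma_i$ are bounded). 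Because the squared-exponential kernel is strictly positive definite, the above limit is strictly positive whether or not $n_0 \in B^\star$. Boundedness of $\ell$, $m(\cdot,\Theta)$, and $\boldsymbol{\sigma}$ then keeps $\mu(n_0)$ and $d^-$ bounded, so the argument $(d^--\mu(n_0))/\sqrt{\Psi(n_0)}$ of $\phi$ lives in a bounded interval and $\phi$ is bounded below by a positive constant there; combined with the positive lower bound on $\sqrt{\Psi(n_0)}$, this yields $\liminf_i EI(n_0)>0$ and closes the contradiction.
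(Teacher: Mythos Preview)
Your overall strategy matches the paper's exactly: reduce to Theorem~\ref{thm:fixed_points_var} by showing every index is sampled infinitely often, via a partition into finitely- and infinitely-sampled sets $A,B$ and a comparison of $EI$ on the two sets. The paper packages the comparison as a separate lemma (Lemma~\ref{lem:lim_sup_ei}) and establishes $\liminf \Psi(n_0)>0$ for $n_0\in A$ by a monotonicity argument: it proves a matrix identity (Lemma~\ref{lem:matrix_identity}) showing $\partial\Psi(n)/\partial(\sigma^1_j)^2\geq 0$, then combines $\Psi(n_0)=0$ at $(\sigma^1)_{n_0}^2=0$ with strict positivity of that derivative. Your direct appeal to strict positive definiteness of the squared-exponential kernel---the limiting $\Psi(n_0)$ is a GP posterior variance at a point that is either unobserved or observed only with strictly positive noise, hence strictly positive---is cleaner and sidesteps the matrix lemma entirely.

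There is, however, one real gap in your $EI$ lower bound. You argue that the summand $\sqrt{\Psi(n_0)}\,\phi(z)$ is bounded below by a positive constant (with $z=(d^- -\mu(n_0))/\sqrt{\Psi(n_0)}$), and conclude $\liminf EI(n_0)>0$. But the other summand $(d^- -\mu(n_0))\Phi(z)$ is negative whenever $d^-<\mu(n_0)$, so bounding one term does not suffice; the same issue appears in your non-termination argument in the first paragraph. The fix uses only ingredients you already have: write $EI(n_0)=\sqrt{\Psi(n_0)}\,g(z)$ with $g(z)=z\Phi(z)+\phi(z)$; then $g'(z)=\Phi(z)>0$ and $g(z)\to 0$ as $z\to-\infty$, so $g>0$ everywhere and is bounded below on bounded intervals. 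Since you have already shown $z$ bounded and $\sqrt{\Psi(n_0)}$ bounded below by a positive constant, $\liminf EI(n_0)>0$ follows. The paper handles this via the equivalent inequality $\phi(z)>z\Phi(-z)$.
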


We characterize the error in $n_*$ using `the number of values of $n$ for which the probability of the true cost at holdout set size $n$ is less than the estimated minimum cost exceeds $1-\alpha$', or formally:
%$S_{\alpha}= 
$\left\{n:pr_{\textbf{n}_{\pi}} \left\{ \ell(n) < \mu(n_*)\right\} \geq 1- \alpha  \right\}$, 
%\end{equation}
%
although this should not be interpreted as a credible set for $n_*$. This is implemented in our R package \texttt{OptHoldoutSize}, available on CRAN.

\subsection{Emulation of cost function with nugget term}
\label{supp_sec:cost_emulation}

Rather than explaining the variation of values in $\mathbf{d}$ corresponding to a design point in $\mathbf{n^1}$ as approximation error of a deterministic loss function, we can explain this variation as the result of not including active variables, being the data $(X,Y)$. Note that as a consequence we are now not emulating a deterministic function $\ell(n)$ as we are not generalising the loss through expectations, we are generalising the loss through omission of the data which generated $\mathbf{d}$. To clarify this distinction we replace the loss function $\ell(n)$ with the stochastic function $\mathcal{E}(n)$.

Now we may specify variation in $\mathbf{d}$ using a `nugget' term $w(n)$, following~\cite{bower2010galaxy}: 
\begin{equation}
    \mathcal{E}(n) = m(n) + u(n) + w(n) \, , \nonumber
\end{equation}
where $m(n)$ and $u(n)$ are as before but now $w(n)$ represents our nugget term, which we again specify as a Gaussian process:
\begin{equation}
    w(n) \sim \mathcal{GP}(0,\kappa(n,n')) \, , \nonumber
\end{equation}
with \begin{equation}
    \kappa(n,n') = \begin{cases}
        \kappa(n) &\mbox{if } n = n' \\
        0 & \mbox{otherwise. }
    \end{cases}
\end{equation}
Since there is less variance in risk scores fitted to larger datasets, we expect less variance in $\mathcal{E}(n)$ for larger $n$, so we specify $\kappa(n)$ as 
a monotonically decreasing function in $n$. 

The joint distribution between $\mathcal{E}(n)$ and $\mathbf{d^1}$ is now:
\begin{equation}
    \begin{bmatrix}
    \mathcal{E}(n)\\
    \mathbf{d^1}
    \end{bmatrix} \sim \mathcal{N}\left(\begin{bmatrix}
    m(n)\\
    m(\mathbf{n^1})
    \end{bmatrix},\begin{bmatrix}
    k(n,n) + \kappa(n) & k(n,\mathbf{n^1}) \\
    k(\mathbf{n^1},n) & k(\mathbf{n^1},\mathbf{n^1}) + \textrm{diag}(\kappa(\mathbf{n^1}))
    \end{bmatrix}\right) \, . \nonumber
\end{equation}

%where 
%\end{equation}
\sloppy
This then gives our Bayes linear update equations in terms of $\pi_{\mathbf{n}}=\pi(\mathcal{E}(n) | n,\mathbf{n^1},\mathbf{d^1})$ as
\begin{align}
   \mu(n) &= \mathbb{E}_{\pi_{\mathbf{n^1}}}(\mathcal{E}(n)) \nonumber \\ 
    &= m(n) +  k(n,\mathbf{n^1})[k(\mathbf{n^1},\mathbf{n^1}) + \textrm{diag}(\kappa(\mathbf{n^1}))]^{-1}(\mathbf{d^1} - m(\mathbf{n^1}))\\
    \Psi(n) &= \textrm{var}_{\pi_{\mathbf{n^1}}}(\mathcal{E}(n)) \nonumber \\
    &= k(n,n) + \kappa(n) - k(n,\mathbf{n})[k(\mathbf{n^1},\mathbf{n^1}) + \textrm{diag}(\kappa(\mathbf{n^1}))]^{-1}k(\mathbf{n^1},n) \, . \nonumber
\end{align}
Note that this differs only slightly from the emulator constructed in section~\ref{sec:emulation}, with the main difference being we now attribute uncertainty in the loss values as an inherent behaviour of our emulator and not in the procedure to obtain these loss values. As a result $\kappa(n)$ does not decrease as the multiplicity of elements of $\mathbf{n}$ increases, which represents a major disadvantage to the uncertainty representation in section~\ref{sec:emulation}. 

One may then be sceptical of the benefit of duplicating design points for this method, and whilst it is possible to use this method without duplication (i.e $\mathbf{n} = \mathbf{n^1}$), the consequence of this would be that we are heavily reliant on a singular sample to locate the minimum which could be misleading. Averaging various samples at the same design point mitigates this potential problem, as does replacing $d^-$ with $\mu^{-} = \min_i \{\mu(\mathbf{n^1}_i)\}$ as detailed in ~\cite{brochu2010tutorial}. Taking the median of samples instead of a weighted mean is more appropriate here as we are not seeking to accurately approximate an expectation, instead we only wish to avoid extreme samples misleading our search for the minimum.

\subsection{Proof of Theorem~\ref{thm:fixed_points_var}}
\label{apx:consistency_emulation}

\begin{reptheorem}{thm:fixed_points_var}
If $\ell(n)$, $\boldsymbol{\tilde{\sigma}^1}$, and $m(n,\Theta)$ are almost surely bounded and $d^1_i = \ell(n^1_i) + \mathfrak{e}$ where $\mathfrak{e} \sim N\left\{0,\frac{(\tilde{\sigma}^1_i)^2}{|j:n_j=n^1_i|}\right\}$ then for every $n \in \{1,\ldots,N\}$, as the multiplicity of $n$ in $\mathbf{n}$ tends to $\infty$ we have $\mu(n) \longrightarrow \ell(n)$ and $\Psi(n) \to 0$ almost surely with respect to variation in $\mathbf{d}$.
\end{reptheorem}

\begin{proof}
Assume W.L.O.G that $(\mathbf{n}^1)_1=n$. Since $\boldsymbol{\tilde{\sigma}^1}$ is bounded, we have (from the distributional assumption of noise in evaluations) $\textrm{var}\left((\mathbf{d}^1)_1\right)=(\boldsymbol{\sigma^1})^2_1 \to 0$, so $(\mathbf{d}^1)_1 \longrightarrow \ell(n)$ almost surely. We now prove that $k(n,\mathbf{n^1})[k(\mathbf{n^1},\mathbf{n^1}) + \textrm{diag}((\boldsymbol{\sigma^1})^2)]^{-1} = (1,0,\ldots,0)$ when $(\boldsymbol{\sigma^1})_1=0$. Now:
\begin{align}
k(n,\mathbf{n^1})[k(\mathbf{n^1},\mathbf{n^1}) + \textrm{diag}((\boldsymbol{\sigma^1})^2)]^{-1} &= (1,0,\ldots,0) \nonumber \\
\Leftrightarrow \hspace{20pt} k(n,\mathbf{n^1}) &= (1,0,\ldots,0)*[k(\mathbf{n^1},\mathbf{n^1}) + \textrm{diag}((\boldsymbol{\sigma^1})^2)] \, , \nonumber
\end{align}
and $k(n,\mathbf{n^1}) = (1,0,\ldots,0)*k(\mathbf{n^1},\mathbf{n^1} + \textrm{diag}((\boldsymbol{\sigma^1})^2)_{-1})$ is true by definition as the first row of $k(\mathbf{n^1},\mathbf{n^1}) + \textrm{diag}((\boldsymbol{\sigma^1})^2)$ is $k(n,\mathbf{n^1})$. Therefore, 
\begin{equation}
    \mu(n) = m(n,\Theta) + (1,0,\ldots,0)(\mathbf{d^1} - m(\mathbf{n^1},\Theta)) = m(n,\Theta) + d(n) - m(n,\Theta) = d(n) = \ell(n) \nonumber
\end{equation}
almost surely, and 
\begin{equation}
    \Psi(n) = k(n,n) - (1,0,\ldots,0)k(\mathbf{n^1},n) = k(n,n) - k(n,n) = 0 \nonumber
\end{equation}
in the limit.

\end{proof}

\subsection{Proof of Theorem~\ref{thm:ei_lim}}
\label{supp_sec:proof_ei_lim}

\begin{reptheorem}{thm:ei_lim}
Given the conditions of Theorem \ref{thm:fixed_points_var}, for every $n \in \{1,\ldots,N\}$, as the multiplicity of $n$ in $\mathbf{n}$ tends to $\infty$, 
\begin{equation}
    EI(n) \longrightarrow 0 \nonumber
\end{equation}
almost surely with respect to randomness in $\mathbf{d}$
\end{reptheorem}

\begin{proof}
From Theorem \ref{thm:fixed_points_var} we have that $\mu(n) \longrightarrow \ell(n) < \infty$ and $d^1_i \longrightarrow \ell(n^1_i)$, so therefore in the limit we can state $pr_{\mathbf{d}}(-\infty <d^{-} - \mu(n) \leq 0)=1$. Indeed, let $j$ be the index such that $n^1_j=n$. If in the limit $d^{-}> \mu(n) = d(n)$ then this implies that  $d^1_j < \min_i \{d^1_i\}$ which is a contradiction. Also note from Theorem \ref{thm:fixed_points_var} that $\Psi(n) \longrightarrow 0$ and that $\Phi\left(\cdot\right) \in (0,1)$, $\phi\left(\cdot\right) \in (0, (2\pi)^{-1/2}]$. As a result the following two scenarios have joint probability 1: 
\begin{itemize}
    \item $d^{-} - \mu(n) = 0$ in the limit: As $\Phi\left(\cdot\right)$, $\phi\left(\cdot\right)$ are bounded and $\Psi(n) = 0$ in the limit, we also have $EI(n) = 0$ in the limit.
    \item $\infty < d^{-} - \mu(n) < 0$ in the limit: As $\Psi(n) = 0$ in the limit, $\Phi\left(\frac{d^{-} - \mu(n)}{\sqrt{\Psi(n)}}\right) = 0$ in the limit. As $\phi\left(\cdot\right)$ is bounded we have that $EI(n) = 0$ in the limit.
\end{itemize}
which proves the corollary.

\end{proof}

\subsection{Proof of Theorem~\ref{thm:emulation_consistency}}
\label{supp_sec:proof_emulation_consistency}

\begin{reptheorem}{thm:emulation_consistency}
If $\ell(n)$, $\boldsymbol{\tilde{\sigma}^1}$, and $m(n,\Theta)$ are almost surely bounded and $d^1_i = \ell(n^1_i) + \mathfrak{e}$ where $\mathfrak{e} \sim N\left\{0,\frac{(\tilde{\sigma}^1_i)^2}{|j:n_j=n^1_i|}\right\}$ then under algorithm~\ref{alg:emulation} with $\tau=0$, the value $\mu(\tilde{n})$ converges almost surely to $\ell(\tilde{n})$ for every $\tilde{n} \in \{1,\ldots,N\}$.
\end{reptheorem}

\begin{proof}

Our overall argument is to show that algorithm~\ref{alg:emulation} leads to the multiplicity of $\tilde{n}$ in $\mathbf{n}$ tending to infinity, from which the result follows from Theorem~\ref{thm:fixed_points_var}. 

To do this, we begin with the following two lemmas, the second of which describes the limiting behaviour of $EI(n)$ according to how often $n$ occurs in $\mathbf{n}$: namely that if the multiplicity of $n$ in $\mathbf{n}$ diverges, the value of $EI(n)$ converges to 0; otherwise, it remains positive. We introduce the index $EI_{\mathbf{n}}(n)$ to indicate the dependence of $EI(n)$ on $\mathbf{n}$ and assume that the function $\ell(n)$ is fixed. For a multiset $\mathbf{n}_i$, we denote $\textrm{mult}_{\mathbf{n_i}}(n)$ as the multiplicity of $n$ in $\mathbf{n_i}$. 

\begin{lemma}\label{lem:matrix_identity}

Suppose $m \times m$ matrix $A$ is symmetric. Denote by $I^1$ the $m \times m$ matrix with $I^1_{ij}=1_{i=j=1}$. Let $x$ be a vector of length $m$ and denote by $A_x$ the matrix $A$ with its top row replaced by $x$. Then for $p$ in any interval containing 0 on which  $A + p I^1$ is invertible we have
\begin{equation}
\frac{\partial}{\partial p} \left(x^T (A+ pI^1)^{-1} x\right) = 
-\frac{\left|A_x \right|^2}
{\left|A+ pI^1\right|^2} \, .
\end{equation}
\end{lemma}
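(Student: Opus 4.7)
The plan is to exploit the rank-one nature of the perturbation $pI^1$ and then apply Cramér's rule. Writing $B(p) = A + pI^1$ and noting that $I^1 = e_1 e_1^T$ where $e_1$ is the first standard basis vector, the standard matrix-derivative identity $\partial_p B^{-1} = -B^{-1} (\partial_p B) B^{-1}$ gives
\begin{equation}
\frac{\partial}{\partial p}\left(x^T B^{-1} x\right) = -x^T B^{-1} e_1 e_1^T B^{-1} x.
\end{equation}
Symmetry of $A$ (and hence of $B$ and $B^{-1}$) then lets us recognise this as a perfect square: $x^T B^{-1} e_1 = e_1^T B^{-1} x = (B^{-1}x)_1$, so the right-hand side equals $-\bigl((B^{-1}x)_1\bigr)^2$.

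Next I would apply Cramér's rule to the linear system $B y = x$: its first coordinate is $y_1 = |B^{(1)}_x|/|B|$, where $B^{(1)}_x$ denotes $B$ with its first column replaced by $x$. Because $I^1$ affects only the $(1,1)$ entry, replacing the first column of $B$ by $x$ yields the same matrix as replacing the first column of $A$ by $x$; call that matrix $A^{(1)}_x$. Finally, symmetry of $A$ gives $(A_x)^T = A^{(1)}_x$ (transposing swaps row replacement and column replacement on a symmetric base matrix), so $|A^{(1)}_x| = |A_x|$. Substituting back yields
\begin{equation}
\frac{\partial}{\partial p}\left(x^T(A+pI^1)^{-1}x\right) = -\left(\frac{|A_x|}{|A+pI^1|}\right)^{2} = -\frac{|A_x|^2}{|A+pI^1|^2},
\end{equation}
as required. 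Validity on the stated interval is automatic because $(A+pI^1)^{-1}$ is smooth in $p$ wherever it is defined.

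The routine calculus step (differentiating the inverse) is standard, and Cramér's rule does the heavy lifting; the only mildly delicate point is the bookkeeping between row-replacement (which defines $A_x$) and column-replacement (which Cramér's rule naturally produces), resolved by the symmetry of $A$. Hence I anticipate no serious obstacle; the proof is essentially a two-line derivative followed by two invocations of elementary linear algebra.
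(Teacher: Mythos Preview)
Your proof is correct and follows essentially the same route as the paper's: both differentiate the inverse via $\partial_p B^{-1} = -B^{-1}(\partial_p B)B^{-1}$, reduce the quadratic form to $-\bigl((B^{-1}x)_1\bigr)^2$, and then identify this first coordinate with a ratio of determinants. The only cosmetic difference is that the paper phrases the last step through the adjugate (writing $B^{-1} = \operatorname{adj}(B)/|B|$ and noting that the first row of $\operatorname{adj}(A+pI^1)$ coincides with that of $\operatorname{adj}(A)$), whereas you invoke Cram\'er's rule directly and handle the $p$-independence by observing that replacing the first column of $B$ erases the only entry where $B$ and $A$ differ; these are two phrasings of the same linear-algebra fact.
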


\begin{proof}
If $M(p)$ is invertible in a neighbourhood of $p$ we have $\frac{\partial M^{-1}}{\partial p}=-M^{-1}\frac{\partial M}{\partial p} M^{-1}$, and if $M$ is symmetric with dimensions $m \times m$ and first row $M_1$, then $M I^1 M = M_1 M_1^T$. Since $(A+pI)$ and $A$ differ only in the top row, we have $\textrm{adj}(A+pI)_1=\textrm{adj}(A)_1$, where $\textrm{adj}(\cdot)$ indicates the adjugate matrix and $\cdot_1$ the top row. We now have
\begin{align}
\frac{\partial}{\partial p} \left(x^T (A+ pI^1)^{-1} x\right) &= -x^T (A+ pI^1)^{-1} \frac{\partial (A+ pI^1)}{\partial p}  (A+ pI^1)^{-1} x  \nonumber \\
&= -x^T (A+ pI^1)^{-1} I^1 (A+ pI^1)^{-1} x  \nonumber \\
&= \frac{x^T \textrm{adj}(A+ pI^1) I^1 \textrm{adj}(A+ pI^1) x }{|A+pI^1|^2}  \nonumber \\
&= -\frac{x^T \textrm{adj}(A+ pI^1)_1 \textrm{adj}(A+ pI^1)_1^T x }{|A+pI^1|^2}  \nonumber \\
&= -\frac{x^T \textrm{adj}(A)_1 \textrm{adj}(A)_1^T x }{|A+pI^1|^2}  \nonumber \\
&= -\frac{|A_x|^2}{|A+pI^1|^2}  \nonumber
\end{align}
as required.
\end{proof}

\begin{lemma}
\label{lem:lim_sup_ei}

Let $S_1$ and $S_2$ be disjoint subsets of $[N]=\{1,\dots,N\}$ with $S_1\cup S_2=[N]$. For a multiset $\mathbf{n}$ denote 
\begin{align}
q_1(\mathbf{n}) &= \max_{n \in S_1} \textrm{mult}_{\mathbf{n}}(n) \nonumber \\
q_2(\mathbf{n}) &= \min_{n \in S_2} \textrm{mult}_{\mathbf{n}}(n) \, .
\nonumber \\
\end{align}
Suppose we have infinite sequences $\mathbf{n}$, $\mathbf{d}$. Let $\mathbf{n_i}$, $\mathbf{d_i}$ denote the (multiset) first $i$ elements of each sequence, and let $\mathbf{n^1_i}$ be the unique values of $n$ in $\mathbf{n_i}$ and $\mathbf{d^1_i}$, $\boldsymbol{\tilde{\sigma}^1_i}$ be the associated mean and sample standard deviation, with $\boldsymbol{\tilde{\sigma}^1_i}$ upper bounded.  Suppose that %Let $\mathbf{n_i}$, $i=1,2,\dots$ be a sequence of multisets with $\mathbf{n_i} \subset \mathbf{n_{i+1}}$ such that
$q_1(\mathbf{n_i}) \leq m_1$ for all $i$ and $q_2(\mathbf{n_i}) \to \infty$, and the set $\left\{k_2(n,\Theta_i)= k_2\left(n,\Theta(\mathbf{n_i},\mathbf{d_i},\boldsymbol{\tilde{\sigma}_i})\right): n \in \{1,\dots, N\}, i \in \mathbb{N} \right\}
$ is almost surely asymptotically bounded. Then for sufficiently large $\sigma_u$:
\begin{equation}
\lim \sup_{i \to \infty} EI_{\mathbf{n}_i}(n) = \begin{cases} e_n>0 &\textrm{if } n \in S_1 \\ 0 &\textrm{if } n \in S_2 \end{cases}% e_n>0 \textrm{if} n \in S_1 \\ 0 \if n \in S_2 \end{cases}
\end{equation}
almost surely. % (where variation in $EI_{\mathbf{n_i}}$ arises from variation in $\mathbf{d}$).

\end{lemma}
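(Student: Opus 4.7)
The plan is to split the claim at $n \in S_2$ versus $n \in S_1$. For $n \in S_2$, the hypothesis $q_2(\mathbf{n}_i) \to \infty$ forces the multiplicity of $n$ in $\mathbf{n}_i$ to diverge, so Theorem~\ref{thm:ei_lim} directly yields $EI_{\mathbf{n}_i}(n) \to 0$ almost surely and hence $\limsup = 0$.

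The substantive case is $n \in S_1$, where the goal is to exhibit a subsequence $i_k \to \infty$ along which $EI_{\mathbf{n}_{i_k}}(n)$ is bounded below by a positive constant. The key identity is $EI(n) = \sqrt{\Psi(n)}\,g\{(d^- - \mu(n))/\sqrt{\Psi(n)}\}$ with $g(z) = z\Phi(z) + \phi(z)$; since $g$ is strictly positive, continuous, and monotonically increasing from $0$ to $\infty$ on $\mathbb{R}$, it is bounded below on every bounded interval. Hence it suffices to establish, along some subsequence, (i) $\Psi_{\mathbf{n}_{i_k}}(n)$ is bounded below by a positive constant and (ii) the standardised gap $(d^- - \mu(n))/\sqrt{\Psi(n)}$ is bounded in absolute value. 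Subsequence extraction is immediate: since each multiplicity on $S_1$ is nondecreasing in $i$ and bounded by $m_1$, the multiplicity vector on $S_1$ stabilises once $i$ exceeds some $i_0$; restrict to $i \geq i_0$. Along this tail the effective noises $(\sigma^1_j)^2$ at $S_1$ indices are fixed and bounded below by $\sigma_{\min}^2/m_1$ via \eqref{eq:d1sigma1}, while those at $S_2$ indices collapse to zero as $q_2 \to \infty$.

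For (i), Loewner monotonicity of matrix inversion gives $\Psi_{\mathbf{n}_i}(n) \geq \Psi^\star(n)$, the Gaussian-process posterior variance at $n$ computed with exact observations at the $S_2$ points and with the stabilised $S_1$ noise levels. Because the squared-exponential kernel is strictly positive definite and $n \in S_1$ is distinct from every $S_2$ point, $\Psi^\star(n) > 0$; factoring $\sigma_u^2$ out of the kernel shows $\Psi^\star(n) \sim c_n \sigma_u^2$ as $\sigma_u \to \infty$ for some $c_n > 0$, so taking $\sigma_u$ sufficiently large makes $\Psi^\star(n)$ exceed any prescribed positive threshold. For (ii), a.s.\ boundedness of $\ell$, $\boldsymbol{\sigma}$ and $m(\cdot,\Theta_i)$ forces $\mathbf{d}^1 - m(\mathbf{n}^1,\Theta_i)$ and $d^-$ to be uniformly bounded, so it remains to bound the correction term $k(n,\mathbf{n}^1)^T[K+D]^{-1}(\mathbf{d}^1 - m(\mathbf{n}^1,\Theta_i))$ in $\mu(n)$. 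Since $K + D$ converges in the tail to the strictly positive definite $K + D_\star$ (with the $S_2$ diagonal zeroed), $[K+D]^{-1}$ converges to a bounded operator; Lemma~\ref{lem:matrix_identity} provides the quantitative control by giving an explicit integrable bound $|A_x|^2/|A+pI^1|^2$ on how much the relevant quadratic form changes as each $S_2$ diagonal entry descends to zero. Thus $|d^- - \mu(n)|$ is bounded by some constant $M$, yielding $|z| \leq M/\sqrt{\Psi^\star(n)}$, which is made small by taking $\sigma_u$ sufficiently large. Combining (i) and (ii), $g(z)$ is bounded below by a positive constant along the subsequence, proving $\limsup EI > 0$.

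The main obstacle is the bounded-ness of $\mu(n)$ in the regime where the $S_2$ block of the effective-noise matrix shrinks to zero: a priori the regression weights $k(n,\mathbf{n}^1)^T[K+D]^{-1}$ could blow up as one inverts increasingly ill-conditioned submatrices, driving $\mu(n)$ above $d^-$ and sending $g(z) \sim \phi(z)/z^2$ exponentially to zero as $z \to -\infty$. Lemma~\ref{lem:matrix_identity} is essentially tailor-made for this obstruction: its closed-form derivative, integrated across the collapsing $S_2$ noise, shows the cumulative effect on the quadratic forms remains finite. The ``sufficiently large $\sigma_u$'' requirement is what makes $\Psi^\star(n)$ dominate $|d^- - \mu(n)|^2$, keeping the standardised gap bounded and the $g$ factor strictly positive.
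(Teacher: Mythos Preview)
Your plan is essentially the paper's: dispatch $n \in S_2$ via Theorem~\ref{thm:ei_lim}, and for $n \in S_1$ write $EI = \sqrt{\Psi}\,g(z)$ with $g(z)=z\Phi(z)+\phi(z)$, then show separately that $\Psi$ stays bounded away from zero and that $z = (d^- - \mu)/\sqrt{\Psi}$ stays bounded. The paper proves the positivity of $\Psi$ via Lemma~\ref{lem:matrix_identity} (monotonicity of $\Psi$ in each diagonal noise entry) where you invoke Loewner monotonicity; these are the same fact, and your ``subsequence'' is in fact the whole tail $i\geq i_0$, so you are really establishing $\liminf EI>0$, just as the paper does.

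Two imprecisions deserve attention. First, the scaling $\Psi^\star(n)\sim c_n\sigma_u^2$ with $c_n>0$ fails in the typical case $n\in\mathbf{n}^1\cap S_1$: when $n$ already appears among the design points the noiseless conditional variance at $n$ is exactly zero, so the leading $\sigma_u^2$ coefficient vanishes and one finds instead $\Psi^\star(n)\to (D_\star)_{jj}$, the stabilised $S_1$ noise, which is $O(1)$ in $\sigma_u$. This does not break (i)---the positive $S_1$ noise still lifts the rank deficiency and gives $\Psi^\star(n)>0$---but it invalidates your claim that $|z|$ is ``made small by taking $\sigma_u$ sufficiently large'' and hence your explanation of that hypothesis; fortunately only boundedness of $z$ is needed, and that follows from $\Psi^\star>0$ together with your bound on $|d^--\mu|$. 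Second, Lemma~\ref{lem:matrix_identity} governs the quadratic form $x^\top(A+pI^1)^{-1}x$, not the bilinear correction $k(n,\mathbf{n}^1)^\top[K+D]^{-1}(\mathbf{d}^1-m)$ appearing in $\mu(n)$; your earlier and correct observation that $[K+D]^{-1}\to[K+D_\star]^{-1}$ is bounded is what actually does the work there, and the lemma is not needed for that step.
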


\begin{proof}
We will in fact show that even $\lim \inf EI_{\mathbf{n_i}}(n)>0$ for $n \in S_1$, but $\lim \sup$ will suffice for our purposes. We note that
\begin{equation}
    EI_{\mathbf{n}_i}(n) > 0 \Leftrightarrow  \sqrt{\Psi_{\mathbf{n}_i}(n)}\phi\left(\frac{d^{-}_{\mathbf{n}_i} - \mu_{\mathbf{n}_i}(n)}{\sqrt{\Psi_{\mathbf{n}_i}(n)}}\right) > (\mu_{\mathbf{n}_i}(n) - d^{-}_{\mathbf{n}_i})\Phi\left(\frac{d^{-}_{\mathbf{n}_i} - \mu_{\mathbf{n}_i}(n)}{\sqrt{\Psi_{\mathbf{n}_i}(n)}}\right) \, . \label{eq:ei_expanded}
\end{equation}
We will show that for all $n$, we have 
\begin{equation}
P\left(-\infty < \lim \inf_{i \to \infty} \left(d_{\mathbf{n_i}}^- - \mu_{\mathbf{n_i}}(n)\right)\right)  %<\lim \sup_{i \to \infty} \left(d_{\mathbf{n_i}}^- - \mu_{\mathbf{n_i}}(n)\right) <0\right)
=1 \, .\label{eq:lim_inf_dmu}
\end{equation}
\sloppy
By the argument in Theorem~\ref{thm:fixed_points_var} and corollary~\ref{thm:ei_lim} we have for $n \in S_2$ that %$\Psi_{\mathbf{n_i}}(n)>0$ for all $i$, but 
$\lim_{i \to \infty} \Psi_{\mathbf{n_i}}(n)=0$, from which both sides of~\ref{eq:ei_expanded} converge to 0. For $n \in S_1$ we will show $\lim_{i \to \infty} \Psi_{\mathbf{n_i}}(n)>0$, in which case we may define
\begin{equation}
z_{\mathbf{n}_i}(n) =\frac{\mu_{\mathbf{n}_i}(n)-d^{-}_{\mathbf{n}_i}}{\sqrt{\Psi_{\mathbf{n}_i}(n)}} \, , \nonumber
\end{equation}
from which inequality~\ref{eq:ei_expanded} reduces to
\begin{equation}
\phi(z_{\mathbf{n}_i}(n)) > z_{\mathbf{n}_i}(n)\Phi(-z_{\mathbf{n}_i}(n)) \, , \nonumber
\end{equation}
which holds for all $-\infty \leq z_{\mathbf{n}_i}(n) < \infty$. Since $z_{\mathbf{n}_i}(n)$ is asymptotically bounded between positive values, the result follows.

Beginning with $d_{\mathbf{n_i}}^-$, we note that 
$d_{\mathbf{n_i}}^-$ is the minimum of 
\begin{enumerate}
\item Values of $\mathbf{d^1_i}$ corresponding to values of $\mathbf{n^1_i}$ in $S_1$; and \label{itm:n_in_s1}
\item Values of $\mathbf{d^1_i}$ corresponding to values of $\mathbf{n^1_i}$ in $S_2$ \label{itm:n_in_s2}
\end{enumerate}
For sufficiently large $s$, the sequence $\{n_j=(\mathbf{n})_j:j>s\}$ never contains any $n \in S_1$ again; hence, the minimum of item~\ref{itm:n_in_s1} is determined after finitely many $i$ and its limit is finite. Since  $\boldsymbol{\tilde{\sigma}^1_i}$ is upper-bounded, all values of $\mathbf{d^1_i}$ in item~\ref{itm:n_in_s2} converge to finite values in $\{\ell(n): n \in S_2\}$ almost surely. Hence $d_{\mathbf{n_i}}^-$ converges almost surely to a finite value. 

Since $\lim \sup_{i \to \infty}$ and $\lim \inf_{i \to \infty}$ of $m\left(n;\Theta \left(\mathbf{n_i},\mathbf{d_i},\boldsymbol{\tilde{\sigma}^1_i}\right)\right)$ are almost surely finite, all terms in $\mu(n)$ are asymptotically finite, from which equation~\ref{eq:lim_inf_dmu} follows. 

It remains to consider $\Psi_{\mathbf{n}_i}(n)$ for $n \in S_1$. Firstly take $n \in \mathbf{n^1}$ and suppose W.L.O.G that $\boldsymbol{n^1_i}_1=n$. 
Since $n \in S_1$ we have  $\lim_{i \to \infty} \textrm{mult}_{\mathbf{n_i}}(n)>0$ so $\lim_{i \to \infty} (\boldsymbol{\sigma^1_i})_1$ exists and is positive. Denoting $\boldsymbol{\sigma'}$ as $\boldsymbol{\sigma^1_i}$ with 0 substituted for the first element, we have
\begin{align}
\frac{\partial}{\partial (\boldsymbol{\sigma^1_i})^2_1} \Psi_{\mathbf{n_i}}(n) &=  \frac{\partial}{\partial (\boldsymbol{\sigma^1_i})^2_1} \left( k(n,n) - k(n,\mathbf{n^1_i})[k(\mathbf{n^1_i},\mathbf{n^1_i}) + \textrm{diag}((\boldsymbol{\sigma^1_i})^2)]^{-1}k(\mathbf{n^1_i},n) \right) \nonumber \\
&= \frac{\left|k(\mathbf{n^1_i},\mathbf{n^1_i}) + \textrm{diag}((\boldsymbol{\sigma'})^2) \right|^2}{\left|k(\mathbf{n^1_i},\mathbf{n^1_i}) + \textrm{diag}((\boldsymbol{\sigma^1_i})^2)\right|^2} \nonumber \\
&>0 \nonumber
\end{align}
by Lemma~\ref{lem:matrix_identity}; hence $\Psi_{\mathbf{n_i}}(n)$, considered as a function of $(\boldsymbol{\sigma^1_i})^2_1$, is increasing. Given that $\lim_{i \to \infty} (\boldsymbol{\sigma^1_i})_j$ is 0 for  $(\mathbf{n^1_i})_j \in S_2$ and is positive for $(\mathbf{n^1_i})_j \in S_1$, we conclude that $\lim_{i \to \infty} \Psi_{\mathbf{n_i}}(n)$ is positive when $n \in S_1$ and $n \in \mathbf{n^1}$. 

If $n \notin \mathbf{n^1}$, so $n$ never occurs in any $\mathbf{n_i}$, then we firstly note that since $k(n,n)<k(n,m)$ for any $m \neq n$, we have:
\begin{equation}
 k(n,n) - k(n,\mathbf{n^1_i})[k(\mathbf{n^1_i},\mathbf{n^1_i}))]^{-1}k(\mathbf{n^1_i},n)  > 0 \, . \nonumber
\end{equation}
This omits the term $\textrm{diag}((\boldsymbol{\sigma^1_i})^2$ from the expression for $\Psi_{\mathbf{n_i}}(n)$. However, if we denote $k'_j$ the matrix $k(\mathbf{n^1_i},\mathbf{n^1_i}) + \textrm{diag}((\boldsymbol{\sigma^1_i})^2)$ with the $j$th row replaced by $k(n,\mathbf{n^1_i})$, we have from Lemma~\ref{lem:matrix_identity}:
\begin{equation}
\frac{\partial}{\partial (\boldsymbol{\sigma^1_i})^2_j} \Psi_{\mathbf{n_i}}(n) = \frac{\left|k'_j\right|^2}{\left|k(\mathbf{n^1_i},\mathbf{n^1_i}) + \textrm{diag}((\boldsymbol{\sigma^1_i})^2)\right|^2}>0 \, , \nonumber
\end{equation}
for any element $(\boldsymbol{\sigma^1_i})^2_j$ of $(\boldsymbol{\sigma^1_i})^2$; hence $\Psi_{\mathbf{n_i}}(n)$ is increasing in any such element and its positivity follows. This completes the proof of the lemma.

\end{proof}
%TO DO:

Now suppose that some $n \in \{1, \dots, N\}$ occurs only finitely often in $\mathbf{n^1}$. Then there must be some largest set $S_1$ of such $n$, with complement $S_2=\{1, \dots, N\} \setminus S_1$. 
Since every element in $S_1$ occurs in $\mathbf{n^1}$ with finite multiplicity there must be some $j$ such that no $n \in S_1$ occurs amongst the values $\{(\mathbf{n^1})_{j+1},(\mathbf{n^1})_{j+2},\dots\}$. But from Lemma~\ref{lem:lim_sup_ei}, there will almost surely eventually be some $J>j$ for which some value in $\{EI_{\mathbf{n_J}}(n):n \in S_1\}$ exceeds all values in  $\{EI_{\mathbf{n_J}}(n):n \in S_1\}$, and hence $(\mathbf{n^1})_{J+1} \in S_1$ (as long as $\tau$ is sufficiently small), contradicting the choice of $j$. So the event that an $n \in \{1,\dots, N\}$ occurs in $\mathbf{n^1}$ with finite multiplicity has probability 0. This completes the proof.

\end{proof}

\subsection{Repetitive Expected Improvement}

Typically expected improvement algorithms specify $\tau$ as a definitive stopping criterion when evaluations of the true function are noiseless \cite{brochu2010tutorial}. However, in the presence of noise the termination of algorithm~\ref{alg:emulation} may result in the selection of a hold-out set size whose cost evaluation has high sample variance. As shown by Theorem~\ref{thm:emulation_consistency} when $\tau=0$ this premature termination does not occur and we select the optimal hold-out set with certainty, but this is not practical as this would lead to an algorithm which does not terminate in finite time.

In order to mitigate this issue, we derive a further stopping criterion after the expected improvement algorithm has terminated. Namely, we set a threshold $\mathfrak{s}$ such that for $n^1_i$, 
\begin{equation}
    d^1_i - 3\sigma^1_i > d^{-} \cup \sigma^1_i < \mathfrak{s} \, . \nonumber
\end{equation} 
This ensures we have confidence in either the value of $d^1_i$ or confidence that further evaluation of $d^1_i$ will not result in $d^{-} = d^1_i$. If any $d^1_i \in \mathbf{d}^1$ do not meet this criteria, we evaluate these points again and restart the expected improvement algorithm. This process is detailed in algorithm~\ref{alg:meta_emulation}.

\begin{algorithm}[h]
\begin{algorithmic}
\State Run algorithm~\ref{alg:emulation} \;
\State Let $\mathfrak{n} = \{n^1_i \in \mathbf{n}^1: d^1_i - 3\sigma^1_i > d^{-} \cup \sigma^1_i < \mathfrak{s}\}$ \;
\If{$\mathfrak{n} \neq \emptyset$}
\For {$\tilde{n} \in \mathfrak{n}$}
\State Estimate $d(\tilde{n}) \approx k_2(\tilde{n})$ \;
\State $\mathbf{n} \gets (\mathbf{n} \cup \tilde{n})$; $\mathbf{d} \gets (\mathbf{d} \cup d\{\tilde{n})\}$ \;
\State Coalesce $\mathbf{n}, \mathbf{d}$ into $\mathbf{n^1}, \mathbf{d^1}$ and obtain $\boldsymbol{\sigma^1}$ \;
\EndFor
\State Re-estimate functions $\mu(n)$, $\Psi(n)$, $EI(n)$, with $\Theta = \Theta(\mathbf{n^1},\mathbf{d^1},\boldsymbol{\sigma^1})$ \;
\State Return to step 1 \;
\EndIf
\State \Return $n_*^{final} = \arg \min_{n_i \in \mathbf{n}^1} \left\{d^1_i\right\}$
\caption{Repetitive emulation OHS estimation; minimum cost improvement $\tau$ }
\label{alg:meta_emulation}
\end{algorithmic}
\end{algorithm}

% }
% }

\subsection{Extensions}

Various extensions of the emulator may improve our surrogate of the loss function, for example specifying priors on the parameters $\theta,\sigma^2_u, \zeta$ and using the likelihood provided by the Gaussian process to marginalize out these parameters. An explicit approach is given in~\cite{andrianakis2011parameter}, but under linearity assumptions which do not hold in our case, so analytic tractability would be lost. If we were able to cheaply estimate the derivative of the cost function at design points, this could be incorporated into our emulator~\citep{killeya2004thinking}, enabling greater posterior accuracy around these points. Direct estimation of gradients from only estimates of $\ell(n)$ usually requires double the number of evaluations as estimation of $\ell(n)$ values, and so has the potential to become a more costly procedure than the method presented in section~\ref{sec:emulation}.

\clearpage

\section{Simulations}

\subsection{Simulation of holdout, naive updating, and no-update strategies}
\label{supp_sec:holdout_dominance_simulation}

We simulated a population of $2 \times 10^5$ samples at 50 timepoints, with ten timepoints per epoch (time between updates). We considered a risk score on 22 `visible' covariates similar to those of the ASPRE score~\cite{rolnik17b} with true risk also depending on a `latent' covariate not included in the risk score. We designated the true risk function $f_t$ as a logistic model with coefficients varying continuously as a Gaussian process of $t${. We also used a logistic regression model for fitting all risk scores}. At each time point, we computed risk scores using each method, and made interventions on the 10\% of samples with highest predicted risk by reducing values of visible and latent covariates. We defined total cost as the sum of post-intervention risk across all samples. Hold-out sets were used in the final time-point of each epoch. 

{We included an `alternative' updating strategy in which we recorded a binary indicator of whether an individual sample underwent a risk-score guided intervention (which, as per Supplement~\ref{supp_sec:natural_holdout}, is not always possible). When making a treatment decision for an individual, we set the value of this treatment indicator as a constant value.}

\subsection{Optimal holdout set size arising from a simulated example}
\label{supp_sec:sim_example}

In this section, we analyse the dynamics of a roughly realistic, binary outcome system, subject to predictions from different families of risk models. Our main aim is to demonstrate the natural emergence of an optimal holdout set size from a reasonable setting. 

We generated datasets with a population size $N=5000$ with seven standard normally distributed covariates and outcomes $Y$ under a ground-truth logistic model, either with interaction terms (i.e., non-linear) or without (linear). We considered risk scores $\rho$ derived from either logistic regression models (not including interaction terms) or random forests. We designated cost functions $C_1$, $C_2$ to have value 0 for true-negatives, 0.5 for false- or true- positives, and 1 for false-negatives. 
%    C_i(X_j) =
%\end{equation}
%

Supplementary Figure~\ref{fig:example_sim} shows simulation results using either linear or logistic prediction models and linear or non-linear underlying models for $Y\mid X$.  We can observe that an optimal holdout set size can arise naturally from standard predictive models, since empirical $k_2$ curves for both a random forest and logistic regression satisfy assumptions~\ref{item:k2_decrease} and~\ref{item:k2_2nd_der} in the main manuscript. The optimal holdout set size occurs at a value $n$ smaller than that at which $k_2(n)$ is nearly `flat', indicating that unnecessarily large training sets are suboptimal. However, since $\ell(n)$ rises only linearly as $n$ increases, it is generally less costly to slightly overestimate rather than underestimate the optimal holdout set size. Finally, the rightmost panels illustrate that the optimal holdout set size is not necessarily smaller for a more accurate model: the random forest model (non-lin $\rho$) in the non-linear underlying case (right panels) leads to uniformly lower expected costs $k_2(n)$ at all potential holdout set sizes, although the optimal holdout set size is larger.

\clearpage

\section{Optimal holdout size in ASPRE}
\label{supp_sec:pre_estimate}

\subsection{Implementation}

We implemented the complete ASPRE model as described in~\cite{rolnik17}. We simulated a population of individuals with a similar distribution of ASPRE model covariates. We computed the ASPRE scores for our simulated individuals, and found a linear transformation of these scores such that, should the scores exactly specify the probability of PRE, the expected population prevalence and sensitivity of the score would match those reported in~\cite{rolnik17b}: prevalence $\pi_{PRE}$, and sensitivity amongst 10\% highest scores: 12.3\%. We then simulated PRE incidence according to these transformed scores.

We found that a generalised linear model with logistic link performed almost as well as the ASPRE score on our simulated data, so we used this model type to estimate the learning curve in the interests of simplicity. 

To choose values $\mathbf{n}$ and $\mathbf{k_2}/\mathbf{d}$, we initially chose a set $\mathbf{n}$ of 20 random values from $[500,30000]$. For each size $n$ in $\mathbf{n}$, we took a random sample of our data of size $n$, fitted a logistic model to that sample, and estimated corresponding expected costs per individual $\mathbf{k_2}$ as above. We fitted values $\theta=\theta(\mathbf{n},\mathbf{k_2})=(a,b,c)$ parametrising $k_2$ as the maximum-likelihood estimator of $\theta$ under the model
\begin{equation}
(\mathbf{k_2})_i \sim N\left(k_2((\mathbf{n})_i,\theta),\sigma^2\right) \sim N(a (\mathbf{n})_i^{-b} + c,\sigma^2) \nonumber
\end{equation}
for a fixed values $\sigma$, noting that the estimate of $\theta$ is independent of $\sigma$. For the parametric algorithm, we then set all values of $\boldsymbol{\sigma}$ to the same value, chosen empirically as the sample variance of
\begin{equation}
\mathbf{k_2} - k_2(\mathbf{n},\theta(\mathbf{n},\mathbf{k_2}))
\end{equation} \, . \label{eq:choose_sigma}
For the emulation algorithm, we set values $\mathbf{d}$ as 
\begin{equation}
\mathbf{d}_i=k_1 (\mathbf{n})_i + (\mathbf{k_2})_i(N-(\mathbf{n})_i) \, , \nonumber
\end{equation}
transforming values $\boldsymbol{\sigma}$ correspondingly for use in the emulation algorithm. We then sequentially chose 100 additional values $\mathbf{n}$ using both algorithm~\ref{alg:parametric} and~\ref{alg:emulation}, setting $\boldsymbol{\sigma}$ as the same value found in~\ref{eq:choose_sigma}. After choosing the 120 values of $\mathbf{n}$ using algorithm~\ref{alg:parametric}, we re-estimated $\mathbf{k_2}/\mathbf{d}$ for each of these values before estimating the OHS and confidence interval to avoid any potential regression-to-the mean effects from choosing next-values-of-$n$ so as to minimise estimated confidence interval width.

Our complete pipeline is available at~\url{https://github.com/jamesliley/OptHoldoutSize_pipelines}, and a comprehensive vignette is included in our R package \texttt{OptHoldoutSize} on CRAN and at~\url{https://github.com/jamesliley/OptHoldoutSize}.

\clearpage

\section{Supplementary figures}
\label{supp_sec:supplementary_figures}

\setcounter{figure}{0}
\renewcommand\thefigure{S10.\arabic{figure}}

\begin{figure}[!ht]
    \centering
    \includegraphics[width=\textwidth]{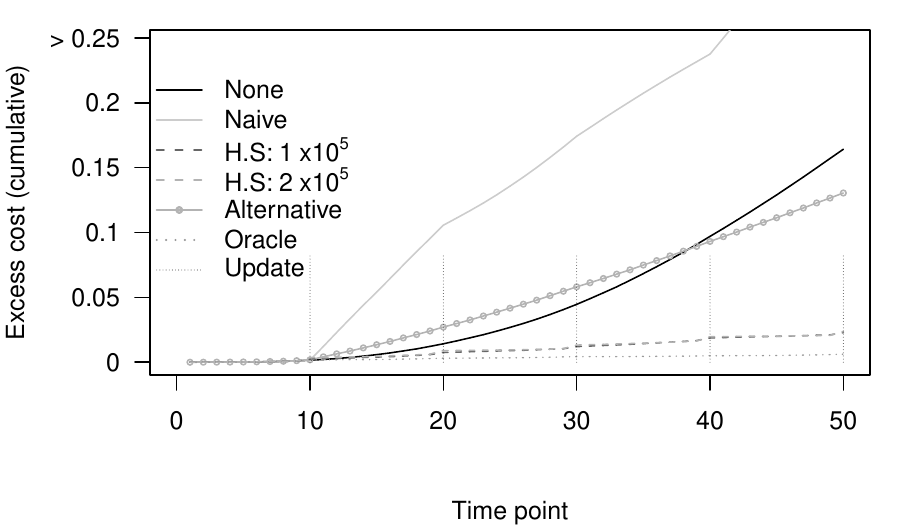}
    \caption{{Cumulative costs of updating strategies, defined analogously to figure~\ref{fig:holdout_dominance}}}
    \label{supp_fig:cumcost}
\end{figure}
\clearpage

\begin{figure}[!ht]
    \centering
    \includegraphics[width=\textwidth]{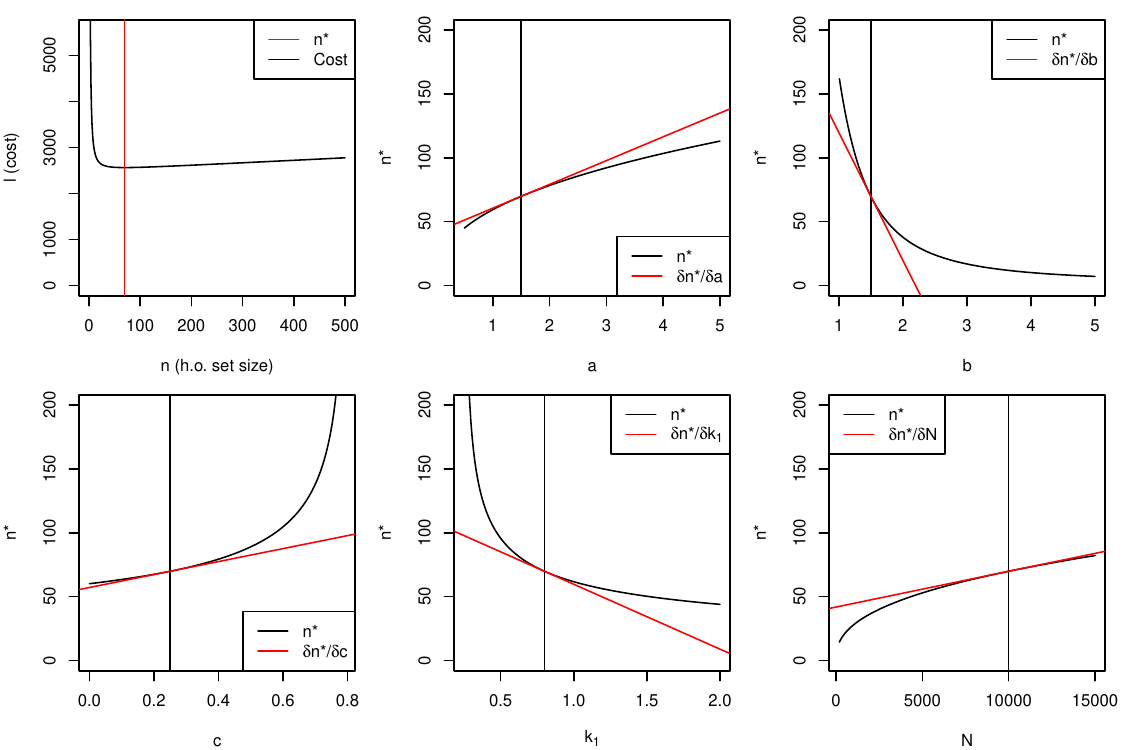}
    \caption{Dependence of optimal holdout set size on parameters of estimated learning curve ($a,b,c$, with $k_2(n;a,b,c)=a n^{-b}+c$), cost in intervention set $k_1$, and total number of samples $N$. Figures show change in optimal holdout set size $n_*$ while varying one parameter and holding others constant at $(a,b,c)=\left(\frac{3}{2},\frac{3}{2},\frac{1}{4}\right)$, $k_1=\frac{4}{5}$, $N=10^4$.}
    \label{supp_fig:partials_nstar}
\end{figure}
\clearpage

\begin{figure}[!ht]
    \centering
    \includegraphics[width=\textwidth]{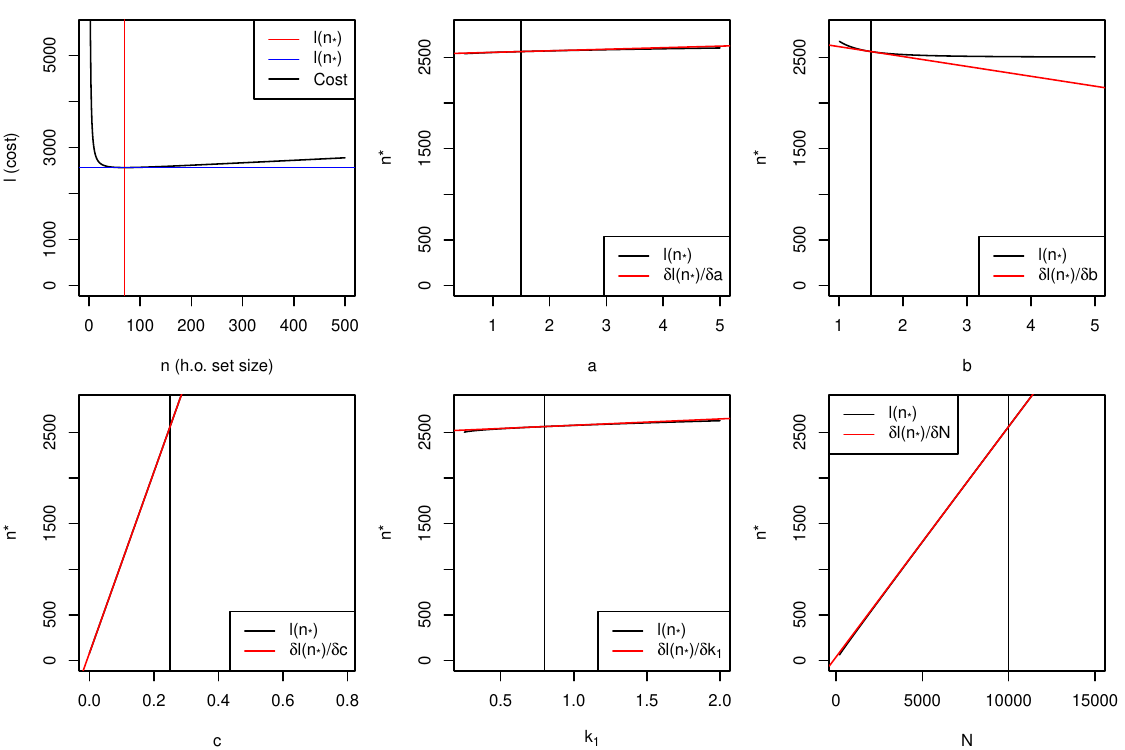}
    \caption{Dependence of minimum total cost on parameters of estimated learning curve ($a,b,c$, with $k_2(n;a,b,c)=a n^{-b}+c$), cost in intervention set $k_1$, and total number of samples $N$. Figures show change in minimal cost $\ell(n_*)$ while varying one parameter and holding others constant at $(a,b,c)=\left(\frac{3}{2},\frac{3}{2},\frac{1}{4}\right)$, $k_1=\frac{4}{5}$, $N=10^4$.}
    \label{supp_fig:partials_mincost}
\end{figure}

\clearpage

\begin{figure}[h] %[!h]
\centering
\begin{subfigure}{0.48\textwidth}
\includegraphics[width=\textwidth]{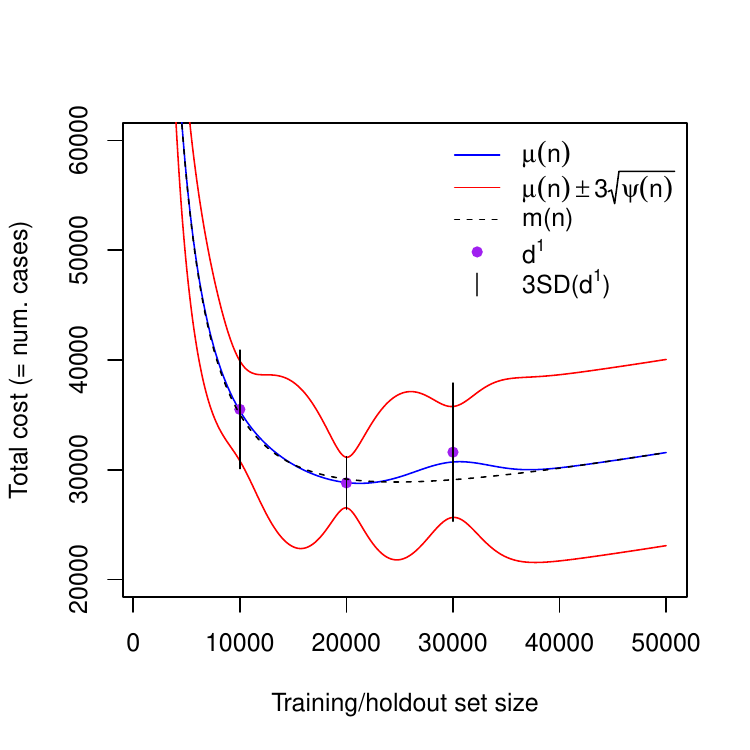}
\caption{}
\label{supp_fig:initialemulator}
\end{subfigure}
\begin{subfigure}{0.48\textwidth}
\includegraphics[width=\textwidth]{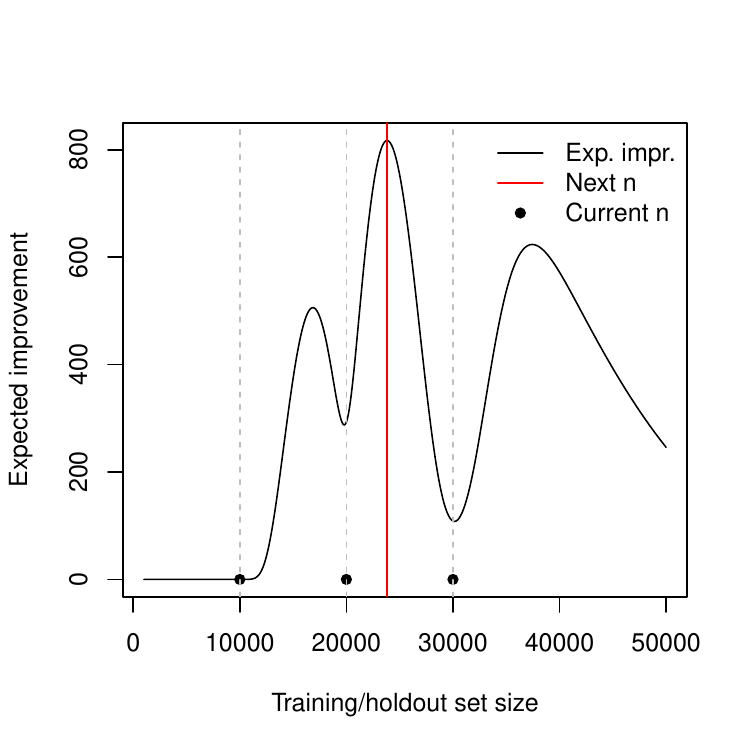}
\caption{}
\label{supp_fig:initialei}
\end{subfigure}
\caption{
Left panel shows emulator constructed using three $k_2()$ values (see pipelines). Function $m(n,\Theta)$ is constructed using $\theta$ derived from these three $k_2()$ estimates. Note reduced pointwise posterior variance at sample points. 
Rightmost panel shows expected improvement plot for the emulator constructed in panel\ref{supp_fig:initialemulator}. Note local minima at existing sample points.}
\label{supp_fig:emulator}
\end{figure}

\clearpage

\begin{figure} [h] %[h]
\centering
  \begin{subfigure}{0.3\textwidth}
   \includegraphics[width=\textwidth]{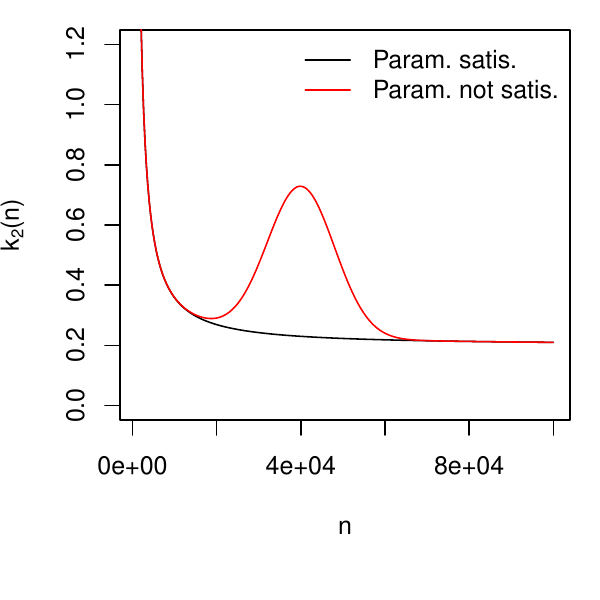}
   \caption{Versions of $k_2$}
   \label{supp_fig:k2_versions}
  \end{subfigure}
  \begin{subfigure}{0.3\textwidth}
   \includegraphics[width=\textwidth]{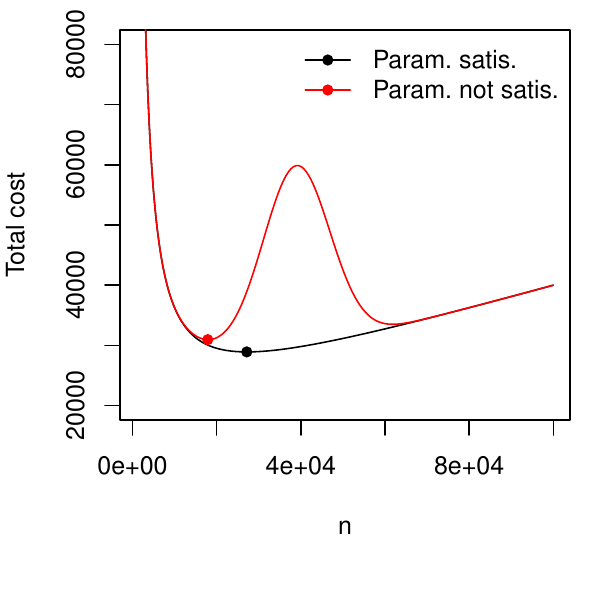}
   \caption{Costs associated with $k_2$}
   \label{supp_fig:cost_versions}
  \end{subfigure}
    \begin{subfigure}{0.3\textwidth}
   \includegraphics[width=\textwidth]{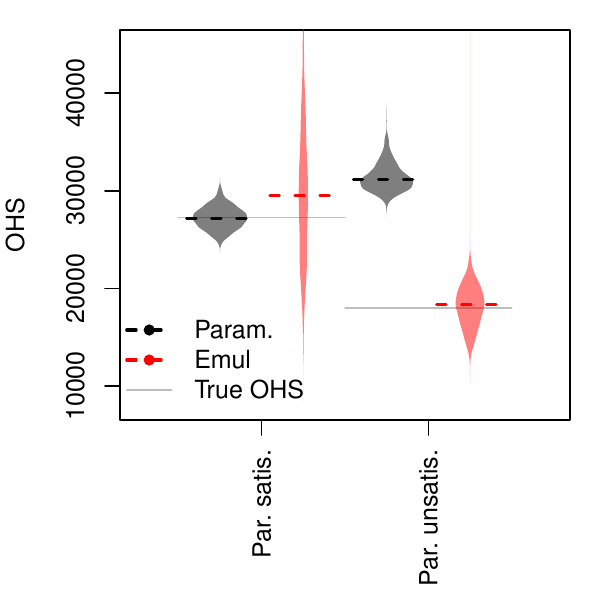}
   \caption{Distr/meds of est. optimal holdout set size}
   \label{supp_fig:param_emul_comp}
  \end{subfigure}
 \caption{Parametric and emulation algorithms with parametric assumptions satisfied or unsatisfied. OHS: optimal holdout set size}
\end{figure}
\clearpage

\begin{figure}[H] %[h]
\centering
  \begin{subfigure}{\textwidth}
  \centering
  \begin{subfigure}{0.35\textwidth}
   \includegraphics[width=\textwidth]{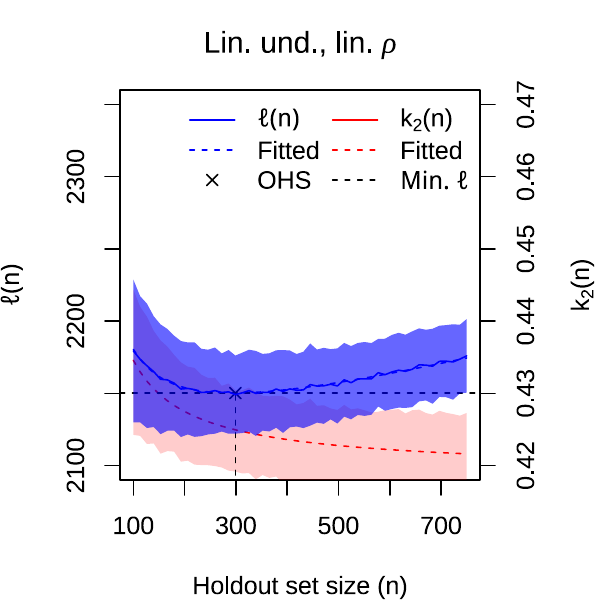}
  \end{subfigure}  
    \begin{subfigure}{0.35\textwidth}
   \includegraphics[width=\textwidth]{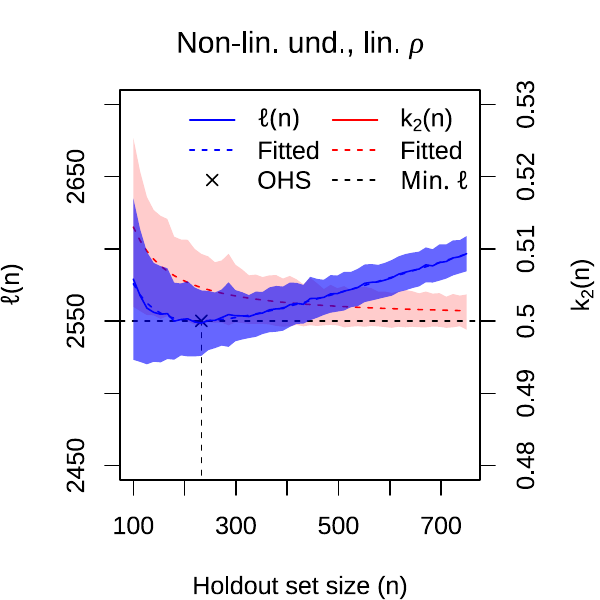}
  \end{subfigure}  
  \end{subfigure}
  \begin{subfigure}{\textwidth}
  \centering
    \begin{subfigure}{0.35\textwidth}
   \includegraphics[width=\textwidth]{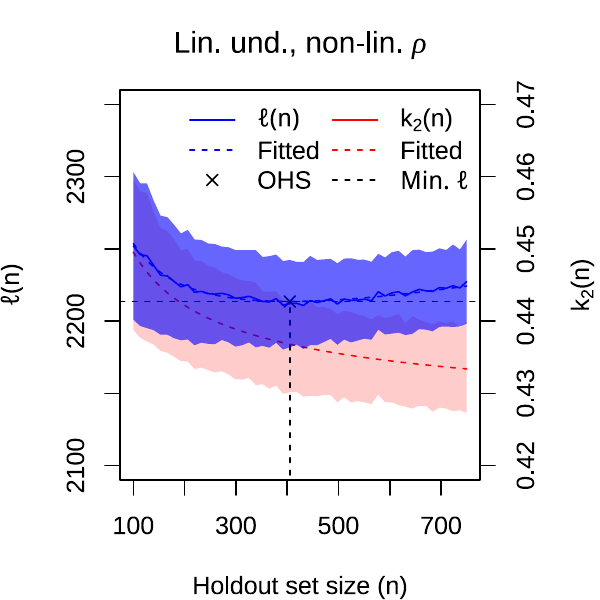}
  \end{subfigure}  
    \begin{subfigure}{0.35\textwidth}
   \includegraphics[width=\textwidth]{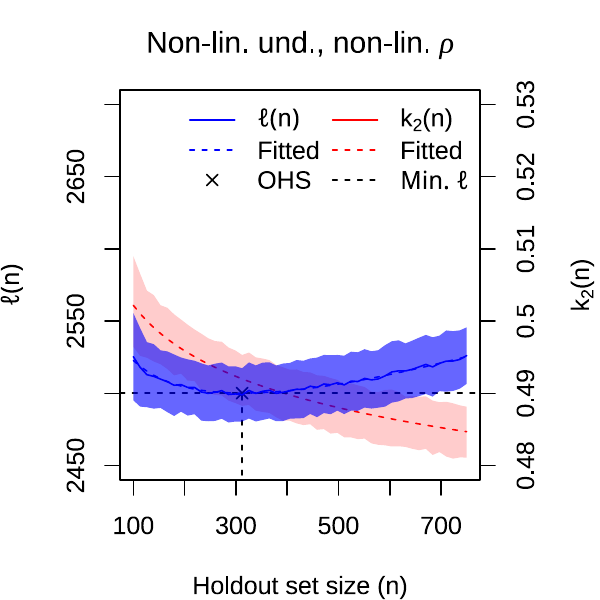}
  \end{subfigure}  
  \end{subfigure}
  
  \caption{Examples of cost functions as per Theorem~\ref{thm:ohs_exists} arising naturally from a basic risk score, with varying underlying model (und.), risk score type ($\rho$) and one point-wise standard deviation (shaded regions). The contributions of terms $k_1 n$ to $\ell(n)$ depend only on the underlying model and are the same in each column. OHS: optimal holdout set size}
  \label{fig:example_sim}
\end{figure}

\end{bibunit}

\end{document}